\newlist{inlinelist}{enumerate*}{1}
\setlist[inlinelist]{label=(\roman*)}
\DeclareMathOperator*{\argmax}{arg\,max}
\DeclareMathOperator*{\maxmin}{max/min}
\newcommand{\certset}{\mathcal{C}}
\theoremstyle{plain}
\newtheorem{theorem}{Theorem}
\newtheorem{proposition}[theorem]{Proposition}
\theoremstyle{definition}
\newtheorem{assumption}{Assumption}
\newtheorem{definition}{Definition}
\title{On the Safety of Interpretable Machine Learning:\\A Maximum Deviation Approach}
\author{%
  Dennis Wei\\
  IBM Research\\
  \texttt{dwei@us.ibm.com} \\
  \And
  Rahul Nair\\
  IBM Research\\
  \texttt{rahul.nair@ie.ibm.com}\\
  \And
  Amit Dhurandhar\\
  IBM Research\\
  \texttt{adhuran@us.ibm.com}\\
  \AND
  Kush R. Varshney\\
  IBM Research \\
  \texttt{krvarshn@us.ibm.com} \\
  \And
  Elizabeth M. Daly\\
  IBM Research\\
  \texttt{elizabeth.daly@ie.ibm.com} \\
  \And
  Moninder Singh\\
  IBM Research\\
  \texttt{moninder@us.ibm.com} \\
}
\begin{document}

\maketitle

\begin{abstract}
  Interpretable and explainable machine learning has seen a recent surge of interest. We focus on safety as a key motivation behind the surge and make the relationship between interpretability and safety more quantitative. Toward assessing safety, we introduce the concept of \emph{maximum deviation} via an optimization problem to find the largest deviation of a supervised learning model from a reference model regarded as safe. We then show how interpretability facilitates this safety assessment. For models including decision trees, generalized linear and additive models, the maximum deviation can be computed exactly and efficiently. For tree ensembles, which are not regarded as interpretable, discrete optimization techniques can still provide informative bounds. For a broader class of piecewise Lipschitz functions, we leverage the multi-armed bandit literature to show that interpretability produces tighter (regret) bounds on the maximum deviation. We present case studies, including one on mortgage approval, to illustrate our methods and the insights about models that may be obtained from deviation maximization.
\end{abstract}

\section{Introduction}
\label{sec:intro}

Interpretable and explainable machine learning (ML) has seen a recent surge of interest because it is viewed as a key pillar in making models trustworthy, with implications on fairness, reliability, and safety \citep{Varshney2019}. 
In this paper, we focus on \emph{safety} as a key reason behind the demand for explainability. 
The motivation of safety has been discussed at a qualitative level by several authors \citep{Otte2013,kushtst,doshi2017towards,Rudin2019}. Its role is perhaps clearest 
in the dichotomy between directly interpretable models vs.\ post hoc explanations of black-box models. The former have been called ``inherently safe'' \citep{kushtst} and promoted as the only alternative
in high-risk applications \citep{Rudin2019}. The crux of this argument is that post hoc explanations leave a gap between the explanation and the model producing predictions. Thus, unusual data points may appear to be harmless based on the explanation, but truly cause havoc. We aim to go beyond these qualitative arguments and 
address the following questions quantitatively: 1) What does safety mean for such models, and 2) how exactly does interpretability aid safety? 

Towards answering the first question, we make a conceptual contribution in the form of an optimization problem, intended as a tool for assessing the safety of supervised learning (i.e.\ predictive) models. Viewing these models as functions mapping an input space to an output space, a key way in which these models can cause harm is through grossly unexpected outputs, corresponding to inputs that are poorly represented in training data. 
Accordingly, we approach safety assessment for a model by determining where it deviates the most from the output of a \emph{reference model} and by how much (i.e., its \emph{maximum deviation}). The reference model, which represents expected behavior and is deemed to be safe, could be a model well-understood by domain experts or one that has been extensively ``tried and tested.'' The maximization is done over a \emph{certification set}, a large subset of the input space intended to cover all conceivable inputs to the model. 
These concepts are discussed further in Section~\ref{sec:problem}.

Towards answering the second question, 
in Section~\ref{sec:model} we discuss computation of the maximum deviation for different model classes and show how this is facilitated by interpretability. For model classes regarded as interpretable, including trees, generalized linear and additive models, the maximum deviation can be computed exactly and efficiently by exploiting the model structure. For tree ensembles, which are not regarded as interpretable, discrete optimization techniques can exploit their composition in terms of trees to provide anytime bounds on the maximum deviation. The case of trees is also generalized in a different direction to a broader class of piecewise Lipschitz functions, which we argue cover many popular interpretable functions. Here we show that the benefit of interpretability is significantly tighter regret bounds on the maximum deviation compared with general black-box functions, leveraging 
results from the multi-armed bandit literature. 
More broadly, the development of tailored methods for additional model classes is beyond the scope of this first work on the maximum deviation approach (the black-box optimization of Section~\ref{sec:model:piecewise_Lipschitz} is applicable to all models but obviously not tailored). We discuss in Appendix \ref{sec:model:post_hoc} some possible approaches, and the research gaps to be overcome, for neural networks and to make use of 
post hoc explanations, which approximate a model locally \citep{lime,unifiedPI,CEM} or globally \citep{modelcompr,distill}.

In Section~\ref{sec:expt}, we present case studies that illustrate the deviation maximization methods in Section~\ref{sec:model} for decision trees, linear and additive models, and tree ensembles. It is seen that deviation maximization provides insights about models through studying the feature combinations that lead to 
extreme outputs. These insights can in turn direct further investigation and invite domain expert input. We also quantify 
how the maximum deviation depends on model complexity and the size of the certification set. For tree ensembles, we find that the obtained upper bounds on the maximum deviation are informative, showing that the maximum deviation does not increase with the number of trees in the ensemble. 

Overall, our discussion provides a more quantitative basis for safety assessment of predictive models and 
for preferring more interpretable models due to the greater ease of performing this assessment.

\section{Assessing Safety Through Maximum Deviation}
\label{sec:problem}

We are given a supervised learning model $f$, which is a function mapping an input feature space $\mathcal{X}$ to an output space $\mathcal{Y}$. We wish to assess the safety of this model by finding its largest deviation from a given reference model $f_0: \mathcal{X} \mapsto \mathcal{Y}$ representing expected behavior. To do this, we additionally require 1) a measure of deviation $D: \mathcal{Y} \times \mathcal{Y} \mapsto \mathbb{R}_+$, where $\mathbb{R}_+$ is the set of non-negative reals, and 2) a certification set $\certset \subseteq \mathcal{X}$ over which the deviation is maximized. Then the problem to be solved is 
\begin{equation}\label{eqn:maxDev}
    \max_{x\in\certset} \; D(f(x), f_0(x)).
\end{equation}
The deviation is worst-case because the maximization is over all $x \in \certset$; further implications of this are discussed in Appendix~\ref{sec:discuss}. {Note that \eqref{eqn:maxDev} is different than typical robust \emph{training} 
where the focus is to learn a model that minimizes some worst case loss, as opposed to finding regions in $\mathcal{X}$ where two already trained models differ significantly.} 

We view problem~\eqref{eqn:maxDev} as only a \emph{means} toward the goal of evaluating safety. 
In particular, a large deviation value is not necessarily indicative of a safety risk, as two models may differ significantly for valid reasons. For example, one model may capture a useful pattern that the other does not. We thus think that it would be overly simplistic to regard the maximum deviation as just another metric to be optimized in selecting models. What large deviation values do indicate, however, is a (possibly) sufficient reason for further investigation. Hence, the maximizing solutions in \eqref{eqn:maxDev} (i.e., the $\argmax$) are of as much operational 
interest as the maximum values (this will be illustrated in Section~\ref{sec:expt}).

We now elaborate on elements in problem formulation \eqref{eqn:maxDev}.

\textbf{Output space $\mathcal{Y}$.} In the case of regression, $\mathcal{Y}$ is the set of reals $\mathbb{R}$ or an interval thereof. In the case of binary classification, while $\mathcal{Y}$ could be $\{0, 1\}$ or $\{-1, +1\}$, these limit the possible deviations to binary values as well (``same'' or ``different''). Thus to provide more informative results, we take $\mathcal{Y}$ to be the space of real-valued scores that are thresholded to produce a binary label. For example, $y$ could be a predicted probability in $[0, 1]$ or a log-odds ratio in $\mathbb{R}$. Similarly for multi-class classification with $M$ classes, $\mathcal{Y} \subset \mathbb{R}^M$ could be a $M$-dimensional space of real-valued scores. {In Appendix~\ref{sec:problem:add}, we discuss considerations in choosing the deviation function $D$ as well as models that abstain.} 

\textbf{Reference model $f_0$.} The premise of the reference model is that it should capture expected behavior while being ``safe''. 
The simplest case 
is for $f_0$ to be a constant function representing a baseline value, for example zero or a mean prediction. We consider the more general case where $f_0$ may vary with $x$. 
Below we give several examples of reference models to address the natural question of how they might be obtained. The examples can be categorized as 1) existing domain-specific models, 2) interpretable ML models validated by domain knowledge, and 3) extensively tested and deployed models. The first two categories are prevalent in high-stakes domains where interpretability is critical.
\begin{enumerate}
    \item \textbf{Existing domain-specific models:} These models originate from an application domain and may not be based on ML at all. For example in consumer finance, several industry-standard models compute credit scores from a consumer's credit information (the FICO score is the best-known in the US). Similarly in medicine, scoring systems (sparse linear models with small integer coefficients) abound for assessing various risks (the CHADS$_2$ score for stroke risk is well-known, see the "Scoring Systems: Applications and Prior Art" section of \cite{rudin2018optimized} for a list of others). These models have been used for decades by thousands of practitioners so they are well understood. They may very well be improved upon by a more ML-based model, but for such a model to gain acceptance with domain experts, any large deviations from existing models need to be examined and understood.
    \item \textbf{Interpretable models validated by domain knowledge:} Here, an interpretable ML model is learned from data and is validated by domain experts in some way, for example by selecting important input features or by carefully inspecting the trained model. We provide two real examples: In semiconductor manufacturing, process engineers typically want decision trees \cite{ProfWeight} to model their respective manufacturing process (e.g.~etching, polishing, rapid thermal processing, etc.) since they are comfortable understanding and explaining them to their superiors, which is critical especially when things go out-of-spec. Hence, a tree built from data (or any model in general) would only be allowed to make automated measurement predictions if the features it highlights (viz.~pressures, gas flows, temperatures) make sense for the specific process. Similarly, in predicting failures of industrial assets such as wind turbines, some failure data is available to train models but experts in these systems (e.g.~engineers) may also be consulted. They have knowledge that can help validate the model, for example which components are most likely to cause failures or which environmental variables (e.g.~temperature) are most influential.
    \item \textbf{Extensively tested and deployed models:} A reference model may also be one that is not necessarily informed by domain knowledge but has been extensively tested, deployed, and/or approved by a regulator. For medical devices that use ML models, the US Food and Drug Administration (FDA) has instituted a risk-based regulatory system. Any system updates or changes, for instance changes in model architecture, retraining based on new data, or changes in intended use (e.g. use for pediatric cases for devices approved only for adults), need to either seek new approvals or demonstrate ``substantial equivalence'' by providing supporting evidence that the revised model is similar to a previously approved device. In the latter case, the reference model is the approved device and small maximum deviation serves as evidence of equivalence. As another example, consider a ML-based recommendation model for products of an online retailer or articles on a social network, where because of the scale, a tree ensemble may be used for its fast inference time as well as its modeling flexibility \cite{ilic2017evaluating}. In this case, a model that has been deployed for some time could be the reference model, since it has been extensively tested during this time even though human validation of it may be limited. When a new version of the model is trained on newer data or improved in some fashion, finding its maximum deviation from the reference model can serve as one safety check before deploying it in place of the reference model.
\end{enumerate}

\textbf{Certification set $\certset$.} The premise of the certification set is that it contains all inputs that the model might conceivably be exposed to. This may include inputs that are highly improbable but not physically or logically impossible (for example, a severely hypothermic human body temperature of 27\textdegree C). Thus, while $\certset$ might be based on the support set of a probability distribution or data sample, it does not depend on the likelihood of points within the support. The set $\certset$ may also be a strict superset of the training data domain. For example, a model may have been trained on data for males, and we would now like to determine its worst-case behavior on an unseen population of females.

For tabular or lower-dimensional data, $\certset$ might be the entire input space $\mathcal{X}$. For non-tabular or higher-dimensional data, the choice $\certset = \mathcal{X}$ may be too unrepresentative 
because the manifold of realistic inputs is lower in dimension. In this case, if we have a dataset $\{x_i\}_{i=1}^n$, one possibility is to use a union of $\ell_p$ balls centered at $x_i$,
\begin{equation}\label{eqn:unionBalls}
\certset = \bigcup_{i=1}^n \mathcal{B}_r^p[x_i], \qquad \mathcal{B}_r^p[x_i] = \{x \in \mathcal{X} : \lVert x - x_i \rVert_p \leq r\}.
\end{equation}
The set $\certset$ is thus comprised of points somewhat close to the $n$ observed examples $x_i$, but the radius $r$ does not have to be ``small''.

In addition to determining the maximum deviation over the entire set $\certset$, maximum deviations over subsets of $\certset$ (e.g., different age groups) may also be of interest. For example, Appendix~\ref{sec:expt_add:Adult} shows deviation values separately for leaves of a decision tree, which partition the input space.

\section{Related Work}

In previous work on safety and interpretability in ML, 
the authors of \citep{kushtst,MohseniYXYWW2021} give qualitative accounts suggesting that directly interpretable models are an inherently safe design because humans can inspect them to find spurious elements; in this paper, we attempt to make those qualitative suggestions more quantitative and automate some of the human inspection.
Furthermore, several other authors have highlighted safety as a goal for interpretability \citep{Otte2013,doshi2017towards,TomsettBHPC2018,gilpin2018explaining,Rudin2019}, but again without quantitative development. Moreover, the lack of consensus on how to measure interpretability motivates the relationship that we explore between interpretability and the ease of evaluating safety. 

In the area of ML verification, robustness certification methods aim to provide guarantees that the classification remains constant within a radius $\epsilon$ of an input point, while output reachability is concerned with characterizing the set of outputs corresponding to a region of inputs \citep{huang2020survey}.  
A major difference in our work is that we consider \emph{two} models, a model $f$ to be assessed and a reference $f_0$, whereas the above notions of robustness and reachability involve a single model. 
Another important difference is that our focus is \emph{global}, over a comprehensive set $\certset$, rather than local to small neighborhoods around input points; a local focus is especially true of neural network verification \cite{ehlers2017formal,weng2018towards,wong2018provable,dvijotham2018dual,singh2018fast,raghunathan2018semidefinite,katz2019marabou,tjeng2019evaluating,anderson2020strong,dathathri2020enabling}. We also study the role of interpretability in safety verification. 
Works in robust optimization applied to ML minimize the worst-case probability of error, but this worst case is over parameters of $f$ rather than values of $x$ \citep{lanckriet2002robust}. 
\citet{thomas2019preventing} present a framework where during model training, a set of safety tests is specified by the model designer in order to accept or reject the possible solution. 

We build on related literature on 
robustness and explainability that deals specifically with tree ensembles. Mixed-integer programming (MIP) and discrete optimization have been proposed to find the smallest input perturbation to `evade' a classifier \citep{kantchelian2016evasion} 
and to obtain counterfactual explanations \citep{parmentier2021optimal}. MIP approaches are computationally intensive however. To address this \citet{chen2019robustness} introduce graph based approaches for verification on trees. Their central idea, which we use, is to discretize verification computations onto a graph constructed from the way leaves intersect. The verification problem is transformed to finding all maximum cliques. \citet{devos2021versatile} expand on this idea by providing anytime bounds by probing unexplored nodes.

Safety has become a critical issue in reinforcement learning (RL) with multiple works focusing on making RL policies safe \citep{AmodeiOSCSM2016,vRL,iRL,wRL}. There are two broad themes \citep{RLsurvey}: (i) a safe and verifiable policy is learned at the outset by enforcing certain constraints, and (ii) post hoc methods are used to identify bad regimes or failure points of an existing policy. Our current proposal is complementary to these works as we focus on the supervised learning setup viewed from the lens of interpretability. Nonetheless, ramifications of our work in the RL context are briefly discussed in Appendix \ref{sec:discuss}.

\section{Deviation Maximization for Specific Model Classes}
\label{sec:model}

In this section, we discuss approaches to computing the maximum deviation \eqref{eqn:maxDev} 
for $f$ belonging to various model classes. We show the benefit of interpretable model structure in different guises. Exact and efficient computation is possible for decision trees, and generalized linear and additive models in Sections~\ref{sec:model:tree} and \ref{sec:model:linear_additive}. In Section~\ref{sec:model:ensemble}, the composition of tree ensembles in terms of trees allows discrete optimization methods to provide anytime bounds. For a general class of piecewise Lipschitz functions in Section~\ref{sec:model:piecewise_Lipschitz}, the application of multi-arm bandit results yields tighter regret bounds on the maximum deviation. {While some of the results in this section 
may be less surprising, one of our contributions is to identify precise properties that allow them to hold. We also show that intuitive measures of model complexity, such as the number of leaves or pieces or smoothness of functions, have an additional interpretation 
in terms of the complexity of maximizing deviation.} More broadly, the development of methods specific to additional model classes is beyond the scope of a single work. We discuss in Appendix~\ref{sec:model:post_hoc} possible approaches and the advances needed for neural networks (beyond applying the black-box methods of Section~\ref{sec:model:piecewise_Lipschitz}) and to make use of post hoc explanations. 

To develop mathematical results and efficient algorithms, we will sometimes assume that the reference model $f_0$ is from the same class as $f$. 
In Appendix \ref{sec:discuss}, we discuss the case where 
$f_0$ may not be globally interpretable, but may be so in local regions. We will also sometimes assume that the certification set $\certset$ and other sets are Cartesian products. This means that $\certset = \prod_{j=1}^d \certset_j$, where for a continuous feature $j$, $\certset_j = [\underline{X}_j, \overline{X}_j]$ is an interval, and for a categorical feature $j$, $\certset_j$ is a set of categories. {We mention relaxations of the Cartesian product assumption in Appendix~\ref{sec:model:linear_additive_add}.}

\subsection{Trees}
\label{sec:model:tree}

We begin with the case where $f$ and $f_0$ are both decision trees. 
A decision tree with $L$ leaves partitions the input space $\mathcal{X}$ into $L$ corresponding parts, which we also refer to as `leaves'. We consider only non-oblique trees. In this case, each leaf is described by a conjunction of conditions on individual features and is therefore a Cartesian product as defined above. 
With $\mathcal{L}_l \subset \mathcal{X}$ denoting the $l$th leaf and $y_l \in \mathcal{Y}$ the output value assigned to it, tree $f$ is described by the function 
\begin{equation}\label{eqn:ftree}
f(x) = y_l \quad \text{if } x \in \mathcal{L}_l, \quad l = 1,\dots,L,
\end{equation}
and similarly for tree $f_0$ with leaves $\mathcal{L}_{0m}$ and outputs $y_{0m}$, $m = 1,\dots, L_0$. As discussed in \citep{yang2017scalable,angelino2018learning}, \emph{rule lists} where each rule is a condition on a single feature are one-sided trees in the above sense.

The partitioning of $\mathcal{X}$ by decision trees and their piecewise-constant nature simplify the computation of the maximum deviation \eqref{eqn:maxDev}. 
Specifically, the maximization can be restricted to pairs of leaves $(l, m)$ for which the intersection $\mathcal{L}_l \cap \mathcal{L}_{0m} \cap \certset$ is non-empty. The intersection of two leaves $\mathcal{L}_l \cap \mathcal{L}_{0m}$ is another Cartesian product, and we assume that it is tractable to determine whether $\certset$ intersects a given Cartesian product (see examples in Appendix~\ref{sec:model:tree_add}). 

For visual representation and later use in Section~\ref{sec:model:ensemble}, it is useful to define a bipartite graph, with $L$ nodes representing the leaves $\mathcal{L}_l$ of $f$ on one side and $L_0$ nodes representing the leaves $\mathcal{L}_{0m}$ of $f_0$ on the other. Define the edge set $\mathcal{E} = \{(l, m) : \mathcal{L}_l \cap \mathcal{L}_{0m} \cap \certset \neq \emptyset\}$; clearly $\lvert \mathcal{E} \rvert \leq L_0 L$. 
Then 
\begin{equation}\label{eqn:maxDevTree}
\max_{x\in\certset} \; D(f(x), f_0(x)) = \max_{(l,m) \in \mathcal{E}} D(y_l, y_{0m}).
\end{equation}

We summarize the complexity of deviation maximization for decision trees as follows. {This is a slight refinement of \cite[Thm.~1]{chen2019robustness} in the case $K=2$, see Appendix~\ref{sec:model:tree_add} for details.}
\begin{proposition}\label{prop:tree}
Let $f$ and $f_0$ be decision trees as in \eqref{eqn:ftree} with $L$ and $L_0$ leaves respectively, 
and $\mathcal{E}$ be the bipartite edge set of leaf intersections defined above. Then the maximum deviation \eqref{eqn:maxDev} can be computed with $\lvert \mathcal{E} \rvert$ evaluations as shown in \eqref{eqn:maxDevTree}.
\end{proposition}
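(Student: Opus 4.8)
The plan is to exploit the piecewise-constant structure of both trees to collapse the continuous optimization \eqref{eqn:maxDev} into a finite enumeration. First I would note that the leaves $\{\mathcal{L}_l\}_{l=1}^L$ form a partition of $\mathcal{X}$, and likewise $\{\mathcal{L}_{0m}\}_{m=1}^{L_0}$; hence their common refinement $\{\mathcal{L}_l \cap \mathcal{L}_{0m}\}_{l,m}$ also partitions $\mathcal{X}$, so every $x \in \certset$ lies in exactly one cell $\mathcal{L}_l \cap \mathcal{L}_{0m}$. On such a cell, \eqref{eqn:ftree} gives $f(x) = y_l$ and $f_0(x) = y_{0m}$, so $D(f(x), f_0(x)) = D(y_l, y_{0m})$ is constant there.

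Next I would use this to rewrite the objective. Writing $\certset = \bigcup_{(l,m)} (\mathcal{L}_l \cap \mathcal{L}_{0m} \cap \certset)$ and maximizing cell by cell yields
\[
\max_{x \in \certset} D(f(x), f_0(x)) = \max_{(l,m):\, \mathcal{L}_l \cap \mathcal{L}_{0m} \cap \certset \neq \emptyset} D(y_l, y_{0m}),
\]
since cells with empty intersection contribute nothing to the maximum and may be dropped. By definition of $\mathcal{E}$, the index set on the right is exactly $\mathcal{E}$, which establishes \eqref{eqn:maxDevTree}. Counting operations, the right-hand side requires one evaluation of $D$ for each $(l,m) \in \mathcal{E}$, i.e.\ $\lvert \mathcal{E} \rvert$ evaluations; the bound $\lvert \mathcal{E} \rvert \le L_0 L$ is immediate from $\mathcal{E} \subseteq \{1,\dots,L\} \times \{1,\dots,L_0\}$.

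The only subtlety worth flagging is the formation of $\mathcal{E}$ itself, i.e.\ deciding which leaf-pair intersections $\mathcal{L}_l \cap \mathcal{L}_{0m}$ meet $\certset$. Since the trees are non-oblique, each leaf is a Cartesian product, so $\mathcal{L}_l \cap \mathcal{L}_{0m}$ is again a Cartesian product formed by intersecting per-feature conditions; under the paper's standing assumption that testing whether $\certset$ meets a given Cartesian product is tractable (with concrete instances in Appendix~\ref{sec:model:tree_add}), each such test is cheap. I do not expect any genuine obstacle beyond this bookkeeping — the content of the proposition is precisely that the partition-plus-constant structure reduces \eqref{eqn:maxDev} to $\lvert \mathcal{E} \rvert$ elementary evaluations of $D$.
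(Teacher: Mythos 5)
Your argument is correct and follows the same route as the paper: the common refinement of the two leaf partitions makes $D(f(x),f_0(x))$ constant on each cell, so the maximization over $\certset$ reduces to enumerating the non-empty intersections in $\mathcal{E}$, exactly as in the derivation of \eqref{eqn:maxDevTree} preceding the proposition. Your remark on forming $\mathcal{E}$ via tractable Cartesian-product intersection tests also matches the paper's standing assumption, so nothing is missing.
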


\subsection{Linear and additive models}
\label{sec:model:linear_additive}

In this subsection, we assume that $f$ is a generalized additive model (GAM) given by 
\begin{equation}\label{eqn:GAM}
    f(x) = g^{-1}\left(\sum_{j=1}^d f_j(x_j) \right),
\end{equation}
where each $f_j$ is an arbitrary function of feature $x_j$. In the case where $f_j(x_j) = w_j x_j$ for all continuous features $x_j$, where $w_j$ is a real coefficient, \eqref{eqn:GAM} is a generalized linear model (GLM). We discuss the treatment of categorical features in Appendix~\ref{sec:model:linear_additive_add}. The invertible link function $g: \mathbb{R} \mapsto \mathbb{R}$ is furthermore assumed to be monotonically increasing. This assumption 
is satisfied by common GAM link functions: identity, logit ($g(y) = \log(y / (1-y))$), and logarithmic. 

Equation~\eqref{eqn:GAM} implies that $\mathcal{Y} \subset \mathbb{R}$ and the deviation $D(y, y_0)$ is a function of two scalars $y$ and $y_0$. For this scalar case, we make the following intuitively reasonable assumption throughout the subsection.
\begin{assumption}\label{ass:monotone}
For $y, y_0 \in \mathcal{Y} \subseteq \mathbb{R}$,
    1) $D(y, y_0) = 0$ whenever $y = y_0$;
    2) $D(y, y_0)$ is monotonically non-decreasing in $y$ for $y \geq y_0$ and non-increasing in $y$ for $y \leq y_0$.
\end{assumption}

Our approach 
is to exploit the additive form of \eqref{eqn:GAM} by reducing problem~\eqref{eqn:maxDev} to the optimization 
\begin{equation}\label{eqn:maxAdditive}
    M_{\pm}(f, \mathcal{S}) = \maxmin_{x \in \mathcal{S}} \; \sum_{j=1}^d f_j(x_j), 
\end{equation}
for different choices of $\mathcal{S} \subset \mathcal{X}$ and where $+$ corresponds to $\max$ and $-$ to $\min$. 
We discuss below how 
this can be done for two types of reference model $f_0$: decision tree (which includes the constant case $L_0 = 1$) and additive. For the first case, we prove the following result in Appendix~\ref{sec:model:linear_additive_add}: 
\begin{proposition}\label{prop:additive}
Let $f$ be a GAM as in \eqref{eqn:GAM} and $\mathcal{S}$ be a subset of $\mathcal{X}$ where $f_0(x) \equiv y_0$ is constant. Then if Assumption~\ref{ass:monotone} holds, 
\[
\max_{x\in\mathcal{S}} \; D(f(x), f_0(x)) = \max_{\sigma\in\{+,-\}} D\left( g^{-1}(M_\sigma(f, \mathcal{S})), y_0\right).
\]
\end{proposition}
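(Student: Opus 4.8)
The plan is to reduce the problem to a one-dimensional structural fact about the composite map $t \mapsto D(g^{-1}(t), y_0)$. First I would use the hypothesis that $f_0 \equiv y_0$ on $\mathcal{S}$ to rewrite the left-hand side as $\max_{x \in \mathcal{S}} D(f(x), y_0)$. Writing $s(x) = \sum_{j=1}^d f_j(x_j)$ for the pre-link score, so that $f(x) = g^{-1}(s(x))$, and setting $h(t) := D\bigl(g^{-1}(t), y_0\bigr)$, the objective becomes $\max_{x \in \mathcal{S}} h(s(x))$. By the definition of $M_\pm$ in \eqref{eqn:maxAdditive}, every $x \in \mathcal{S}$ satisfies $M_-(f,\mathcal{S}) \le s(x) \le M_+(f,\mathcal{S})$, so it suffices to understand the behavior of $h$ on the interval $[M_-(f,\mathcal{S}), M_+(f,\mathcal{S})]$.

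The key step is to show that $h$ is quasiconvex, i.e.\ ``valley-shaped'': there is a threshold $t^\star$ such that $h$ is non-increasing on $(-\infty, t^\star]$ and non-decreasing on $[t^\star, \infty)$, each intersected with the domain of $g^{-1}$. Since $g$ is monotonically increasing, so is $g^{-1}$; hence with $t^\star := g(y_0)$ we have $g^{-1}(t) \ge y_0$ for $t \ge t^\star$ and $g^{-1}(t) \le y_0$ for $t \le t^\star$. On $\{t : g^{-1}(t) \ge y_0\}$, part~2) of Assumption~\ref{ass:monotone} gives that $D(\cdot, y_0)$ is non-decreasing, and composing with the increasing $g^{-1}$ keeps $h$ non-decreasing; symmetrically, $h$ is non-increasing on $\{t : g^{-1}(t) \le y_0\}$. (When $y_0$ lies outside the range of $g^{-1}$, one of these sets is empty and $h$ is monotone throughout $[M_-, M_+]$, still a degenerate instance of this shape.) A quasiconvex function on an interval attains its maximum at an endpoint, so $h(s(x)) \le \max\bigl\{h(M_-(f,\mathcal{S})), h(M_+(f,\mathcal{S}))\bigr\}$ for every $x \in \mathcal{S}$, which is exactly the right-hand side since $h(M_\sigma) = D\bigl(g^{-1}(M_\sigma(f,\mathcal{S})), y_0\bigr)$.

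For the reverse inequality I would use that the extrema defining $M_+$ and $M_-$ are attained, as the notation $\maxmin$ in \eqref{eqn:maxAdditive} indicates: picking $x_+, x_- \in \mathcal{S}$ with $s(x_\pm) = M_\pm(f,\mathcal{S})$ gives $\max_{x \in \mathcal{S}} D(f(x), y_0) \ge \max\{h(M_+), h(M_-)\}$. Combining the two inequalities yields the claimed equality, and incidentally shows that over a set where $f_0$ is constant the maximum deviation costs only the two evaluations $M_+(f,\mathcal{S})$ and $M_-(f,\mathcal{S})$. If one prefers to work with suprema rather than maxima, the same conclusion follows by a routine limiting argument provided $D$ (hence $h$) is continuous.

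I expect the only genuinely subtle point to be the careful verification of the ``valley'' shape of $h$ — in particular tracking the direction of monotonicity through the composition with $g^{-1}$, and handling the corner case where $y_0$ is not in the range of $g^{-1}$ so that $t^\star$ effectively falls outside $[M_-, M_+]$. Everything else — the substitution $f_0 \equiv y_0$, the bound $M_- \le s(x) \le M_+$, and the endpoint-maximization of a quasiconvex function on an interval — is routine.
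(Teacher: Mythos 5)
Your proposal is correct and follows essentially the same route as the paper's proof: both exploit that $D(g^{-1}(t), y_0)$ vanishes at $t = g(y_0)$ and is monotone on either side (your ``valley''/quasiconvexity formalization of Assumption~\ref{ass:monotone} composed with the increasing $g^{-1}$), so the maximum over $\mathcal{S}$ is attained at one of the two score extremes $M_\pm(f,\mathcal{S})$. Your explicit two-inequality write-up and the degenerate-case remark are just a slightly more formal rendering of the paper's argument.
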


\textbf{Tree-structured $f_0$.} Since $f_0$ is piecewise constant over its leaves $\mathcal{L}_{0m}$, $m = 1,\dots,L_0$, we take $\mathcal{S}$ to be the intersection of $\certset$ with each $\mathcal{L}_{0m}$ in turn and apply Proposition~\ref{prop:additive}. The overall maximum is then obtained as the maximum over the leaves,
\begin{equation}
\max_{x\in\certset} \; D(f(x), f_0(x)) =\\ 
\max_{m=1,\dots,L_0} \max_{\sigma\in\{+,-\}} D\left( g^{-1}(M_\sigma(f, \mathcal{L}_{0m}\cap\certset)), y_{0m}\right).
\label{eqn:maxDevAdditiveTree}
\end{equation}
This reduces \eqref{eqn:maxDev} to solving $2 L_0$ instances of \eqref{eqn:maxAdditive}.

\textbf{Additive $f_0$.} For this case, we make the additional assumption that the link function $g$ in \eqref{eqn:GAM} is the identity function, as well as Assumption~\ref{ass:difference} below. The implication of these assumptions is discussed in Appendix~\ref{sec:model:linear_additive_add}.
\begin{assumption}\label{ass:difference}
$D(y, y_0) = D(y - y_0)$ is a function only of the difference $y - y_0$.
\end{assumption}
\noindent Then $f_0(x) = \sum_{j=1}^d f_{0j}(x_j)$ and the difference $f(x) - f_0(x)$ is also additive. Using Assumptions~\ref{ass:difference}, \ref{ass:monotone} and a similar argument as in the proof of Proposition~\ref{prop:additive}, the maximum deviation is again obtained by 
maximizing and minimizing an additive function, resulting in two instances of \eqref{eqn:maxAdditive} with $\mathcal{S} = \certset$:
\[
\max_{x\in\certset} \; D(f(x), f_0(x)) = \max_{\sigma\in\{+,-\}} D\bigl( M_\sigma(f - f_0, \certset)\bigr).
\]

\textbf{Computational complexity of \eqref{eqn:maxAdditive}.} 
For the case of nonlinear additive $f$, we additionally assume that $\certset$ is a Cartesian product. 
It follows that $\mathcal{S} = \prod_{j=1}^d \mathcal{S}_j$ is a Cartesian product (see Appendix~\ref{sec:model:linear_additive_add} for the brief justification) and \eqref{eqn:maxAdditive} separates into one-dimensional optimizations over $\mathcal{S}_j$, 
\begin{equation}\label{eqn:maxAdditiveSep}
\maxmin_{x \in \mathcal{S}} \; \sum_{j=1}^d f_j(x_j) = \sum_{j=1}^d \maxmin_{x_j \in \mathcal{S}_j} f_j(x_j).
\end{equation}

The computational complexity of \eqref{eqn:maxAdditiveSep} is thus $\sum_{j=1}^d C_j$, where $C_j$ is the complexity of the $j$th one-dimensional optimization. We discuss different cases of $C_j$ in Appendix~\ref{sec:model:linear_additive_add}; the important point is that the overall complexity is linear in $d$. 

In the GLM case where $\sum_{j=1}^d f_j(x_j) = w^T x$, problem~\eqref{eqn:maxAdditive} is simpler and it is less important that $\certset$ be a Cartesian product. In particular, if $\certset$ is a convex set, so too is $\mathcal{S}$ (again see Appendix~\ref{sec:model:linear_additive_add} for justification). Hence \eqref{eqn:maxAdditive} is a convex optimization problem.

\subsection{Tree ensembles}
\label{sec:model:ensemble}

We now extend the idea used for single decision trees in Section \ref{sec:model:tree} to tree ensembles. This class covers several popular methods such as Random Forests and Gradient Boosted Trees. It can also cover \emph{rule} ensembles \citep{friedman2008,dembczynski2010} as a special case, as explained in Appendix~\ref{app:ensembles:algo}. We assume $f$ is a tree ensemble consisting of $K$ trees and $f_0$ is a single decision tree. Let $\mathcal{L}_{l_k}$ denote the $l$th leaf of the $k$th tree in $f$ for $l=1,\hdots,L_k$, and $\mathcal{L}_{0m}$ be the $m$th leaf $f_0$, for $m=1,\hdots,L_0$. Correspondingly let $y_{l_k}$ and $y_{0m}$ denote the prediction values associated with each leaf. 

Define a graph $\mathcal{G}(\mathcal{V}, \mathcal{E})$, where there is a vertex for each leaf in $f$ and $f_0$, i.e.
\begin{equation}
    \mathcal{V} = \{l_k | \forall k = 1,\dots, K,\, l=1,\hdots,L_k\} \cup \{m | m=1,\hdots,L_0\}.
\end{equation}
Construct an edge for each overlapping pair of leaves in $\mathcal{V}$, i.e.
\begin{equation}
    \mathcal{E} = \{(i, j) | \mathcal{L}_i \cap \mathcal{L}_j \neq \emptyset,\, \forall (i,j)\in V,\, i\ne j \}.
\end{equation}

This graph is a $K+1$-partite graph as leaves within an individual tree do not intersect and are an independent set. Denote $M$ to be the adjacency matrix of $\mathcal{G}$. Following \citet{chen2019robustness}, a maximum clique $S$ of size $K+1$ on such a graph provides a discrete region in the feature space with a computable deviation. A clique is a subset of nodes all connected to each other; a maximum clique is one that cannot be expanded further by adding a node. The model predictions $y_c$ and $y_{0c}$ can be ensembled from leaves in $S$. Denote by $D(S)$ the deviation computed from the clique $S$. Maximizing over all such cliques solves \eqref{eqn:maxDev}. However, complete enumeration is expensive, so informative bounds, either using the merge procedure in \citet{chen2019robustness} or the heuristic function in \citet{devos2021versatile} can be used. We use the latter which exploits the $K+1$-partite structure of $\mathcal{G}$. 

Specifically, we adapt the anytime bounds of \citet{devos2021versatile} as follows. At each step of the enumeration procedure, an intermediate clique $S$ contains selected leaves from trees in $\left[1, \hdots, k\right]$ and unexplored trees in $\left[k+1, \hdots,K+1\right]$. For each unexplored tree, we select a valid candidate leaf that maximizes deviation, i.e.
\begin{equation}
    v_k = \argmax_{l_k,\,l_k\cap i\ne \emptyset,\, \forall i\in S} D(S\cup l_k).
    \label{eq:tree-ensemble:nodesel}
\end{equation}
Using these worst-case leaves, a heuristic function 
\begin{equation}
    H(S) = D(S^\prime) = D(S\bigcup\limits_{m=k+1}^{K+1} v_m)
    \label{eq:tree-ensemble:heuristic}
\end{equation}
provides an upper (dual) bound. In practice, this dual bound is tight and therefore very useful during the search procedure to prune the search space. Each $K+1$ clique provides a primal bound, so the search can be terminated early before examining all trees if the dual bound is less than the primal bound. We adapt the search procedure of \citet{mirghorbani2013finding} to include the pruning arguments. Appendix \ref{app:ensembles:algo} presents the full algorithm. Starting with an empty clique, the procedure adds a single node from each tree to create an intermediate clique. If the size of the clique is $K+1$ the primal bound is updated. Otherwise, the dual bound is computed. A node compatibility vector is used to keep track of all feasible additions.When the search is terminated at any step, the maximum deviation is bounded by $(D_{lb}, D_{ub})$.

The algorithm works for the entire feature space. When the certification set $\certset$ is a union of balls as in \eqref{eqn:unionBalls}, some additional considerations are needed. First, we can disregard leaves that do not intersect with $\certset$ during the graph construction phase. A validation step to ensure that the leaves of a clique all intersect with the same ball in $\certset$ is also needed.

\subsection{Piecewise Lipschitz Functions}
\label{sec:model:piecewise_Lipschitz}

We saw the benefits of having specific (deterministic) interpretable functions as well as their extensions in the context of safety. Now consider a richer class of functions that may also be randomized with finite variance. In this case let $f$ and $f_0$ denote the mean values of the learned and reference functions respectively. We consider the case where each function is either interpretable or black box, where the latter implies that query access is the only realistic way of probing the model. This leads to three cases where either both functions are black box or interpretable, or one is black box. What we care about in all these cases\footnote{For simplicity assume $D(\cdot,\cdot)$ to be the identity function.} is to find the maximum (and minimum) of a function $\Delta(x)=f(x)-f_0(x)$. Let us consider finding only the maximum of $\Delta$ as the other case is symmetric. Given that $f$ and $f_0$ can be random functions $\Delta$ is also a random function and if $\Delta$ is black box a standard way to optimize it is either using Bayesian Optimization (BO) \citep{ucb} or tree search type bandit methods \citep{bubeck, carpentier}. We repurpose some of the results from this latter literature in our context showcasing the benefit of interpretability from a safety standpoint. To do this we first define relevant terms.

\begin{definition}[Simple Regret \citep{bubeck}]
\label{def:sr}
If $f^*_{\certset}$ denotes the optimal value of the function $f$ on the certification set $\certset$, then the simple regret $r_q^{\certset}$ after querying the $f$ function $q$ times and obtaining a solution $x_q$ is given by,
 $   r_q^{\certset}(f) = f^*_{\certset}-f(x_q)$.
\end{definition}

\begin{definition}[Order $\beta$ c-Lipschitz]
\label{def:lc}
Given a (normalized) metric $\ell$ a function $f$ is c-Lipschitz continuous of order $\beta>0$ if for any two inputs $x$, $y$ and for $c>0$ we have,
$|f(x)-f(y)| \le c\cdot\ell(x,y)^{\beta}$.   
\end{definition}

\begin{definition}[Near optimality dimension \citep{bubeck}]
\label{def:nod}
If $\mathcal{N}(\certset,\ell,\epsilon)$ is the maximum number of $\epsilon$ radius balls one can fit in $\certset$ given the metric $\ell$ and $\certset_{\epsilon}=\{x\in \certset|f(x)\ge f^*_{\certset}-\epsilon\}$, then for $c>0$ the c-near optimality dimension is given by,
$\upsilon = \max\left(\limsup_{\epsilon\rightarrow 0}\frac{\ln \mathcal{N}(\certset_{c\epsilon},\ell,\epsilon)}{\ln (\epsilon^{-1})},0 \right) $.
\end{definition}
Intuitively, simple regret measures the deviation between our current best and the optimal solution. The Lipschitz condition bounds the rate of change of the function. Near optimality dimension measures the set size for which the function has close to optimal values. 
The lower the value of $\upsilon$, the easier it is to find the optimum. We now define what it means to have an interpretable function.

\begin{assumption}[Characterizing an Interpretable Function]
\label{ass:int}
If a function $f$ is interpretable, then we can (easily) find $1\le m \ll n$ partitions $\{\certset^{(1)},...,\certset^{(m)}\}$ of the certification set $\certset$ such that the function $f^{(i)}=\{f(x)|x\in \certset^{(i)}\}$ $\forall i\in \{1,...,m\}$ in each partition is c-Lipschitz of order $\beta$.
\end{assumption}

\emph{Note that the (interpretable) function overall does not have to be c-Lipschitz of bounded order, rather only in the partitions}. This assumption is motivated by observing different interpretable functions. For example, in the case of decision trees the $m$ partitions could be its leaves, where typically the function is a constant in each leaf ($c=0$). For rule lists as well a fixed prediction is usually made by each rule. For a linear function one could consider the entire input space (i.e. $m=1$), where for bounded slope $\alpha$ the function would also satisfy our assumption ($c=\alpha$ and $\beta=1$). {Examples of models that are not piecewise constant or globally Lipschitz are oblique decision trees (Murthy et al., 1994), regression trees with linear functions in the leaves, and functional trees.} Moreover, $m$
is likely to be small so that the overall model is interpretable (viz. shallow trees or small rules). With the above definitions and Assumption \ref{ass:int} we now provide the simple regret for the function $\Delta$.

\noindent\textbf{1. Both black box models:} If both $f$ and $f_0$ are black box then it seems no gains could be made in estimating the maximum of $\Delta$ over standard results in bandit literature. Hence, using Hierarchical Optimistic Optimization (HOO) with assumptions such as $\certset$ being compact and $\Delta$ being weakly Lipschitz \citep{bubeck} with near optimality dimension $\upsilon$ the simple regret after $q$ queries is:
\begin{equation}
    \label{eq:bbb}
    r_q^{\certset}(\Delta)\le O\left(\left(\frac{\ln(q)}{q}\right)^{\frac{1}{\upsilon+2}}\right)
\end{equation}

\noindent\textbf{2. Both interpretable models:} If both $f$ and $f_0$ are interpretable, then for each function based on Assumption \ref{ass:int} we can find $m_1$ and $m_0$
partitions of $\certset$ respectively where the functions are $c_1$ and $c_0$-Lipschitz of order $\beta_1$ and $\beta_0$  respectively. Now if we take non-empty intersections of these partitions where we could have a maximum of $m_1m_0$ partitions, the function $\Delta$ in these partitions would be $c=2\max(c_0,c_1)$-Lipschitz of order $\beta=\min(\beta_0,\beta_1)$ as stated next (proof in appendix).
\begin{proposition}
\label{prop:lip}
If functions $h_0$ and $h_1$ are $c_0$ and $c_1$ Lipschitz of order $\beta_0$ and $\beta_1$ respectively, then the function $h=h_0-h_1$ is c-Lipschtiz of order $\beta$, where $c=2\max(c_0,c_1)$ and $\beta=\min(\beta_0,\beta_1)$.
\end{proposition}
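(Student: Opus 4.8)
The plan is to combine the triangle inequality with the normalization of the metric $\ell$. First, for arbitrary inputs $x,y$, I would write $h(x) - h(y) = \bigl(h_0(x) - h_0(y)\bigr) - \bigl(h_1(x) - h_1(y)\bigr)$ and apply the triangle inequality to obtain $|h(x) - h(y)| \le |h_0(x) - h_0(y)| + |h_1(x) - h_1(y)|$. Invoking the order-$\beta_0$ and order-$\beta_1$ Lipschitz hypotheses on $h_0$ and $h_1$ from Definition~\ref{def:lc}, the right-hand side is at most $c_0\,\ell(x,y)^{\beta_0} + c_1\,\ell(x,y)^{\beta_1}$.

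The second step is to homogenize the two exponents. Since $\ell$ is a \emph{normalized} metric, $\ell(x,y) \in [0,1]$, and for any $t \in [0,1]$ the map $\beta \mapsto t^{\beta}$ is non-increasing; hence $t^{\beta_i} \le t^{\min(\beta_0,\beta_1)}$ for $i = 0,1$. Writing $\beta = \min(\beta_0,\beta_1)$, the bound becomes $(c_0 + c_1)\,\ell(x,y)^{\beta} \le 2\max(c_0,c_1)\,\ell(x,y)^{\beta}$, which is precisely the order-$\beta$ $c$-Lipschitz condition with $c = 2\max(c_0,c_1)$. This completes the argument.

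The only point requiring care—hardly a genuine obstacle—is that the exponent-homogenization step depends on $\ell \le 1$; without normalization one cannot collapse $\ell^{\beta_0}$ and $\ell^{\beta_1}$ into a single power, and a uniform order would not be available. Since Definition~\ref{def:lc} already stipulates a normalized metric, this is automatic. I would also remark that the constant is not tight: $c_0 + c_1$ already works, but the stated $2\max(c_0,c_1)$ is the form used in the subsequent intersection-of-partitions argument, so I would keep it.
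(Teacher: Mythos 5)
Your proof is correct and follows essentially the same route as the paper's: triangle inequality, then bounding $c_0\,\ell(x,y)^{\beta_0}+c_1\,\ell(x,y)^{\beta_1}$ by $2\max(c_0,c_1)\,\ell(x,y)^{\min(\beta_0,\beta_1)}$. You additionally make explicit the step the paper leaves implicit, namely that the normalization $\ell(x,y)\le 1$ is what justifies collapsing the two exponents to $\beta=\min(\beta_0,\beta_1)$, which is a worthwhile clarification.
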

Given that $\Delta$ is smooth in these partitions with underestimated smoothness of order $\beta$, the simple regret after $q_i$ queries in the $i^{\text{th}}$ partition $\certset^{(i)}$ with near optimality dimension $\upsilon_i$ based on HOO is:
$    r_{q_i}^{\certset^{(i)}}(\Delta)\le O\left(q_i^{-1/(\upsilon_i+2)} \right)$,
where $\upsilon_i\le \frac{d}{\beta}$.
If we divide the overall query budget $q$ across the $\pi \le m_0m_1$ non-empty partitions equally, then the bound will be scaled by $\pi^{1/(\upsilon_i+2)}$ when expressed as a function of $q$. Moreover, the regret for the entire $\certset$ can then be bounded by the maximum regret across these partitions leading to \begin{equation}
    \label{eq:bint}
    r_q^{\certset}(\Delta)\le O\left(\left(\frac{\pi}{q}\right)^{\frac{\beta}{d+2\beta}}\right)
\end{equation}
Notice that for a model to be interpretable $m_0$ and $m_1$ are likely to be small (i.e. shallow trees or small rule lists or linear model where $m=1$) leading to a ``smallish" $\pi$ and $\upsilon$ can be much $>>\frac{d}{\beta}$ in case 1. Hence, interpretability reduces the regret in estimating the maximum deviation.

\noindent\textbf{3. Black box and interpretable model:} Making no further assumptions on the black box model and assuming $\Delta$ satisfies properties mentioned in case 1, the simple regret has the same behavior as \eqref{eq:bbb}. This is expected as the black box model could be highly non-smooth.

\section{Case Studies}
\label{sec:expt}

We present case studies to serve three purposes: 1) show that deviation maximization can lead to insights about models, 2) illustrate the maximization methods developed in Section~\ref{sec:model}, and 3) quantify the dependence of the maximum deviation on model complexity and certification set size (mostly in Appendix~\ref{sec:expt_add}). Two datasets are featured: a sample of US Home Mortgage Disclosure Act (HMDA) data (see Appendix~\ref{sec:expt_add:HMDA} for details), meant as a proxy for a mortgage approval scenario, and the UCI Adult Income dataset \citep{Dua:2019}, a standard tabular dataset with mixed data types. A subset of results is shown in this section with full results, experimental details, and an additional Lending Club dataset in Appendix~\ref{sec:expt_add}. 
Since these are binary classification datasets, we take the deviation function $D$ to be the absolute difference between predicted probabilities of class 1. For the certification set $\certset$, we consider a union of $\ell_\infty$ balls \eqref{eqn:unionBalls} centered at test set instances. {While we have used the test set here, any not necessarily labelled dataset would suffice.} 
The case $r = 0$ yields a finite set consisting only of the test set, while $r \to \infty$ corresponds to $\certset$ being the entire domain $\mathcal{X}$. We reiterate that the dependence of the certification set on a chosen dataset is only on (an expanded version of) the support of the dataset and not on other aspects of the data distribution.

To demonstrate insights from deviation maximization, we study the solutions that maximize deviation (the $\argmax$ in \eqref{eqn:maxDev}) and discuss three examples below. 

\textbf{Identification of an artifact:} The first example comes from the Adult Income dataset, where the reference model $f_0$ is a decision tree (DT) 
and $f$ is an Explainable Boosting Machine (EBM) \citep{nori2019interpretml}, a type of GAM 
(plots of both in Appendix~\ref{sec:expt_add:Adult}). Here, the capital loss feature is the largest contributor to the maximum deviation (the discussion below Table~\ref{tab:HMDA_GAM_max} explains how this is determined), and Table~\ref{tab:Adult_GAM} in Appendix~\ref{sec:expt_add:Adult} shows that as the certification set radius $r$ increases, the maximizing values of capital loss converge to the interval $[1598, 1759]$. The plot of the GAM shape function $f_j$ for capital loss in Figure~\ref{fig:Adult_GAM_CapitalLoss} shows that this interval corresponds to a curiously low value of the function. This low region may be an artifact warranting further investigation since it seems anomalous compared to the rest of the function, and since individuals who report capital losses on their income tax returns to offset capital gains usually have high income (hence high log-odds score). Note that this potential artifact was automatically identified through deviation maximization.

For the next two examples, we consider a simplified mortgage approval scenario using the HMDA dataset. Suppose that a 
DT $f_0$ (shown in Figure~\ref{fig:HMDA_DT} in Appendix~\ref{sec:expt_add:HMDA}) has been trained to make final decisions on mortgage applications. Domain experts have determined that this DT is sensible and safe and are now looking to improve upon it by exploring EBMs. (Logistic regression (LR) models are deferred to Appendix~\ref{sec:expt_add:HMDA} because they do not have higher balanced accuracy than $f_0$.)

\textbf{Conflict between $f$, $f_0$:} We first examine solutions that result in the most positive difference between the predicted probabilities of an EBM $f$ with parameter \texttt{max\_bins}=32 and the DT $f_0$. These all occur in a leaf of $f_0$ (leaf 2 in Figure~\ref{fig:HMDA_DT}) where the applicant's debt-to-income (DTI) ratio is too high ($>$ 52\%) and $f_0$ predicts a low probability of approval. The other salient feature of the solutions is that they all have `preapproval'=1, indicating that a preapproval was requested, which is given a large weight by the EBM $f$ in favor of approval (see Figure~\ref{fig:HMDA_GAM_functions} in Appendix~\ref{sec:expt_add:HMDA}, and Table~\ref{tab:HMDA_GAM_max} for more feature values). Thus $f$ and $f_0$ are in conflict. Among different ways in which the conflict could be resolved, a domain expert might decide that $f_0$ remains correct in rejecting risky applicants with high DTI, even if there is a preapproval request and the new EBM model puts a high weight on it.

\begin{table*}[t]
    \small
    \centering
    \begin{tabular}{lrrrrrr}
    \toprule
$r$&debt\_to\_income (\%)&state&loan\_to\_value (\%)&aus\_1&prop\_value (000\$)&income (000\$)\\
\midrule
0.0&46.0&CA&95.0&3&415&77.0\\
0.1&[45.9 46.5]&none&[100.   100.92]&1&[120 120]&$\leq$28.5\\
0.2&[45.5 46.5]&none&[100.   100.92]&1&[120 120]&$\leq$28.5\\
0.4&[52. 52.]&none&[100.   100.92]&1&[120 120]&$\leq$28.5\\
0.6&[52. 52.]&none&[100.   100.92]&1&[120 120]&$\leq$28.5\\
0.8&[52. 52.]&none&[100.   100.92]&1&[120 120]&$\leq$28.5\\
    \bottomrule
    \end{tabular}
    \caption{Values of top 6 features that maximize difference in predicted probabilities between a decision tree reference model $f_0$ and an Explainable Boosting Machine $f$ (\texttt{max\_bins} $= 32$) on the HMDA dataset.  
    For radius $r > 0$, the maximizing values of continuous features form an interval because the corresponding EBM shape functions $f_j$ are piecewise constant.}
    \label{tab:HMDA_GAM_min}
    \vspace{-2mm}
\end{table*}

\begin{figure*}[th]
    \centering
    \begin{minipage}{0.5\textwidth}
        \includegraphics[width=\textwidth]{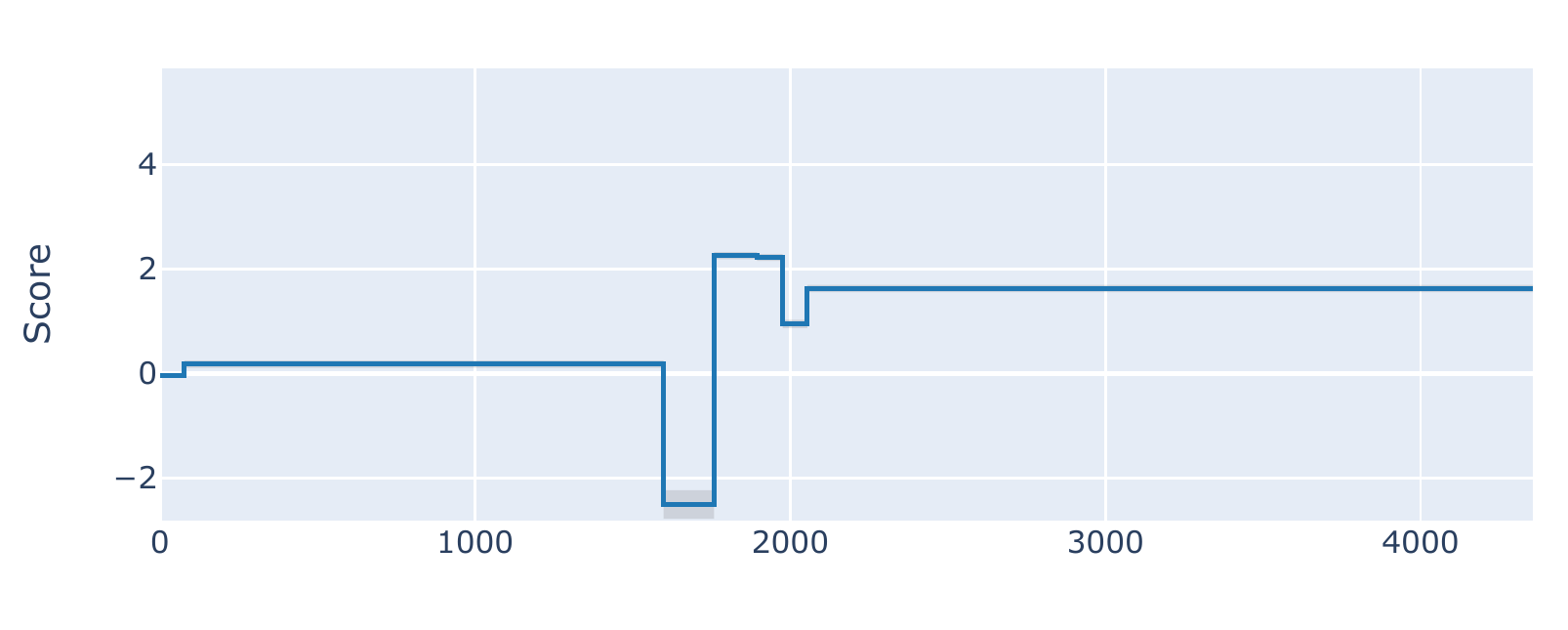}
        \captionof{figure}{EBM shape function $f_j$ for capital loss feature, showing anomalously low interval identified by deviation maximization.}
        \label{fig:Adult_GAM_CapitalLoss}
    \end{minipage}
    \hfill
    \begin{minipage}{0.45\textwidth}
        \includegraphics[width=\textwidth]{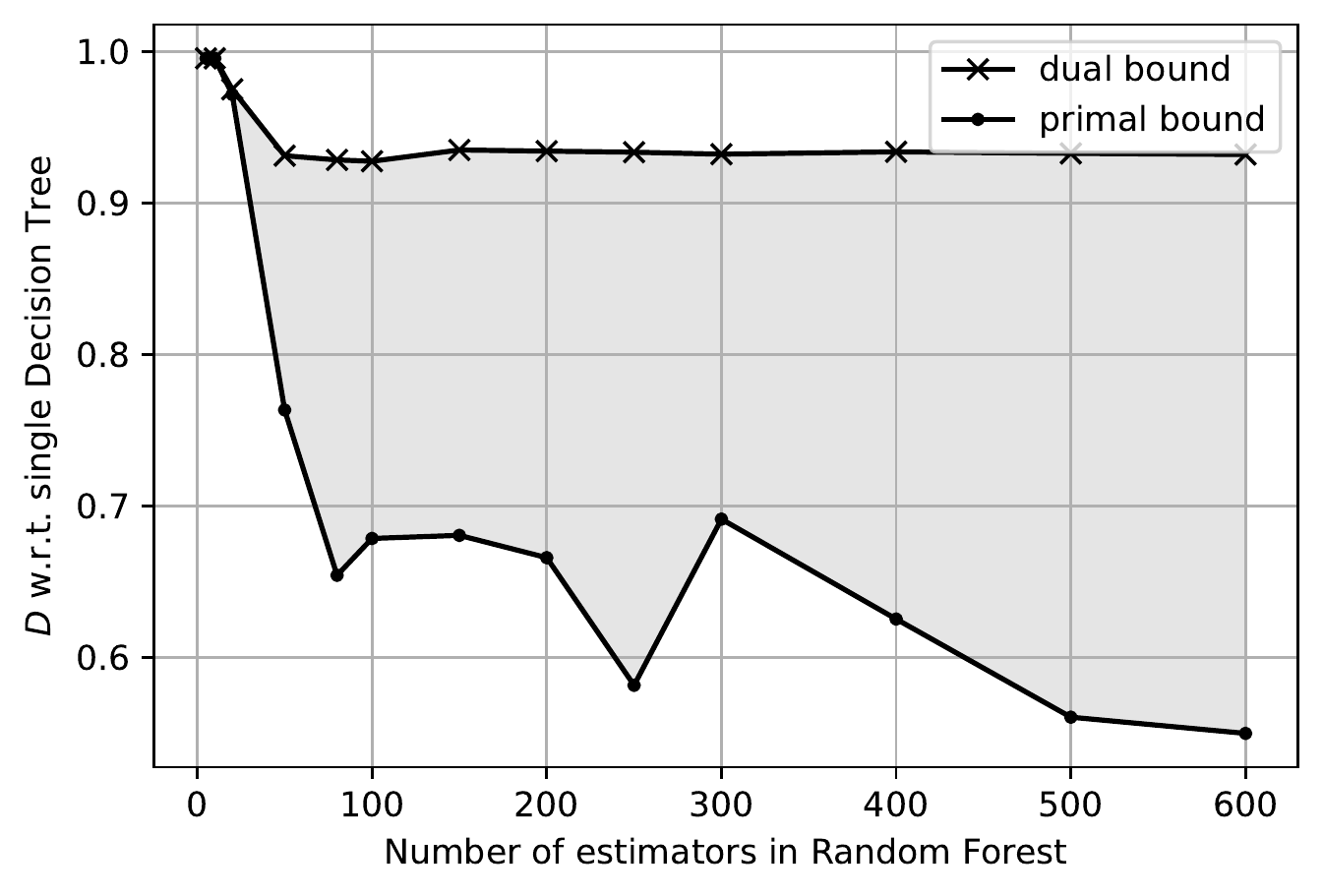}
        \captionof{figure}{Maximum deviation $D$ on the Adult Income dataset as a function of number of estimators in a Random Forest.}
        \label{fig:Adult_RF_Nest}
    \end{minipage}
    \vspace{-3mm}
\end{figure*}

\textbf{Trend toward extreme points, deviation can be good:} We now look at solutions that yield the most negative difference between the predicted probabilities of $f$ and $f_0$,  
for $r \leq 0.8$. Table~\ref{tab:HMDA_GAM_min} shows the 6 features that contribute most to the deviation (again see Appendix~\ref{sec:expt_add:HMDA} for details). All of these points lie in a leaf of $f_0$ (leaf 14 in Figure~\ref{fig:HMDA_DT}, denoted $\mathcal{L}_{14}$) that excludes several clearer-cut cases, with the result being a less confident predicted probability of $0.652$ from $f_0$. The feature values that maximize deviation tend toward extreme points of the region $\mathcal{L}_{14}$. Specifically, the values of the continuous features debt-to-income ratio, loan-to-value ratio, property value, and income all move in the direction of application denial. For the latter three features, the boundary of $\mathcal{L}_{14}$ is reached as soon as $r = 0.1$, whereas for debt-to-income ratio, this occurs at $r = 0.4$. The movement toward extremes is expected for this $f$ since its relevant shape functions $f_j$ are mostly increasing or decreasing, as seen in Figure~\ref{fig:HMDA_GAM_functions} in Appendix~\ref{sec:expt_add:HMDA}. The same behavior is observed in other GAM and LR examples in Appendix~\ref{sec:expt_add}. In this example, a domain expert might conclude that the large deviation is in fact desirable because $f$ is providing varying predictions in $\mathcal{L}_{14}$ in ways that make sense, as opposed to the constant given by $f_0$. {This shows that deviation maximization can work in both directions, identifying where the reference model and its assumptions may be too simplistic and giving an opportunity to improve the reference model.}

\textbf{Maximum deviation vs.~number of trees in a RF:}
In Figure~\ref{fig:Adult_RF_Nest}, we highlight one result from a set of such results in Appendix~\ref{sec:expt_add}, showing maximum deviation as a function of model complexity, here quantified by the number of estimators (trees) in a Random Forest (RF). This is a demonstration of the method in Section \ref{sec:model:ensemble}, which in general provides bounds on the maximum deviation. In this case, the upper (``dual'') bound is informative enough to actually show a decrease as the number of estimators increases. The larger number of estimators increases averaging and may serve to make the model smoother.

\section{Conclusion}
\label{sec:concl}

We have considered the relationship between interpretability and safety in supervised learning through two main contributions: First, the proposal of maximum deviation as a means toward assessing safety, and second, discussion of approaches to computing maximum deviation and how these are simplified by interpretable model structure. We believe that there is much more to explore in this relationship. Appendices~\ref{sec:discuss} and \ref{sec:model:post_hoc} provide further discussion of several topics and future directions.

\section*{Acknowledgements}
We thank Michael Hind for several early discussions on the topic of ML model risk assessment that inspired this work, and for his overall leadership on this topic. We also thank Dhaval Patel for a discussion on the industrial assets example in Section~\ref{sec:problem}.

\medskip

{
\small

\bibliography{biblio}
\bibliographystyle{unsrtnat}

}

\section*{Checklist}

\begin{enumerate}

\item For all authors...
\begin{enumerate}
  \item Do the main claims made in the abstract and introduction accurately reflect the paper's contributions and scope?
    \answerYes{}
  \item Did you describe the limitations of your work?
    \answerYes{} See Appendix~\ref{sec:discuss} in particular.
  \item Did you discuss any potential negative societal impacts of your work?
    \answerYes{} See Appendix~\ref{sec:discuss}.
  \item Have you read the ethics review guidelines and ensured that your paper conforms to them?
    \answerYes{}
\end{enumerate}

\item If you are including theoretical results...
\begin{enumerate}
  \item Did you state the full set of assumptions of all theoretical results?
    \answerYes{}
        \item Did you include complete proofs of all theoretical results?
    \answerYes{} 
\end{enumerate}

\item If you ran experiments...
\begin{enumerate}
  \item Did you include the code, data, and instructions needed to reproduce the main experimental results (either in the supplemental material or as a URL)?
    \answerNo{} The code is proprietary at this time due to our institutional obligations.
  \item Did you specify all the training details (e.g., data splits, hyperparameters, how they were chosen)?
    \answerYes{} See Appendix~\ref{sec:expt_add}.
        \item Did you report error bars (e.g., with respect to the random seed after running experiments multiple times)?
    \answerNA{} We report case studies assessing the safety of fixed models.
        \item Did you include the total amount of compute and the type of resources used (e.g., type of GPUs, internal cluster, or cloud provider)?
    \answerYes{} See Appendix~\ref{sec:expt_add}.
\end{enumerate}

\item If you are using existing assets (e.g., code, data, models) or curating/releasing new assets...
\begin{enumerate}
  \item If your work uses existing assets, did you cite the creators?
    \answerYes{}
  \item Did you mention the license of the assets?
    \answerNo{} The HMDA data comes from the US government and does not appear to have a license.
  \item Did you include any new assets either in the supplemental material or as a URL?
    \answerNo{}
  \item Did you discuss whether and how consent was obtained from people whose data you're using/curating?
    \answerNA{}
  \item Did you discuss whether the data you are using/curating contains personally identifiable information or offensive content?
    \answerYes{} None of the data contain personally identifiable information, and the US Consumer Finance Protection Bureau indicates that they have modified the HMDA data to protect applicant and borrower privacy.
\end{enumerate}

\item If you used crowdsourcing or conducted research with human subjects...
\begin{enumerate}
  \item Did you include the full text of instructions given to participants and screenshots, if applicable?
    \answerNA{}
  \item Did you describe any potential participant risks, with links to Institutional Review Board (IRB) approvals, if applicable?
    \answerNA{}
  \item Did you include the estimated hourly wage paid to participants and the total amount spent on participant compensation?
    \answerNA{}
\end{enumerate}

\end{enumerate}


\newpage
\appendix

\section{Additional Problem Formulation Details}
\label{sec:problem:add}

\paragraph{Deviation function $D$} {For the case where the inputs $y$, $y_0$ to $D$ are real-valued scalars (which covers binary classification and regression), while Assumption~\ref{ass:monotone} was stated as a sufficient condition for tractable optimization with GAMs, it is also an intuitively reasonable requirement: the deviation should increase the farther $y$ is from $y_0$ in either direction. In addition, symmetry may be desirable, i.e. $D(y, y_0) = D(y_0, y)$, to not favor one of the two models over the other. Both Assumptions~\ref{ass:monotone} and \ref{ass:difference} as well as symmetry are satisfied by monotonically increasing functions $D(|y - y_0|)$ of the absolute difference, for example powers $|y - y_0|^p$ for $p > 0$. 

For the case where $y, y_0 \in \mathbb{R}^M$ as in multi-class classification, it may be advantageous for $D(y, y_0)$ to decompose into a sum over output dimensions: $D(y, y_0) = \sum_{k=1}^M D_k(y_k, y_{0k})$, where $y_k$, $y_{0k}$ are the components of $y$, $y_0$. For example, the $p$th power of the $\ell_p$ distance $\lVert y - y_0 \rVert_p^p = \sum_{k=1}^M |y_k - y_{0k}|^p$ is separable in this manner.}

\paragraph{Models that abstain} The formulation in Section~\ref{sec:problem} can also accommodate models that abstain from predicting (and possibly defer to a human expert or other fallback system). If $f(x) = \emptyset$, representing abstention, then we may set $D(\emptyset, y_0) = d$ for any $y_0 \in \mathcal{Y}$, where $d > 0$ is an intermediate value less than the maximum value that $D$ can take \citep{bartlett2008classification}. The value $d$ might also be less than a ``typically bad'' value for $D$, to reward the model for abstaining when it is uncertain.

\section{Additional Details on Deviation Maximization for Specific Model Classes}

\subsection{Trees}
\label{sec:model:tree_add}

\paragraph{Rule lists} A rule list as defined by \citet{yang2017scalable,angelino2018learning} is a nested sequence of IF-THEN-ELSE statements, where the IF condition is a conjunctive rule and the THEN consequent is an output value. If each rule involves a single feature (i.e., the conjunctions are of degree $1$), such rule lists are one-sided trees in the sense 
of Section~\ref{sec:model:tree}. The number of leaves in the equivalent tree is equal to the number of rules in the list (including the last default rule).

\paragraph{Intersection of $\certset$ with a Cartesian product} If $\certset = \prod_{j=1}^d \certset_j$ is also a Cartesian product, then determining whether the intersection is non-empty amounts to checking whether all of the coordinate-wise intersections with $\certset_j$, $j=1,\dots,d$, are non-empty. If $\certset$ is not a Cartesian product but is a union of $\ell_\infty$ balls (which are Cartesian products), then the intersection is non-empty if the intersection with any one ball is non-empty. 

\paragraph{Relationship between Proposition~\ref{prop:tree} and \cite[Theorem 1]{chen2019robustness}} {In the case of a $K = 2$-tree ensemble, \cite[Theorem 1]{chen2019robustness} bounds the complexity of exact robustness verification as $\min\{O(n^2), O((4n)^d\} = O(n^2)$, where $n$ is the maximum number of leaves in a tree and we assume that the feature dimension $d \geq 2$. In Proposition~\ref{prop:tree}, we account for the possibly different numbers of leaves $L$ and $L_0$ in the two trees $f$ and $f_0$, and we exactly enumerate the edges, $|\mathcal{E}| \leq L_0 L \leq n^2$.}

\paragraph{Additive reference model} For the case where $f$ is a decision tree and $f_0$ is a generalized additive model, if the deviation function is symmetric, $D(y, y_0) = D(y_0, y)$, then this case is covered in Section~\ref{sec:model:linear_additive}.

\subsection{Linear and Additive Models}
\label{sec:model:linear_additive_add}

\paragraph{Categorical features} A function $f_j(x_j)$ of a categorical feature $x_j$ can be represented in two ways, depending on whether $f$ is a GAM or a GLM. In the GAM case, we may use the native representation in which $x_j$ takes values in a finite set $\mathcal{X}_j$ of categories. In the GLM case, $x_j$ is one-hot encoded into multiple binary-valued features $x_{jk}$, one for each category $k$. Then any function $f_j$ can be represented as a linear function, 
\[
f_j(x_j) = \sum_{k=1}^{\lvert\mathcal{X}_j\rvert} w_{jk} x_{jk},
\]
where $w_{jk}$ is the value of $f_j$ for category $k$. 

\paragraph{Implication of Assumption~\ref{ass:monotone}} The second condition implies that the deviation increases or stays the same as $y$ moves away from $y_0$ in either direction.

\begin{proof}[Proof of Proposition~\ref{prop:additive}]
Let $x \in \mathcal{S}$ and $S(x) = \sum_{j=1}^d f_j(x_j)$. Under Assumption~\ref{ass:monotone}.1, if $S(x) = g(y_0)$, then 
\[
D(f(x), f_0(x)) = D(g^{-1}(g(y_0)), y_0) = D(y_0, y_0) = 0. 
\]
As $S(x)$ increases from $g(y_0)$, $f(x)$ also increases because $g^{-1}$ is an increasing function, and $D(f(x), y_0)$ increases or stays the same due to Assumption~\ref{ass:monotone}.2. Similarly, as $S(x)$ decreases from $g(y_0)$, $f(x)$ decreases, and $D(f(x), y_0)$ again increases or stays the same. It follows that to maximize $D(f(x), y_0)$, it suffices to separately maximize and minimize $S(x)$, compute the resulting values of $D(f(x), y_0)$, and take the larger of the two. This yields the result.
\end{proof}

\paragraph{Implication of Assumption~\ref{ass:difference} and identity link function $g$} These two assumptions imply that the deviation is measured on the difference between $f$ and $f_0$ in the space in which they are additive. For example, if $f$ and $f_0$ are logistic regression models predicting the probability of belonging to one of the classes, the difference is taken in the log-odds (logit) domain. It is left to future work to determine other assumptions under which problem~\eqref{eqn:maxDev} is tractable when $f$ and $f_0$ are both additive.

\paragraph{Cartesian product $\certset$ implies Cartesian product $\mathcal{S}$} In the cases of constant and additive $f_0$, $\mathcal{S} = \certset$. In the decision tree case, since each leaf is a Cartesian product $\mathcal{L}_{0m} = \prod_{j=1}^d \mathcal{R}_{mj}$, the intersections $\mathcal{S} = \mathcal{L}_{0m} \cap \certset$ are also Cartesian products $\prod_{j=1}^d \mathcal{S}_{j}$ where $\mathcal{S}_{j} = \mathcal{R}_{mj} \cap \certset_j$. 

\paragraph{One-dimensional optimization complexities $C_j$} For discrete-valued $x_j$, $C_j$ is proportional to the number of allowed values $\lvert \mathcal{S}_j \rvert$. For continuous $x_j$, it is common to use spline functions or tree ensembles as $f_j$ in constructing GAMs. In the former case, $C_j$ is proportional to the number of knots. In the latter, the tree ensemble can be converted to a piecewise constant function and $C_j$ is then proportional to the number of pieces. Lastly in the case where $f_j(x_j) = w_j x_j$ is linear and $\mathcal{S}_j = [\underline{X}_j, \overline{X}_j]$ is an interval, $C_j = O(1)$ because it suffices to evaluate the two endpoints.

\paragraph{Convex $\mathcal{S}$} If $\certset$ is a convex set, then in the cases of constant and additive $f_0$, $\mathcal{S} = \certset$ is also convex. In the case of tree-structured $f_0$, $\mathcal{S} = \mathcal{L}_{0m} \cap \certset$ and each leaf $\mathcal{L}_{0m}$ can be represented as a convex set, with interval constraints on continuous features and set membership constraints on categorical features. The latter can be represented as $x_{jk} = 0$ constraints on the one-hot encoding (see ``Categorical features'' paragraph above) for non-allowed categories $k$. Hence $\mathcal{S}$ is also convex. 

As a specific example, suppose that $\mathcal{S}$ is the product of independent constraints on each categorical feature and an $\ell_p$ norm constraint on the continuous features jointly. The maximization over each categorical feature has complexity $C_j = \lvert \mathcal{S}_j \rvert$ as noted above, while the maximization of $w^T x$ over continuous features lying in an $\ell_p$ ball has closed-form solutions for the common cases $p = 1, 2, \infty$. 

\paragraph{Relaxations of the Cartesian product assumption} {If the certification set $\certset$ is not a Cartesian product, then one way to still bound the maximum deviation is to find the smallest Cartesian product $\overline{\certset}$ that contains $\certset$ and maximize deviation over $\overline{\certset}$. As long as it is relatively easy to optimize linear functions over $\certset$, then constructing such a Cartesian product is similarly easy. Another conceivable relaxation of the Cartesian product assumption is a Cartesian product of low-dimensional sets, not just one-dimensional.}

\subsection{Tree Ensembles}
\label{app:ensembles:algo}

The full algorithm for clique search from Section \ref{sec:model:ensemble} is presented in Algorithm \ref{alg:tree-ensembles:enum}. It uses $Z$ as a node compatibility vector to keep track of valid leaves and $B$ a set of trees/partites not yet covered by the maximum clique. The algorithm starts with and empty clique $S$ and anytime bounds as $0$. It starts the search with the smallest tree to limit the search space. This is typically $f_0$. Each leaf is added to the intermediate clique $S$ in turn (Line 6). A stronger primal bound can be achieved if the traversal is ordered in a meaningful way. In particular, starting with nodes with the highest heuristic function value $H(S)$ aids the algorithm to focus on better areas of the search space.

If the size of the clique is $K+1$ the primal bound is updated. Otherwise, the dual bound is computed. If the node is promising, the algorithm recurses to the next level. When the search is terminated at any step, the maximum deviation is bounded by $(D_{lb}, D_{ub})$.

\begin{algorithm}
\caption{Max clique search for maximum deviation}\label{alg:tree-ensembles:enum}
\begin{algorithmic}[1]
\REQUIRE $M$ adjacency matrix, $H$ heuristic function
\STATE $Z[i]=1\forall i\in V$, $B=\{1,2,\hdots,K+1\}$, $S=\emptyset$ \COMMENT{All nodes valid, all trees uncovered}
\STATE Q = Enumerate($Z, B, S$)
\STATE \textbf{Initialize:}$D_{lb} = 0$, $D_{ub} = 0$ \COMMENT{Anytime bounds}
\STATE \textbf{function} Enumerate($Z, B, S$):
\STATE $t = \argmax_b \{|Z_b| \bigm| b\in B\}$ \COMMENT{Uncovered tree with fewest valid nodes}
\FOR{$i$ in $Z_t$}
\STATE $Z[i] = 0$ \COMMENT{Mark node as incompatible}
\STATE $S = S\cup \{i\}$ \COMMENT{Add to candidate clique}
\IF{$|S|=K+1$}  
    \STATE $D_{lb} = \max \left(D_{lb}, D(S)\right)$ \COMMENT{Update primal bound}
    \STATE $Q = Q \cup S$ \COMMENT{Add to set of max cliques}
    \STATE $S = S\setminus \{i\}$ \COMMENT{Backtrack}
\ELSE
    \STATE $Z_{t+1} = Z_t \land M(i)$ \COMMENT{Update valid nodes}
    \STATE $B = B\setminus \{t\}$ \COMMENT{Update uncovered trees}
    \STATE $D_{ub} = \max \left(D_{ub}, H(S)\right)$ \COMMENT{Update dual bound}
    \IF{$D_{ub} > D_{lb}$}  
        \STATE Enumerate($Z_{t+1}, B, S$) \COMMENT{Recurse to next level}
    \ENDIF
    \STATE $S = S \setminus \{i\}$ \COMMENT{Backtrack}
    \STATE $B = B \cup \{t\}$
\ENDIF
\ENDFOR
\end{algorithmic}
\end{algorithm}

\paragraph{Rule ensembles} Similar to the tree ensembles considered in Section~\ref{sec:model:ensemble}, a rule ensemble is a linear combination of conjunctive rules, where the antecedent is a conjunction of conditions on individual features, and the consequent takes a real value if the antecedent is true and zero otherwise. They are produced by algorithms such as SLIPPER \citep{cohen1999}, that of \citet{rueckert2006}, RuleFit \citep{friedman2008}, ENDER \citep{dembczynski2010} and have also been referred to as generalized linear rule models \citep{wei2019generalized}. A rule ensemble can be converted into a tree ensemble by converting each conjunctive rule into an IF-THEN-ELSE rule list, which is a one-sided tree (see Appendix~\ref{sec:model:tree_add}). Specifically, the conditions in the conjunction are taken in any order, each condition is negated to become an IF condition, and the THEN consequents are all output values of zero. The final ELSE consequent, which is reached if all the IF conditions are false (and hence the original rule holds), returns the output value of the original rule. The number of leaves in the resulting tree equals the number of conditions in the conjunction plus one.

\subsection{Piecewise Lipschitz Functions}

\begin{proof}[Proof of Proposition \ref{prop:lip}]
Consider two inputs $x$ and $y$ then,
\begin{align*}
    |h(x)-h(y)| &= |(h_0-h_1)(x)-(h_0-h_1)(y)|=|h_0(x)-h_0(y)+h_1(y)-h_1(x)|\\
    &\le |h_0(x)-h_0(y)| + |h_1(x)-h_1(y)|\le c_0\cdot\ell(x,y)^{\beta_0}+c_1\cdot\ell(x,y)^{\beta_1}\\
    &\le c\cdot\ell(x,y)^{\beta}
\end{align*}
where, $c=2\max(c_0,c_1)$ and $\beta=\min(\beta_0,\beta_1)$.
\end{proof}

\noindent\textbf{Other choices for $D(.,.)$:} The results assumed $D(.,.)$ to be the identity function, where $\Delta=D(f_0,f)$. This choice of function clearly satisfies Assumptions~\ref{ass:monotone} and \ref{ass:difference}. Again consistent with these assumptions we look at some other choices for $D(.,.)$. If $D(.,.)$ were an affine function with a positive scaling such as $D(y_0,y)=\alpha (y_0-y)+b$ where $\alpha > 0$, then our result in equation \ref{eq:bint} would be unchanged as only the Lipschitz constant of $\Delta$ would change, but not its (underestimated) order. If the function were a polynomial or exponential however, no such guarantees can be made and we would be back to case 1.

\subsection{Other Model Classes: Neural Networks and Post Hoc Explanations}
\label{sec:model:post_hoc}

For model classes beyond the ones discussed in Section~\ref{sec:model}, it appears to be a greater challenge to obtain 
reasonably tractable algorithms that guarantee exact computation of or bounds on the maximum deviation. Here we outline some future directions for neural networks and post hoc explanations.

Robustness verification for neural networks has attracted a great deal of attention and made considerable progress, with exact approaches including satisfiability modulo theory \cite{katz2019marabou} and mixed integer programming \cite{tjeng2019evaluating,anderson2020strong}, and incomplete methods that compute bounds using bound propagation \cite{singh2018fast,weng2018towards}, linear programming and duality \cite{ehlers2017formal,wong2018provable,dvijotham2018dual}, and semidefinite programming \cite{raghunathan2018semidefinite,dathathri2020enabling}. However, all of these methods consider a single model, effectively comparing it to a constant. {Robustness verification is thus essentially a single-model case of our problem \eqref{eqn:maxDev} in which $f_0$ is a constant (and with an appropriate choice of the deviation function $D(f, f_0)$). While we may expect that solutions to a two-model verification problem would leverage existing robustness verification methods, developing such solutions remains for future work.} Furthermore, evaluation of robustness verification methods has largely been limited to local neighborhoods around input points (with typical radii $\epsilon \leq 0.1$ in terms of normalized feature values). 
This limitation may also need to be addressed to enable evaluation of maximum deviation in the way envisioned in this paper.

It is also natural to ask whether post hoc explanations for the model can help. One way in which this could occur is if the post hoc explanation approximates the model $f$ by a simpler model $\hat{f}$ and if the deviation function $D$ satisfies the triangle inequality $D(f(x), f_0(x)) \leq D(f(x), \hat{f}(x)) + D(\hat{f}(x), f_0(x))$. Then the maximum deviation in \eqref{eqn:maxDev} would be bounded as 
\begin{equation}\label{eqn:triangleIneq}
    \max_{x\in\certset} D(f(x), f_0(x)) \leq \max_{x\in\certset} D(f(x), \hat{f}(x)) + \max_{x\in\certset} D(\hat{f}(x), f_0(x)).
\end{equation}
While we may choose $\hat{f}$ to be interpretable so that the rightmost maximization is tractable, the middle maximization asks for a \emph{uniform} bound on the deviation between $f$ and $\hat{f}$, i.e, the fidelity of $\hat{f}$. We are not aware of a post hoc explanation method that provides such a guarantee. Indeed, in general, the middle maximization might not be any easier than the left-hand one that we set out to bound.

A (practical) possibility may be to perform quantile regression \citep{qr} for a large enough quantile to learn $\hat{f}$, as opposed to minimizing expected error as is typically done. This may be an interesting direction to explore in the future as quantile regression algorithms are available for varied model classes including linear models, tree ensembles \citep{qr-rf} and even neural networks \citep{qr-nn}. More investigation is needed into whether quantile regression methods can provide approximate guarantees on the middle term in \eqref{eqn:triangleIneq}.

{Assuming that uniform proxies in the above sense can be constructed, then for certain modalities or applications it may be possible to train highly accurate proxies. For instance for tabular data, Random Forests or boosted trees might very well replicate the behavior of a neural network, in which case the machinery introduced in Section~\ref{sec:model:ensemble} could be used. Even for other modalities such as text and images, interpretable models such as Neural Additive Models (NAMs) \cite{agarwal2021neural} and continued fraction networks (CoFrNets) \cite{puri2021cofrnet} may prove to be sufficient in some cases.

Finally, there are recent architectures such as Lipschitz neural networks \cite{meunier2022dynamical} which are adversarially robust and hence valuable in practice. Our analysis presented in Section 4.4 for piecewise Lipschitz models would be applicable here, where the simple regret of standard bandit algorithms for a given number of queries could be reduced to \eqref{eq:bint} as opposed to \eqref{eq:bbb}.
}

\section{Further Discussion}
\label{sec:discuss}

{
\paragraph{Worst-case approach} The formulation of \eqref{eqn:maxDev} as the worst case over a certification set represents a deliberate choice to depend as little as possible on a probability distribution or a dataset sampled from one. As stated in Section~\ref{sec:problem}, Certification Set paragraph, $\mathcal{C}$ can depend at most on (an expanded version of) the support set of a distribution. The reason for this choice is because safety is an out-of-distribution notion: harmful outputs often arise precisely because they were not anticipated in the data. The trade-off inherent in this choice is that the maximum deviation may be more conservative than needed. The high maximum deviation values in e.g.~Figure~\ref{fig:Adult_all_add} may reflect this. Given definition \eqref{eqn:maxDev} as a starting point in this paper, future work could consider variations that depend more on a distribution and are thus less conservative, but may also offer a weaker safety guarantee.
}

\paragraph{Choice of reference model} The proposed definition of maximum deviation~\eqref{eqn:maxDev} depends on the choice of reference model $f_0$. Different choices will lead to different deviation values and, perhaps more importantly, different combinations of features that maximize the deviation. We have discussed possible choices in Section~\ref{sec:problem}, and the results in Section~\ref{sec:model} indicate that, as with the assessed model $f$, interpretable forms for $f_0$ can ease the computation of maximum deviation. Beyond these guidelines, it is up to ML practitioners and domain experts to decide on appropriate reference models for their application (and there may be benefit to considering more than one). We mention an additional concern with the reference model in the Ethics Discussion.

For some real applications it may be difficult to come up with a globally interpretable reference model. But specific to particular scenarios it may be possible. For instance, it might be difficult to provide general rules for how to drive a car, but in specific scenarios such as there being an obstacle in front, one can suggest that you stop or turn, which is a simple rule. So our machinery could potentially be applied at a local level where the reference model is interpretable in that locality. This might help in ``spot checking" a deployed model and estimating its safety by computing these maximum deviations in scrupulously selected (challenging) scenarios.

\paragraph{Impossible inputs in certification set} Mathematically simple sets such as Cartesian products and $\ell_p$ balls permit simpler algorithms for optimizing functions over them. Accordingly, these sets have been the focus of not only the present work but also the related literature on ML verification and adversarial robustness. However, they may not serve to exclude inputs that are physically or logically impossible from the certification set $\certset$, and thus, the resulting maximum deviation values may be too large and conservative. Here it is important to distinguish between impossible inputs and those that are merely implausible (i.e., with low probability). Techniques for capturing implausibility have been proposed for contrastive/counterfactual explanations \citep{CEM,CEM-MAF}, whereas we expect the set of impossible inputs to be smaller and more constrained. As a simple example from the Adult Income dataset, if we agree that a wife/husband is defined to be of female/male gender (regardless of the gender of the spouse), then the cross combinations male-wife and female-husband cannot occur. Future work can consider the representation and handling of such constraints.

\paragraph{White-box vs.~grey-box models} In this paper, we have assumed full ``white-box'' access to both $f$ and $f_0$, namely complete knowledge of their structure and parameters. Interesting questions may arise when this assumption is relaxed to different ``grey-box'' possibilities. For example, 
one could further investigate the third case in Section~\ref{sec:model:piecewise_Lipschitz}, where one of $f, f_0$ is black-box and the other is white-box interpretable. There may exist assumptions that we have not identified that would improve the query complexity compared to generic black-box optimization.

\paragraph{Other interpretability-safety relationships} This paper has focused on one relationship between the interpretability of a model and the safety of its outputs. It has not addressed other ways in which interpretability/explainability can affect the \emph{risk} of a model (in the plain English sense, not the expectation of a loss function). For example, in regulated industries such as consumer finance, not providing explanations or providing inadequate ones can lead to legal, financial, and reputational risks. On the other hand, providing explanations is associated with its own risks \citep{Weller2019}. These include the leakage of personal information or model information (intellectual property), an increase in appeals of decisions for the decision-making entity, and strategic manipulation of attributes (i.e.~``gaming'') by individuals to gain more favorable outcomes.

\paragraph{Applicability to RL settings} In RL, if one views the actions as labels and state representation as features, one can build a tree, albeit likely a deep/wide one, to represent exactly the RL policy, where the probability distribution over the actions can be viewed as the class distribution in a normal supervised setting. Rolling up the states, creating leaves with multiple states, and simply averaging the probabilities for each action would yield smaller trees that approximate the policy. Our work lays a foundation where in principle we can also compare $f$ and $f_0$ that are policies using such tree representations. This may be related to a popular global explainability method \citep{viper} that samples policies and builds trees to explain them.

\paragraph{Ethics} The safety of machine learning systems has been called out by the European Commission's regulatory framework \citep{altai2020}. The commission states seven key dimensions to be evaluated and audited by a cross-disciplinary team:
\begin{inlinelist}
    \item human agency and oversight,
    \item technical robustness and safety,
    \item privacy and data governance,
    \item transparency,
    \item diversity, non-discrimination and fairness,
    \item environmental and societal well-being, and
    \item accountability.
\end{inlinelist}
The second of these dimensions is safety. However, \citet{sloane2021silicon} argue that algorithmic audits are ill-defined as the underlying definitions are vague. The proposed work helps fill that ill-definedness using a quantitative approach. One may argue against this particular choice of quantification, but it does start the community down the path toward being more concrete in its definitions.

As with many other technologies, the proposed approach may be misused. For example, the reference model may be chosen in a  way that hides the safety concerns of the model being evaluated. Transparent documentation and reporting with provenance guarantees can help avoid this kind of purposeful deceit \citep{arnold2019factsheets}.

\clearpage
\section{Experiment Details and Additional Results}
\label{sec:expt_add}

\subsection{General Experiment Details}

\paragraph{Data processing} We use the training set of each dataset to train models and the test set as the basis for evaluating maximum deviation, specifically as the set of centers for the $\ell_\infty$ balls in \eqref{eqn:unionBalls}. Continuous features are standardized and categorical features are one-hot encoded. The $\ell_\infty$ norm is computed on the resulting normalized feature values.

\paragraph{Models} We use scikit-learn \citep{scikit-learn} to train decision tree (DT), logistic regression (LR), and Random Forest (RF) models. The corresponding complexity parameters are the number of leaves for DT (parameter \texttt{max\_leaf\_nodes}), the amount of $\ell_1$ regularization for LR (inverse $\ell_1$ penalty $C$), and the number of estimators/trees for RF (\texttt{n\_estimators}). For additive models, we use Explainable Boosting Machines (EBM) from the InterpretML package \citep{nori2019interpretml} with zero interaction terms (so that the models are indeed additive). 
Smoothness is controlled by the \texttt{max\_bins} parameter, the number of discretization bins for continuous features.

\paragraph{Deviation maximization} In all cases, when the certification set radius $r = 0$, maximum deviation can be computed simply by evaluating the models on the test set. For the case where $r > 0$, $f$ is a DT or RF, and $f_0$ is a DT, Algorithm \ref{alg:tree-ensembles:enum} is used (on a bipartite graph if $f$ is a DT). 
The cases where $f$ is LR or EBM fall under the generalized additive case of Section~\ref{sec:model:linear_additive}. Given that $f_0$ is a DT, we use \eqref{eqn:maxDevAdditiveTree}, \eqref{eqn:maxAdditive} to determine the maximum deviation. For $r < \infty$ when $\certset$ is a union of $\ell_\infty$ balls, we maximize separately over each intersection between a ball and a leaf of $f_0$ and then take the maximum over the intersections. 

\paragraph{Computation} All experiments were run on CPU nodes with 64GB memory. For decision trees and tree ensembles run times of Algorithm \ref{alg:tree-ensembles:enum} were limited to $2$ hours, after which the best available bounds were used.

\subsection{Home Mortgage Disclosure Act Dataset}
\label{sec:expt_add:HMDA}

\paragraph{Data source and pre-processing} The data is made available by the US Consumer Finance Protection Bureau (CFPB) under the Home Mortgage Disclosure Act (HMDA). We use the national snapshot from year 2018\footnote{\url{https://ffiec.cfpb.gov/data-publication/snapshot-national-loan-level-dataset/2018}} of ``loan/application records,'' which contain information on mortgage applications and their outcomes. According to their website, the CFPB has modified the data to protect applicant and borrower privacy.

We processed the 2018 loan/application records as summarized below. These steps were informed by \citet{gill2020hmda} but not identical to theirs:
\begin{itemize}
    \item Restrict to complete, submitted applications with `action\_taken' $\leq 3$ (loan originated, application approved but not accepted by applicant, or application denied).
    \item Create a binary-valued target variable representing approval by binarizing `action\_taken' (originated or approved $\to 1$, denied $\to 0$).
    \item Restrict to purchases of principal residences (the most consequential in terms of people's lives, i.e., not refinances or for investment) and single-family homes.
    \item Restrict to loans that are not ``special'' in any way: conventional loans, first mortgages, not manufactured homes, no non-amortizing features, etc.
    \item Drop columns that are not applicable for site-built single-family homes.
    \item Drop columns that are not applicable or recorded until the approval/origination decision is made. For example, loan costs (points and fees) are not recorded until a loan is originated, the type of entity that purchases a loan does not apply unless the loan is originated, etc. This is in keeping with the mortgage approval scenario that we consider.
    \item Drop all demographic columns to reflect laws that forbid lending decisions from explicitly depending on applicant demographics.
    \item Drop geographical columns that have too many unique values (e.g., census tract).
    \item Drop rows that have null or ``exempt'' values in key features such as loan-to-value ratio, debt-to-income ratio, property value, income.
    \item Take a 10\% sample of the records remaining after the above processing, to make experimentation less time- and resource-consuming.
    \item Split the subsampled dataset 80\%--20\% into training and test sets.
\end{itemize}

The dataset resulting from the above processing is imbalanced, with nearly 93\% of mortgage applications approved. Thus in training models, we balance the classes by weighting, either using the \texttt{class\_weight=`balanced'} option in scikit-learn or defining sample weights for the same purpose. On the test set, we evaluate balanced accuracy instead of accuracy.

\begin{figure}[t]
    \centering
    \includegraphics[width=\textwidth]{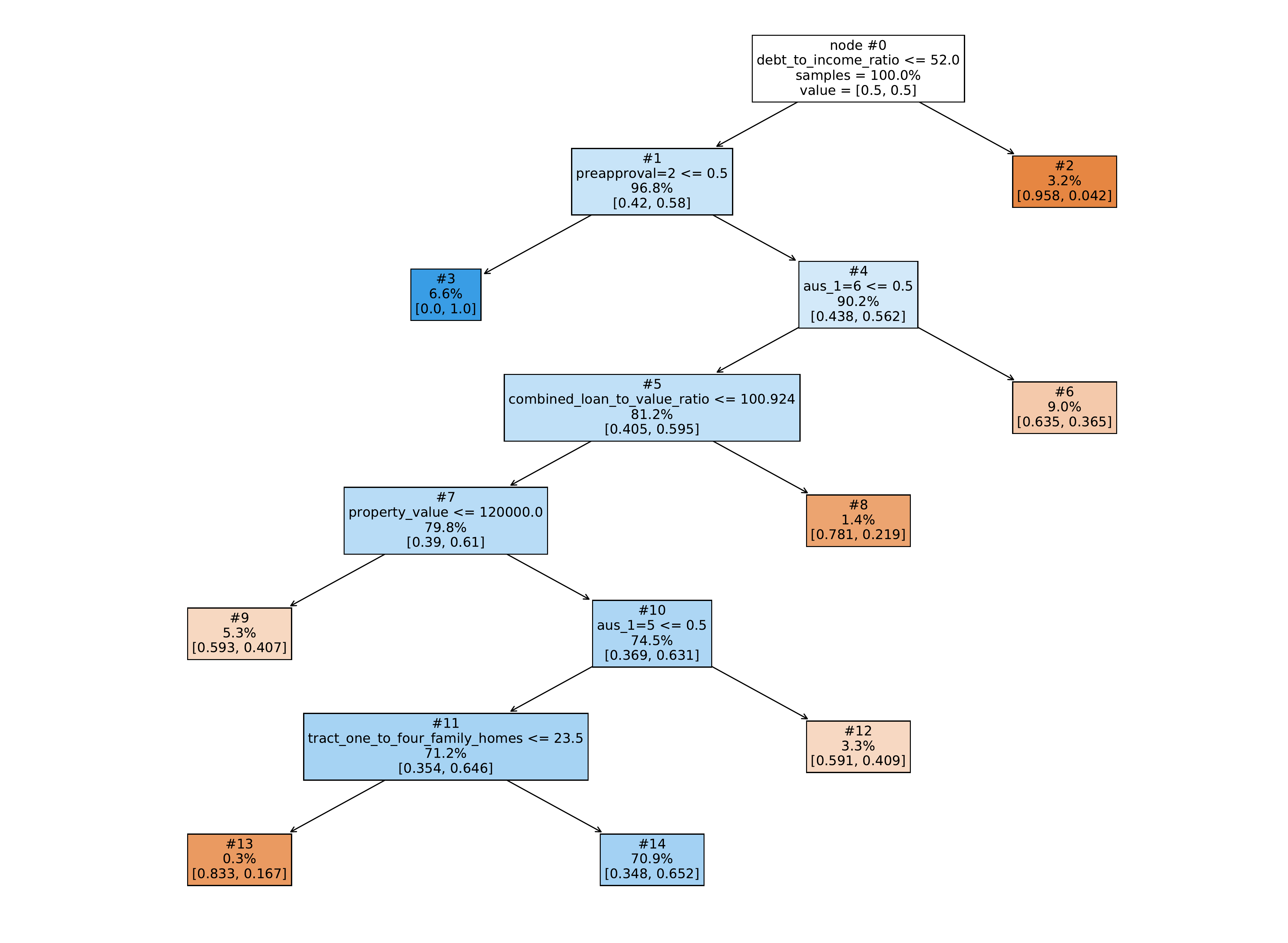}
    \caption{Decision tree reference model with 8 leaves for the HMDA dataset.}
    \label{fig:HMDA_DT}
\end{figure}

\paragraph{Reference model} Figure~\ref{fig:HMDA_DT} depicts the $8$-leaf DT reference model used in the experiments on the HMDA dataset. This DT has a test set balanced accuracy of $70.9\%$ and area under the receiver operating characteristic (AUC) of $0.750$. The top-level split is based on debt-to-income ratio, a measure often used in lending. Other common mortgage measures such as loan-to-value ratio, property value, and whether a preapproval was requested (value 2 means no) also appear. {`aus\_1'=5} and {`aus\_1'=6} refer to the automated underwriting system used to evaluate the application, with values 5 and 6 denoting ``other'' and ``not applicable''.

\begin{table}[ht]
    \small
    \centering
    \begin{tabular}{rrrrr}
    \toprule
    $C$&nonzeros&$\ell_1$ norm&bal. acc.&AUC\\
    \midrule
    1e-4&2&0.4&0.605&0.663\\
    3e-4&7&1.3&0.633&0.695\\
    1e-3&17&4.6&0.663&0.736\\
    3e-3&26&7.7&0.669&0.744\\
    1e-2&42&12.7&0.674&0.750\\
    3e-2&68&16.6&0.675&0.751\\
    1e-1&85&20.3&0.675&0.752\\
    3e-1&96&22.2&0.676&0.752\\
    1e+0&99&22.9&0.675&0.752\\
    3e+0&100&23.2&0.675&0.752\\
    \bottomrule
    \end{tabular}
    \caption{Number of nonzero coefficients, $\ell_1$ norm of coefficients, test set balanced accuracy, and area under the receiver operating characteristic (AUC) for logistic regression models on the HMDA dataset as a function of inverse $\ell_1$ penalty $C$.}
    \label{tab:HMDA_LR_stats}
\end{table}

\begin{table}[ht]
    \small
    \centering
    \begin{tabular}{rrr}
    \toprule
    max\_bins&bal. acc.&AUC\\
    \midrule
    4&0.669&0.743\\
    8&0.695&0.772\\
    16&0.711&0.785\\
    32&0.720&0.798\\
    64&0.723&0.799\\
    128&0.722&0.800\\
    256&0.722&0.800\\
    512&0.723&0.800\\
    1024&0.723&0.800\\
    \bottomrule
    \end{tabular}
    \caption{Test set balanced accuracy and AUC for Explainable Boosting Machines on the HMDA dataset as a function of \texttt{max\_bins} parameter.}
    \label{tab:HMDA_GAM_stats}
\end{table}

\paragraph{LR and GAM models} In Tables~\ref{tab:HMDA_LR_stats} and \ref{tab:HMDA_GAM_stats}, we show the values of inverse $\ell_1$ penalty $C$ and \texttt{max\_bins} that were used for LR and GAM respectively, as well as statistics of the resulting classifiers. Test set balanced accuracy and AUC increase and reach a plateau. For LR, we take the $\ell_1$ norm of the coefficients to be the main measure of smoothness as it depends on both the number of nonzero coefficients as well as their magnitudes, which both affect the extreme values attained in \eqref{eqn:maxAdditive}. Since the LR balanced accuracy and AUC are not higher than those of the reference DT, we focus less on LR in what follows. 

For EBM, based on Table~\ref{tab:HMDA_GAM_stats}, we select \texttt{max\_bins} $= 32$ as a representative model with balanced accuracy and AUC nearly equal to the maximum attainable values. 
Plots for this EBM model are shown in Figure \ref{fig:HMDA_GAM_functions}. First note that whether a preapproval was requested (value 1 means yes) is quite predictive of final approval. The shape functions for the four continuous features debt-to-income ratio, loan-to-value ratio, property value, and income are mostly monotonic and agree with domain knowledge. The log-odds of mortgage approval decrease as debt-to-income ratio and loan-to-value ratio increase, with abrupt drops around $50\%$ for debt-to-income ratio and at $80\%$ and $100\%$ for loan-to-value ratio. For property value and income, after a minimum value is reached, the shape function increases rapidly and then stays more or less constant for high property values and incomes.

\begin{figure}[t]
    \centering
    \includegraphics[width=0.495\textwidth]{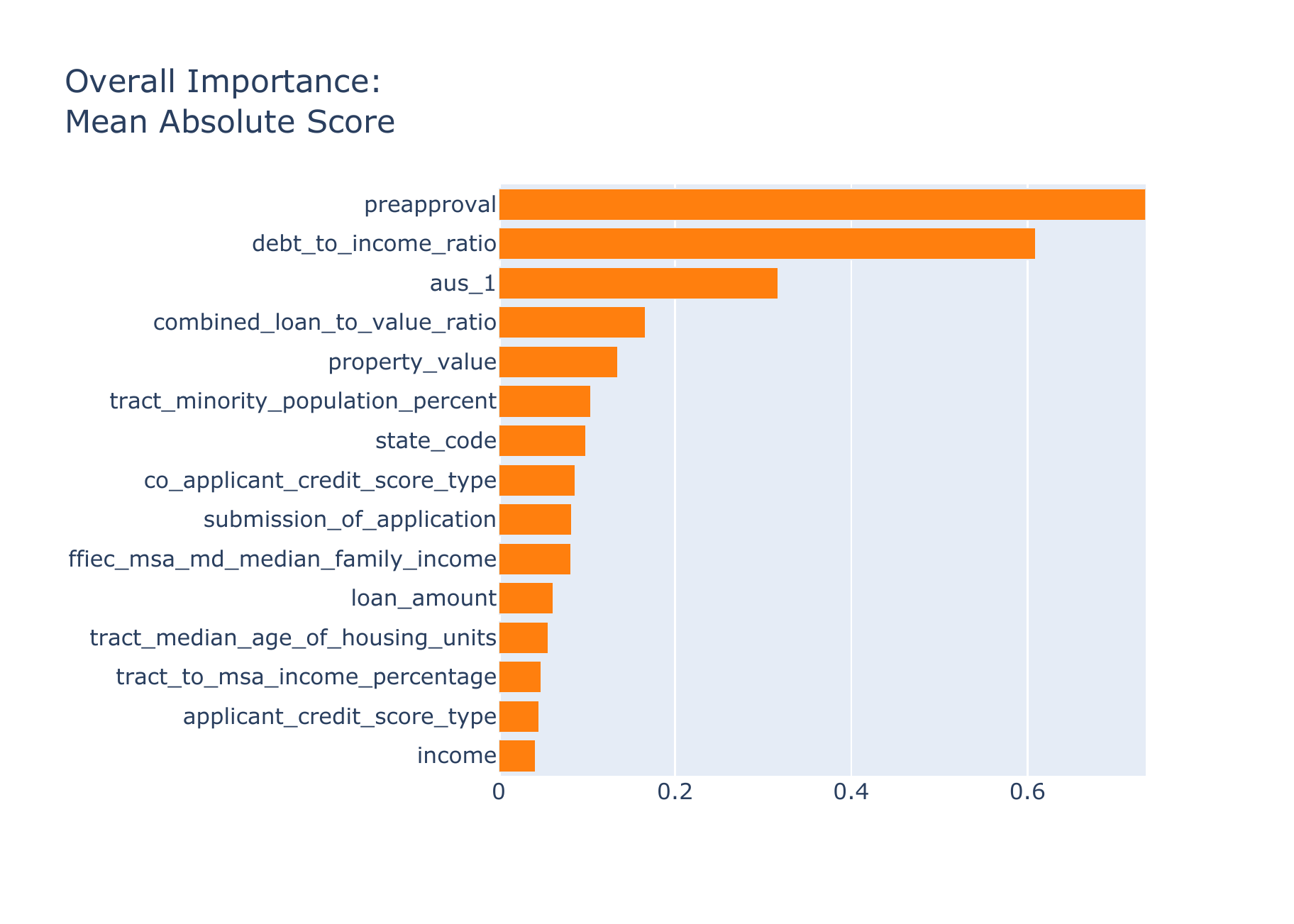}
    \includegraphics[width=0.495\textwidth]{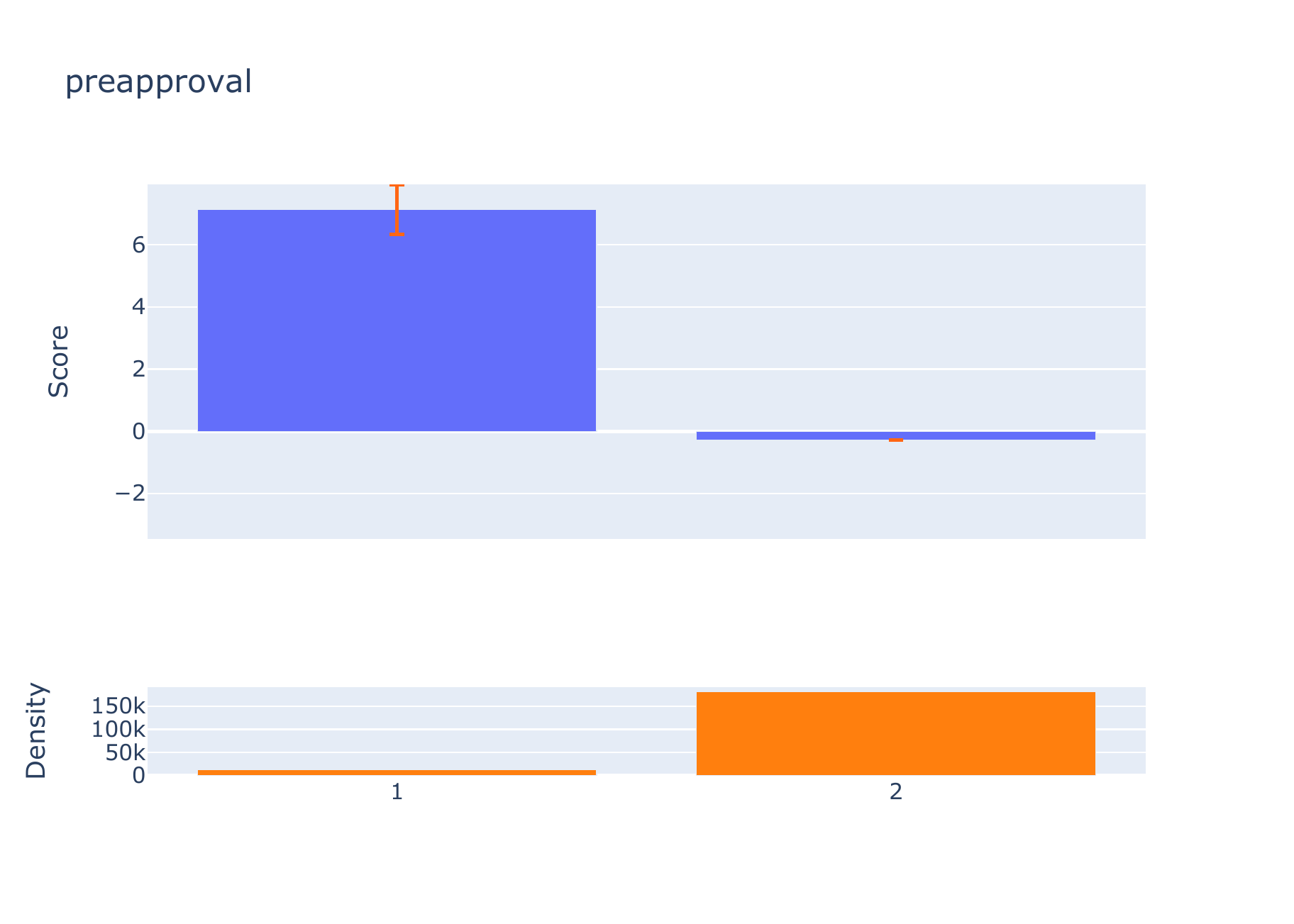}
    \includegraphics[width=0.495\textwidth]{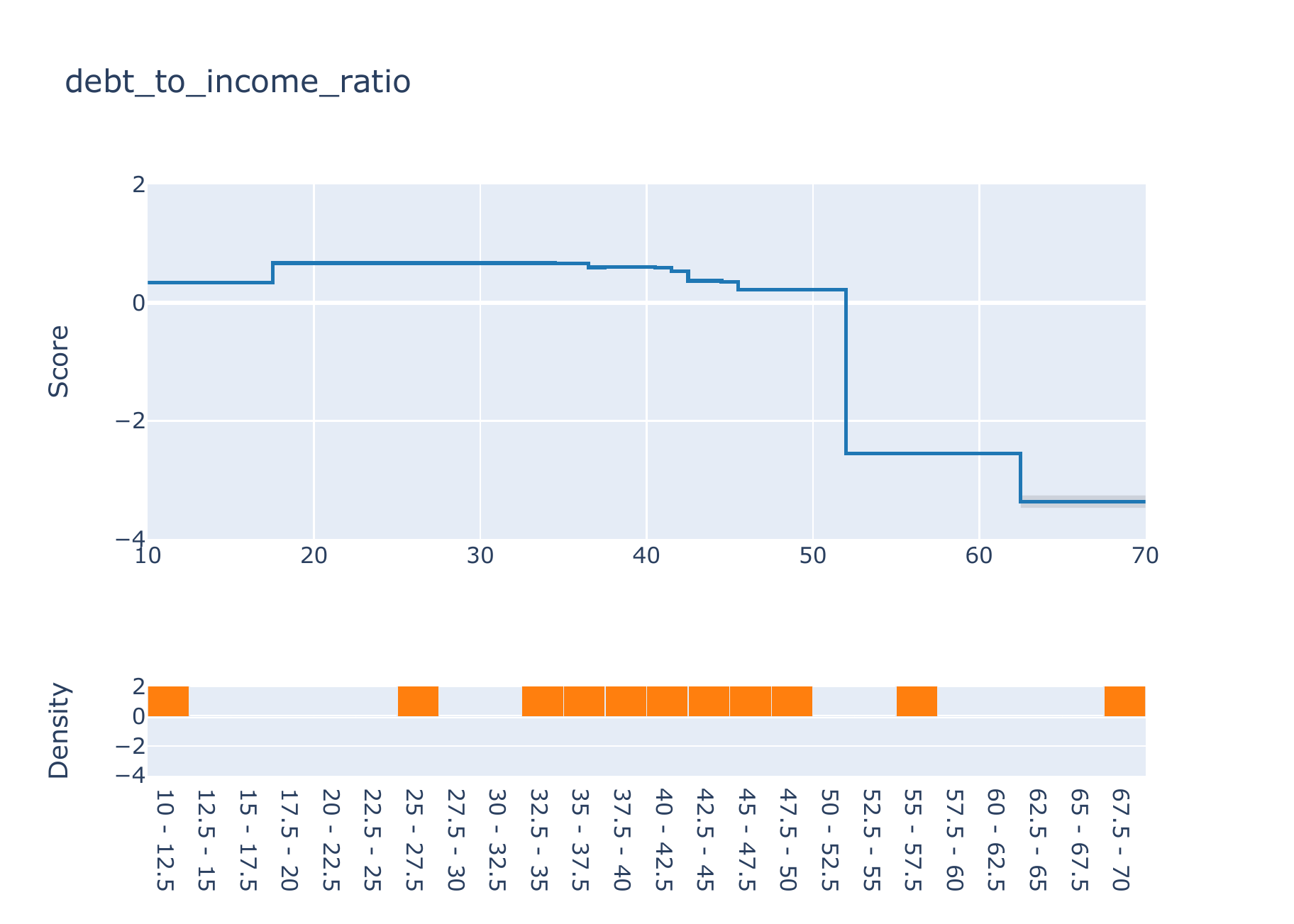}
    \includegraphics[width=0.495\textwidth]{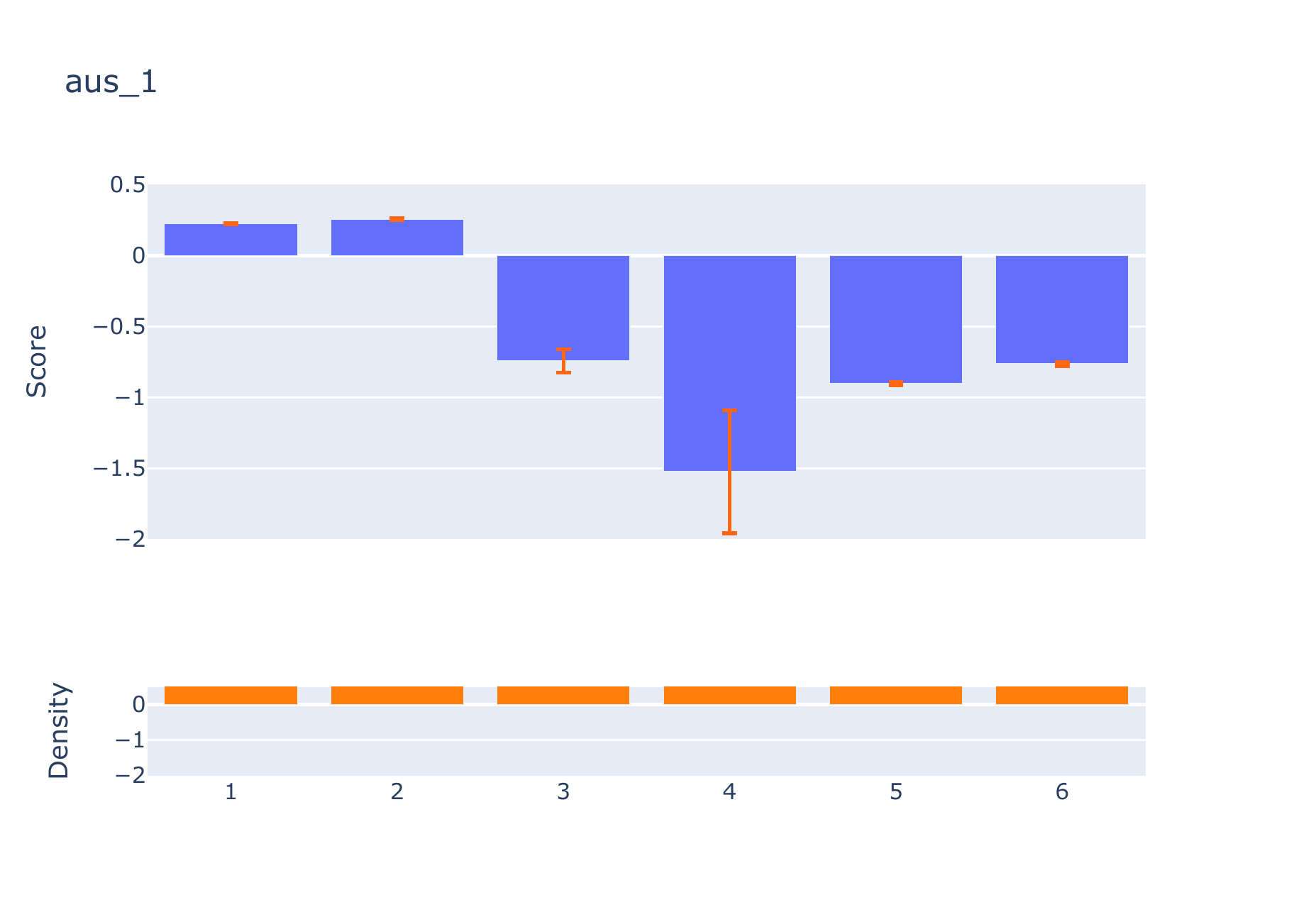}
    \includegraphics[width=0.495\textwidth]{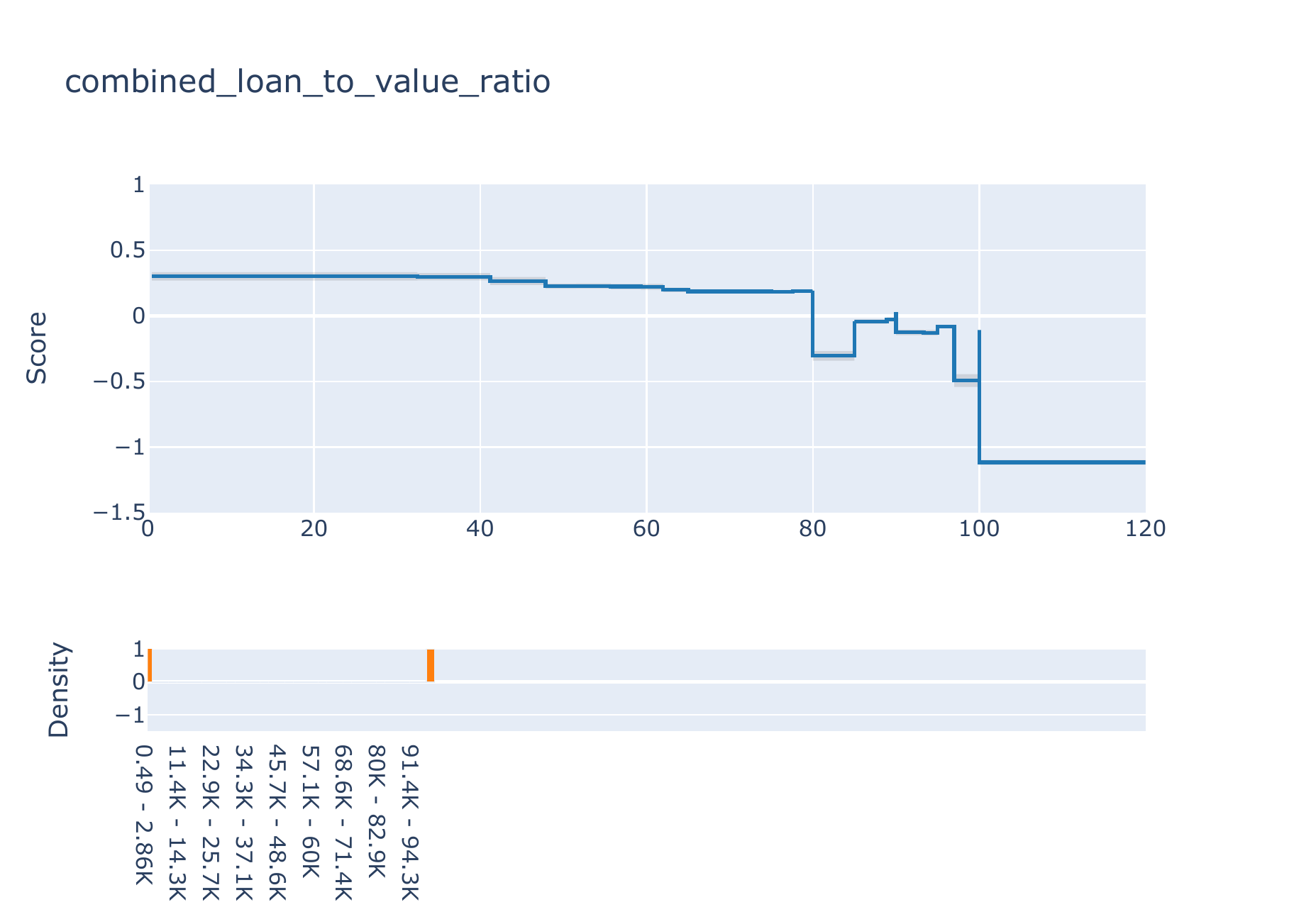}
    \includegraphics[width=0.495\textwidth]{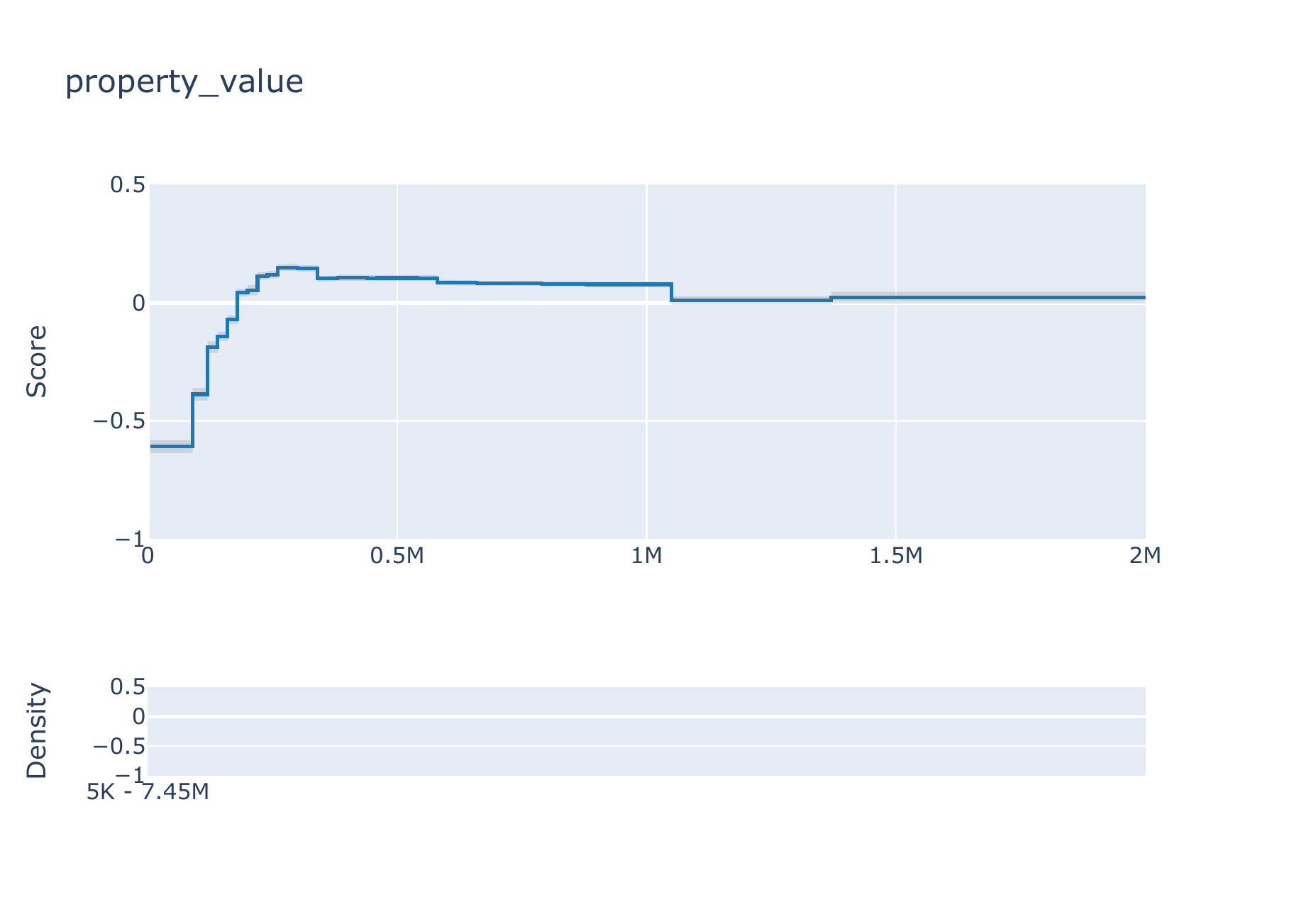}
    \includegraphics[width=0.495\textwidth]{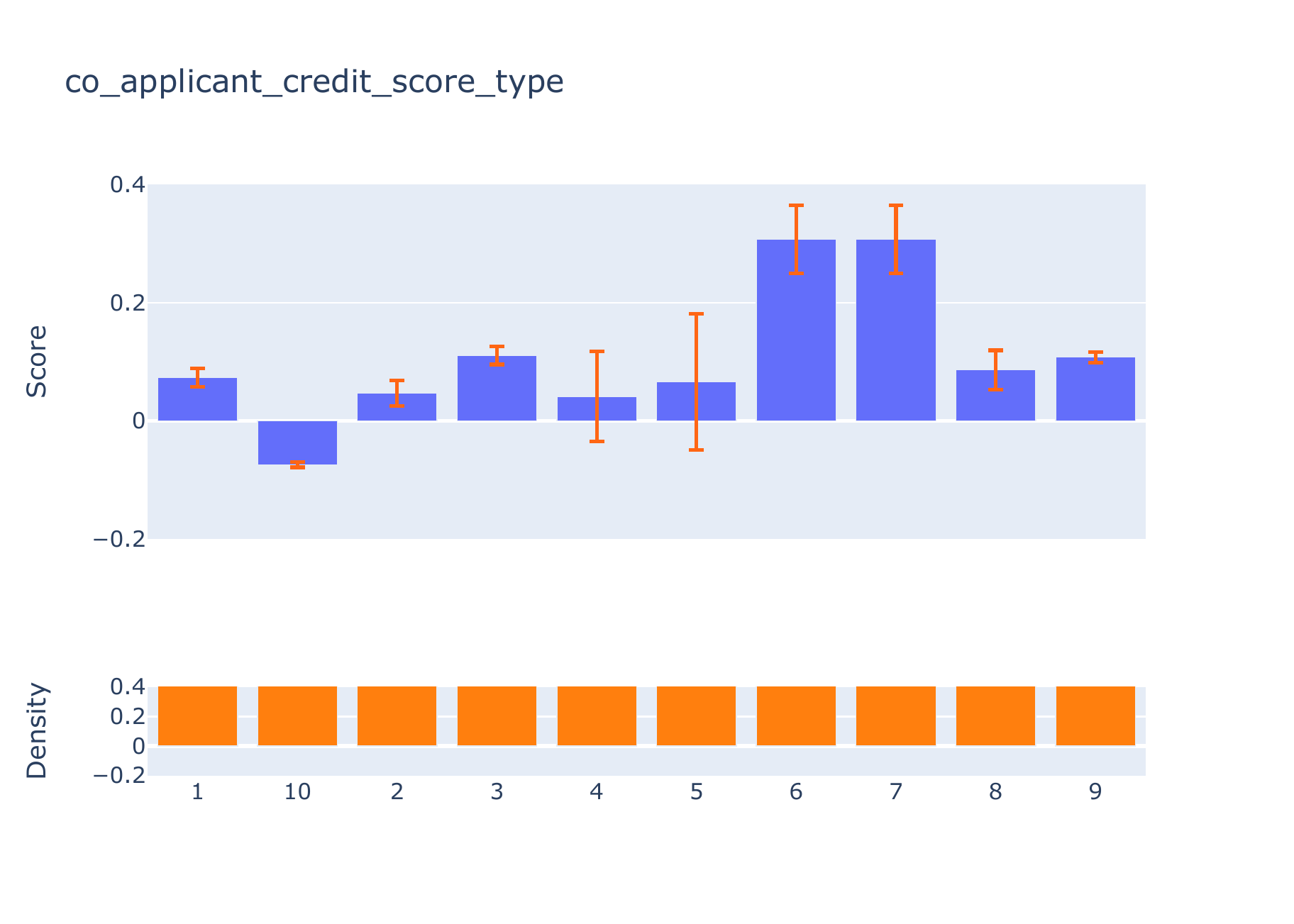}
    \includegraphics[width=0.495\textwidth]{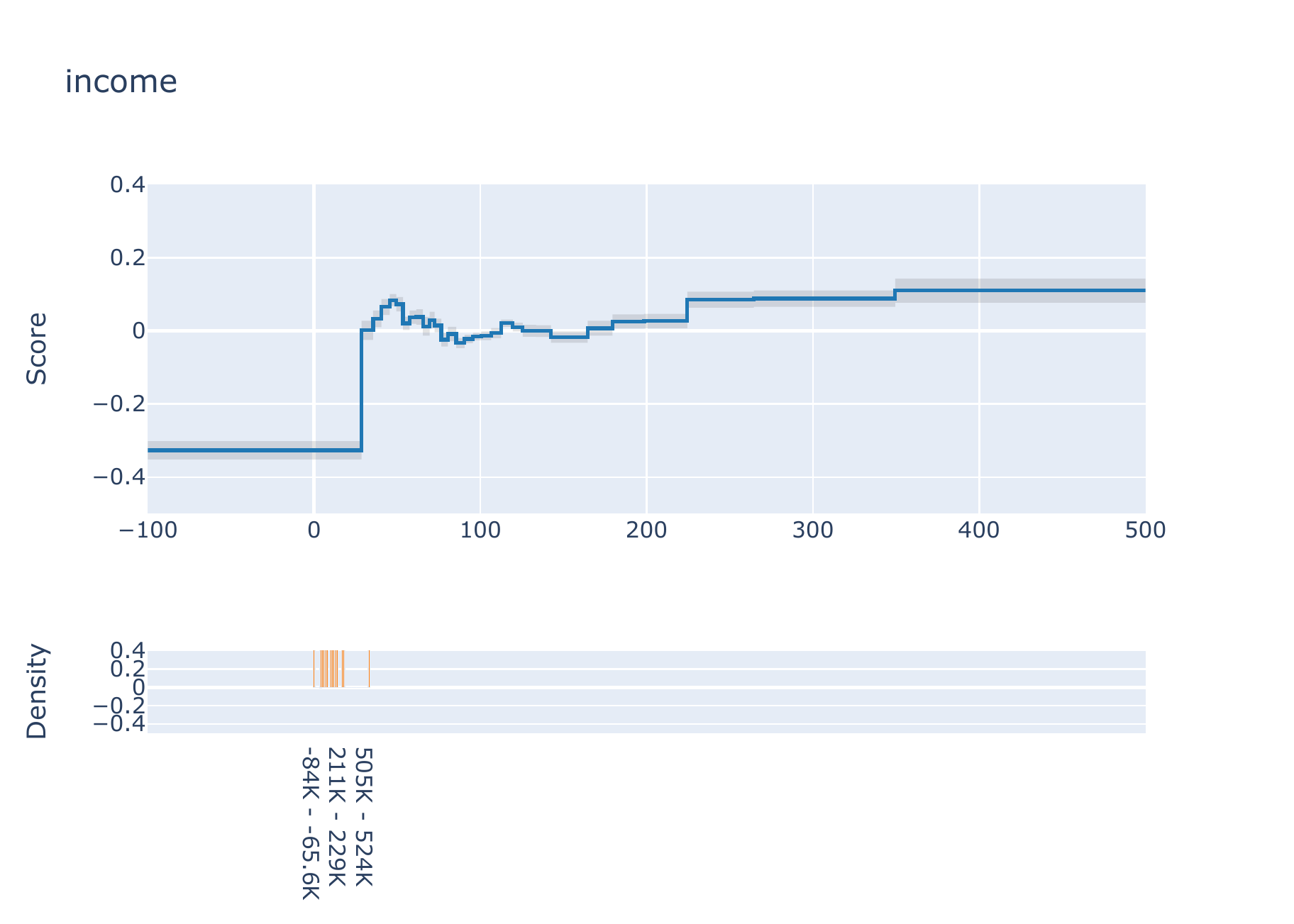}
    \caption{Feature importances and selected univariate functions $f_j$ for the Explainable Boosting Machine with \texttt{max\_bins} $= 32$ on the HMDA dataset.}
    \label{fig:HMDA_GAM_functions}
\end{figure}

\clearpage

\begin{table*}[t]
    \small
    \centering
    \begin{tabular}{lrrrrrr}
    \toprule
    $r$&preapproval&appl\_credit&loan\_to\_value&co\_appl\_credit&intro\_rate\_period&debt\_to\_income\\
    &&\_score\_type&(\%)&\_score\_type&&(\%)\\
    \midrule
    0.0&1&2&74.35&10&122.0&55.0\\
    0.1&1&3&[80. 80.]&1&[28.4 43.6]&[53.9 56.1]\\
    0.2&1&1&[29.33 32.42]&1&[57.  60.5]&[52.8 57.2]\\
    0.4&1&7&[ 4.34 32.42]&7&[329.7 360. ]&[52.  53.4]\\
    0.6&1&7&[ 0.49 32.42]&7&[314.5 360. ]&[52.  55.6]\\
    0.8&1&7&[ 0.49 32.42]&7&[299.3 300. ]&[52.  57.7]\\
    0.999&1&7&[ 0.49 32.42]&7&[120.5 163. ]&[52.  53.9]\\
    1.001&1&6&[ 0.49 32.42]&6&[120.5 159.9]&[52.  62.5]\\
    1.2&1&6&[ 0.49 32.42]&6&[120.5 151. ]&[56.9 62.5]\\
    1.4&1&6&[ 0.49 32.42]&6&[120.5 163. ]&[52.  57.3]\\
    1.6&1&6&[ 0.49 32.42]&6&[120.5 163. ]&[52.  60.5]\\
    1.8&1&6&[ 0.49 32.42]&6&[120.5 163. ]&[52.  52.7]\\
    2.0&1&6&[ 0.49 32.42]&6&[120.5 163. ]&[52.  62.5]\\
    $\infty$&1&6&[ 0.49 32.42]&6&[120.5 163. ]&[52.  62.5]\\
    \bottomrule
    \end{tabular}
    \caption{Feature values that result in most positive difference in predicted probabilities between an Explainable Boosting Machine $f$ (\texttt{max\_bins} $= 32$) and an $8$-leaf decision tree reference model $f_0$ on the HMDA dataset. The 6 features that contribute most  are shown as a function of certification set radius $r$. For radius $r > 0$, the maximizing values of continuous features form an interval because the corresponding functions $f_j$ are piecewise constant.}
    \label{tab:HMDA_GAM_max}
\end{table*}

\paragraph{Feature combinations that maximize deviation} Table~\ref{tab:HMDA_GAM_max} shows feature values that yield the most positive difference between the predicted probabilities of the EBM $f$ with \texttt{max\_bins}$= 32$ and the $8$-leaf DT $f_0$. This table corresponds to the second ``Conflict between $f$, $f_0$'' example in Section~\ref{sec:expt}. The 6 features that contribute most to the deviation are shown. These contributions are determined using \eqref{eqn:maxAdditiveSep}; since the maximum deviation occurs in one of the $\ell_\infty$ ball-leaf intersections and this intersection is a Cartesian product, the feature-wise decomposition in \eqref{eqn:maxAdditiveSep} applies. The contribution of feature $j$ is then $\max_{x_j \in \mathcal{S}_j} f_j(x_j)$. We take an average of the contributions over $r$ to give a single ranking of features for all $r$. The same method is used to determine feature contributions and choose the top 6 features for Table~\ref{tab:HMDA_GAM_min}.

As mentioned in Section~\ref{sec:expt}, all solutions in Table~\ref{tab:HMDA_GAM_max} have `preapproval'=1 and debt-to-income ratios $>52\%$ that place them in leaf 2 in Figure~\ref{fig:HMDA_DT}. The latter results in a low predicted probability of approval from $f_0$ while the former makes a large positive contribution to the probability from $f$ (see Figure~\ref{fig:HMDA_GAM_functions}). The values of the other features also make increasingly larger positive contributions to $f$ as $r$ increases. Loan-to-value ratio decreases, while co-applicant credit score type moves from type 10 (no co-applicant, hence weaker application) to increasingly favorable score types (1, 7, 6, see Figure~\ref{fig:HMDA_GAM_functions}); applicant credit score is similar. This behavior is similar to the movement toward extreme points seen in Table~\ref{tab:HMDA_GAM_min}.

\begin{figure}[ht]
    \centering
    \begin{subfigure}[b]{0.33\textwidth}
        \includegraphics[width=\textwidth]{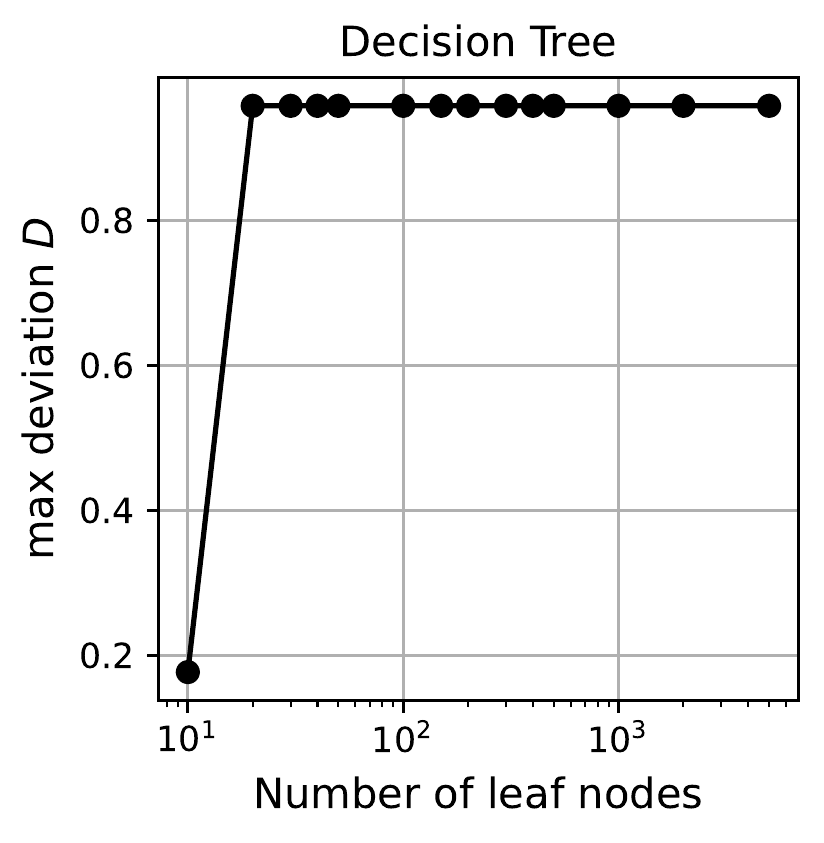}
        \caption{}
        \label{fig:HMDA_DT_leaves}
    \end{subfigure}
    \begin{subfigure}[b]{0.33\textwidth}
        \includegraphics[width=\textwidth]{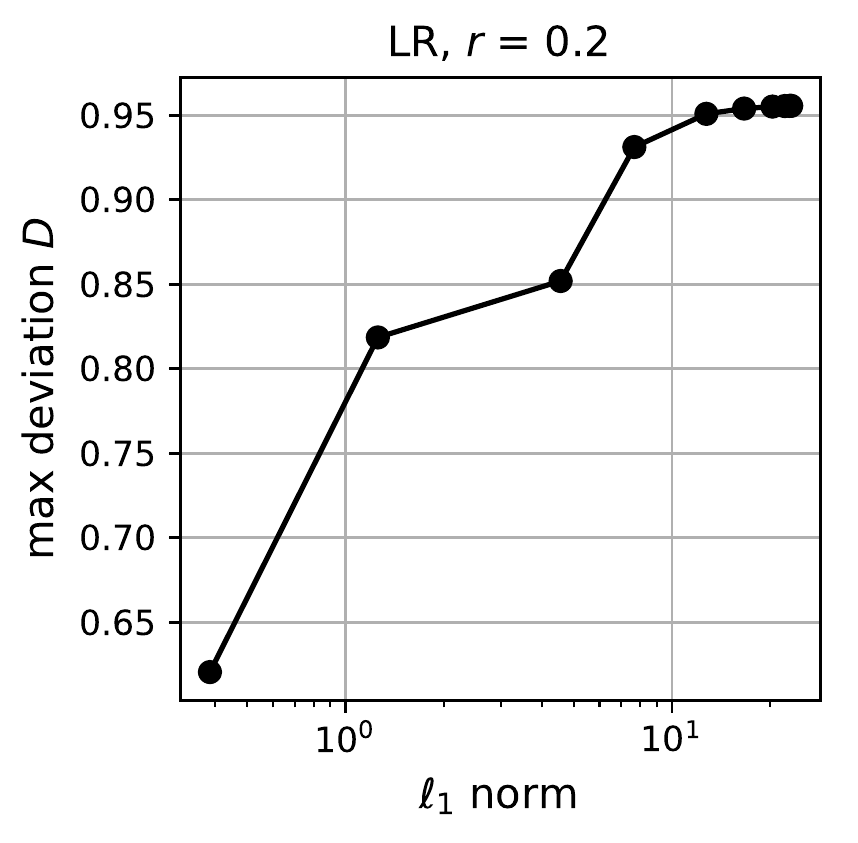}
        \caption{}
        \label{fig:HMDA_LR_C}
    \end{subfigure}
    \begin{subfigure}[b]{0.33\textwidth}
        \includegraphics[width=\textwidth]{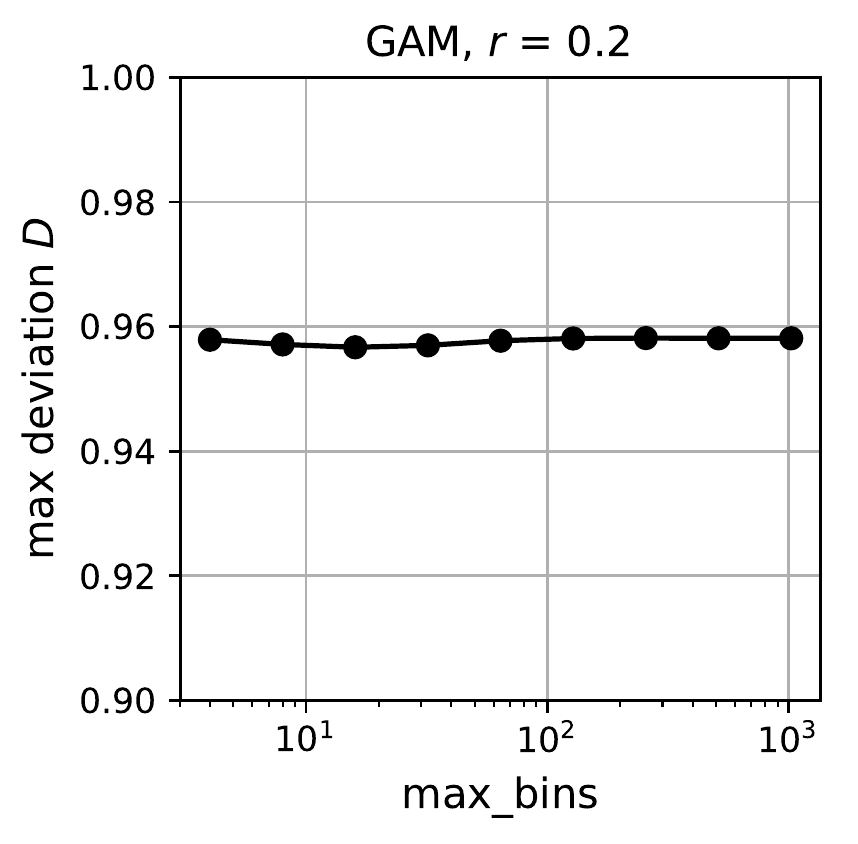}
        \caption{}
        \label{fig:HMDA_GAM_maxBins}
    \end{subfigure}
    \begin{subfigure}[b]{0.5\textwidth}
        \includegraphics[width=\textwidth]{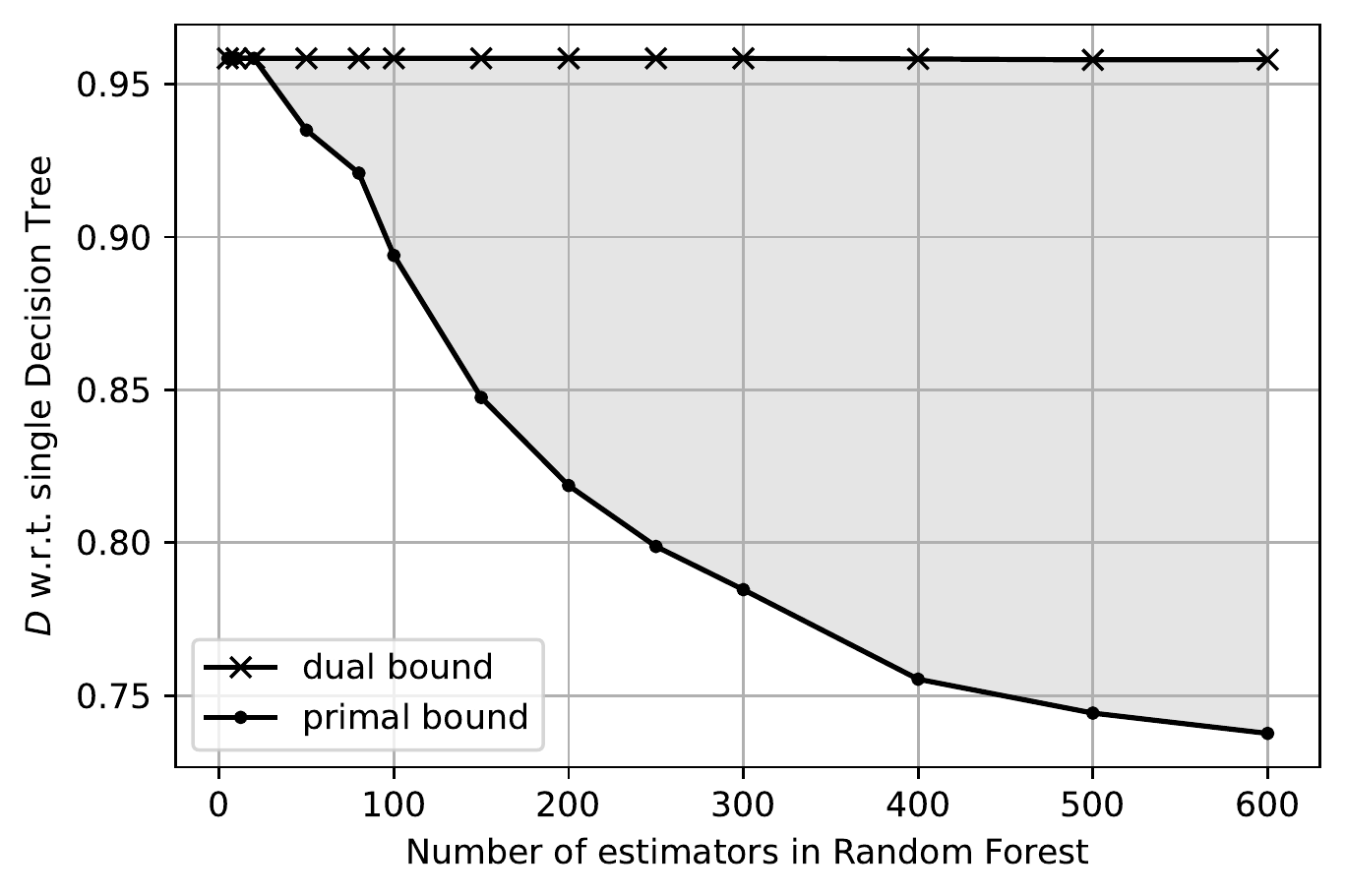}
        \caption{}
        \label{fig:HMDA_RF_trees}
    \end{subfigure}
    \caption{Maximum deviation $D$ on the HMDA dataset as a function of model complexity for (a) DT (number of leaves), (b) LR ($\ell_1$ norm), (c) GAM (\texttt{max\_bins}), and (d) RF (number of estimators).
    }
    \label{fig:HMDA_all}
\end{figure}

\paragraph{Dependence on model complexity}
Figure~\ref{fig:HMDA_all} shows the dependence of maximum deviation on the complexity of model $f$, quantified by the number of leaves for DTs, coefficient $\ell_1$ norm for LR, \texttt{max\_bins} for EBM, and the number of estimators (trees) for RF. 
The DT and RF cases demonstrate the methods in Section \ref{sec:model:ensemble}, specifically Algorithm~\ref{alg:tree-ensembles:enum}, where in the RF case, the algorithm may only provide bounds after a time limit of two hours. The plots show that maximum deviation may or may not increase with model complexity. 
In Figure~\ref{fig:HMDA_DT_leaves}, the deviation is small for a $10$-leaf DT and increases rapidly.
Figures~\ref{fig:HMDA_LR_C} and \ref{fig:HMDA_GAM_maxBins} 
indicate that maximum deviation is sensitive to the $\ell_1$ norm of LR models but not to the \texttt{max\_bins} parameter of EBMs. The latter may increase the resolution of the EBM shape functions but not their dynamic range.

\begin{figure}[ht]
    \centering
    \begin{subfigure}[b]{0.33\textwidth}
        \includegraphics[width=\textwidth]{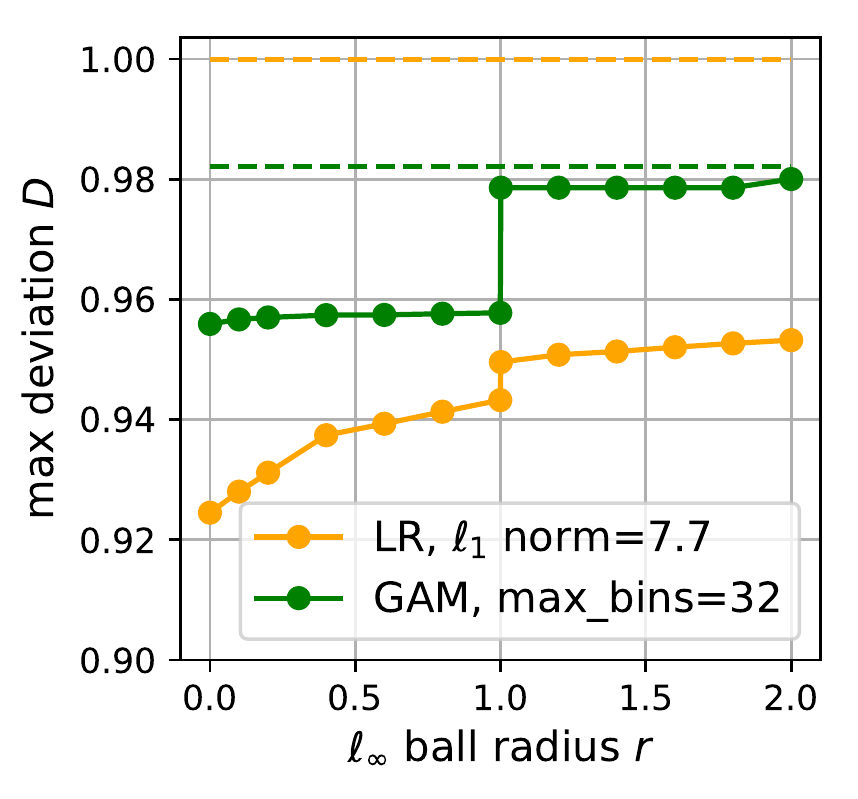}
        \caption{}
        \label{fig:HMDA_LR_GAM_r}
    \end{subfigure}\\
    \begin{subfigure}[b]{0.495\textwidth}
        \includegraphics[width=\textwidth]{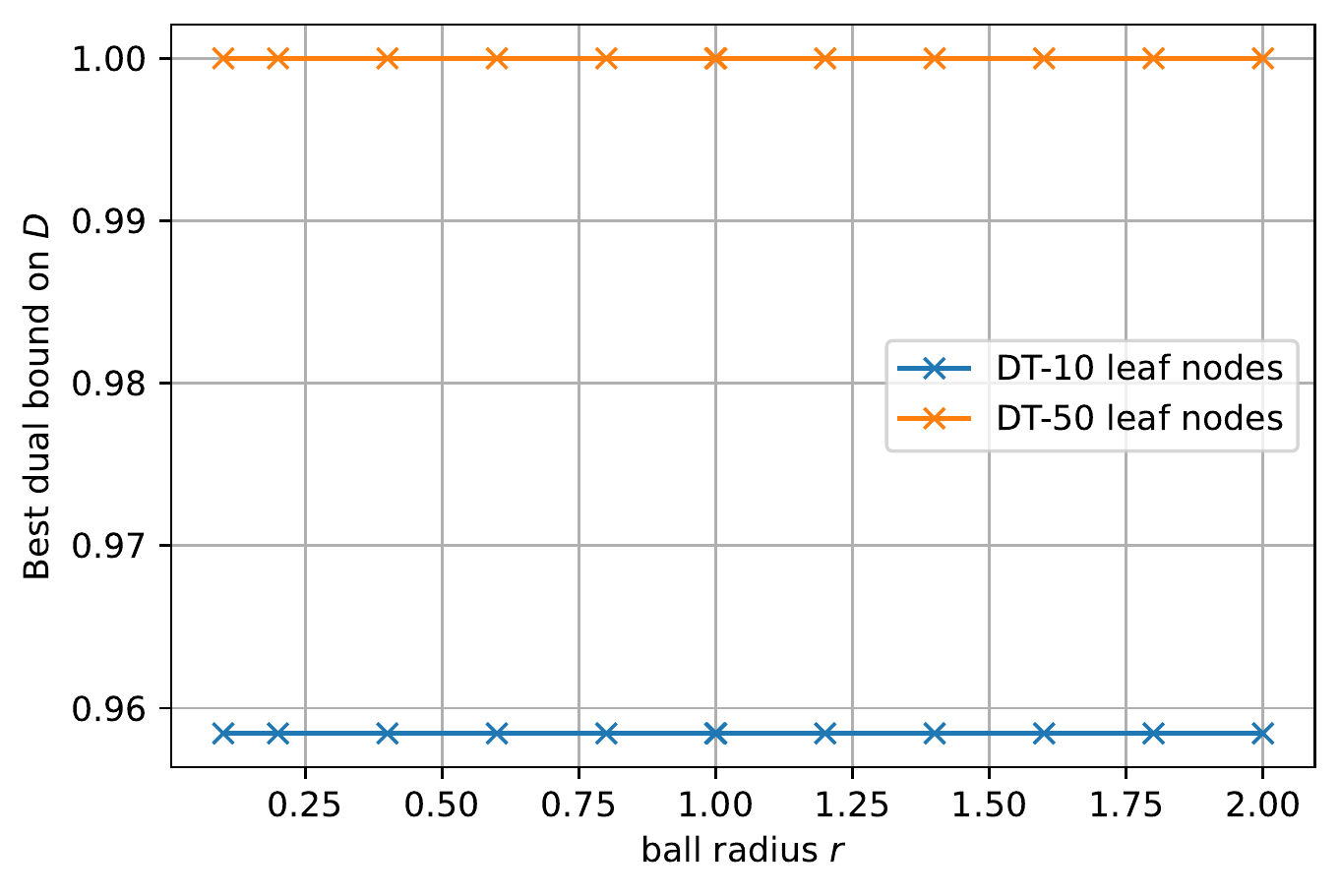}
        \caption{}
        \label{fig:HMDA_DT_r}
    \end{subfigure}
    \begin{subfigure}[b]{0.495\textwidth}
        \includegraphics[width=\textwidth]{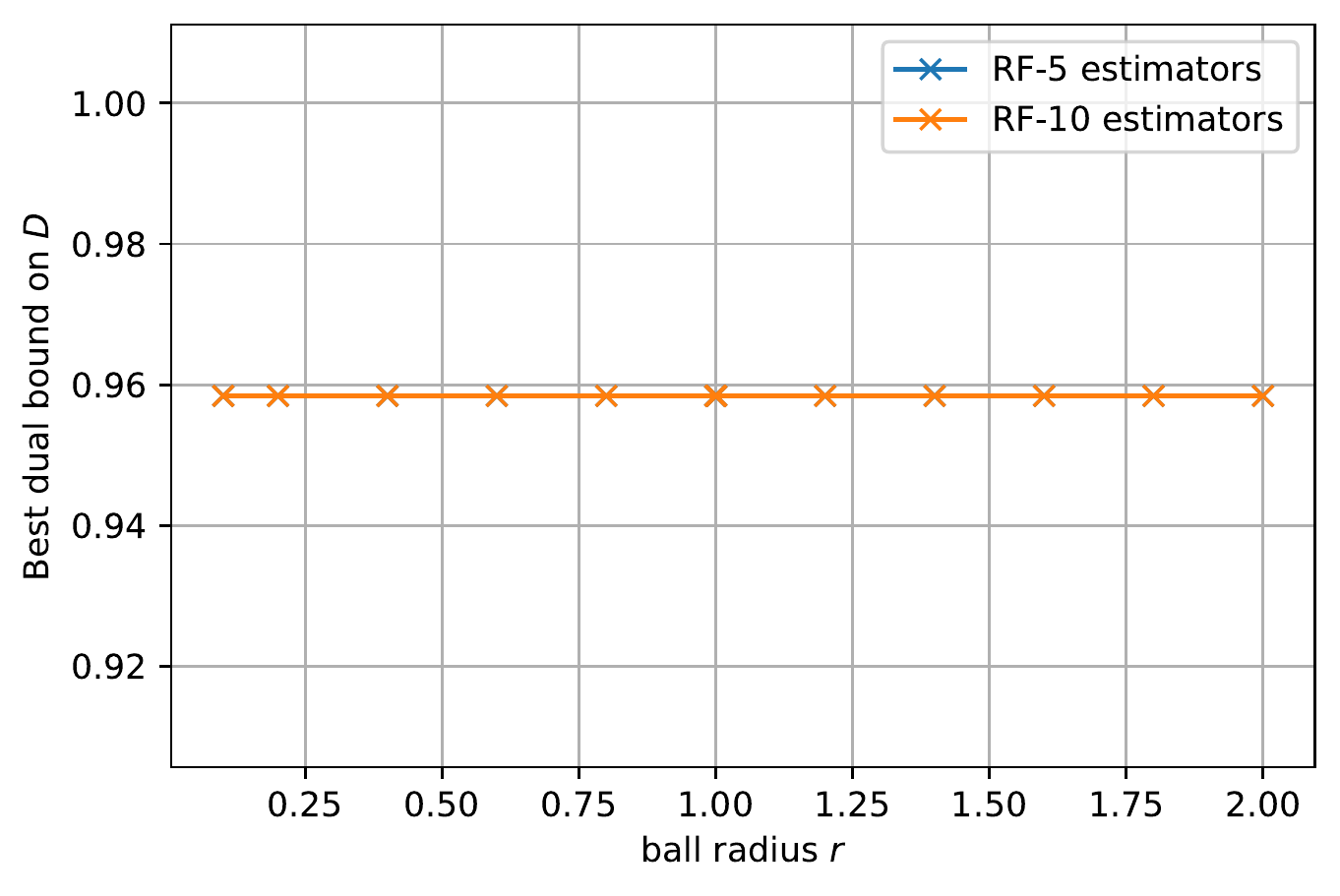}
        \caption{}
        \label{fig:HMDA_RF_r}
    \end{subfigure}
    \caption{Maximum deviation $D$ on the HMDA dataset as a function of certification set radius $r$ for (a) LR and GAM, (b) DT (10 and 50 leaves), (c) RF (5 and 10 estimators). Dashed lines in (a) indicate the $r \to \infty$ asymptote of the curve of the same color.}
    \label{fig:HMDA_r}
\end{figure}

\paragraph{Dependence on certification set size}
Figure~\ref{fig:HMDA_r} shows the dependence of maximum deviation on the certification set radius $r$. For LR and GAM, the maximum deviation is greater for $r > 0$ than for $r = 0$, showing that evaluation on a finite test set may not be sufficient and infinite certification sets (with $r > 0$) should be considered, especially to account for unexpected, out-of-distribution deviations. There are jumps at $r = 1$ because this is the radius that permits values of categorical features of test set points (the ball centers in \eqref{eqn:unionBalls}) to change to any other value. In the case of GAM, this jump is sufficient for the deviation to equal that for $r = \infty$ ($\certset = \mathcal{X}$, dashed line in figure). On the other hand, the deviation for DT and RF remains constant as a function of $r$.

\begin{figure}[ht]
    \centering
    \includegraphics[width=0.495\textwidth]{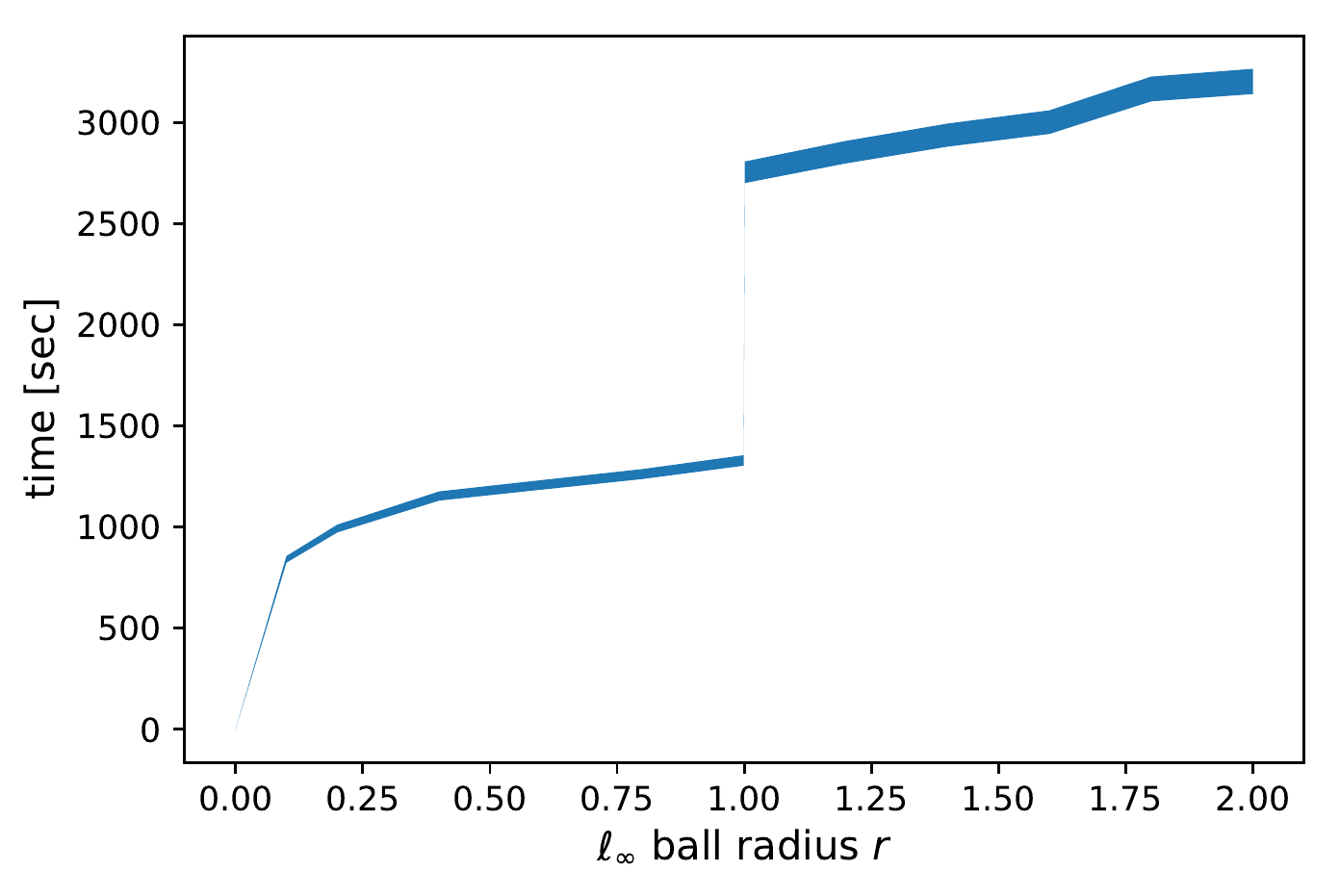}
    \includegraphics[width=0.495\textwidth]{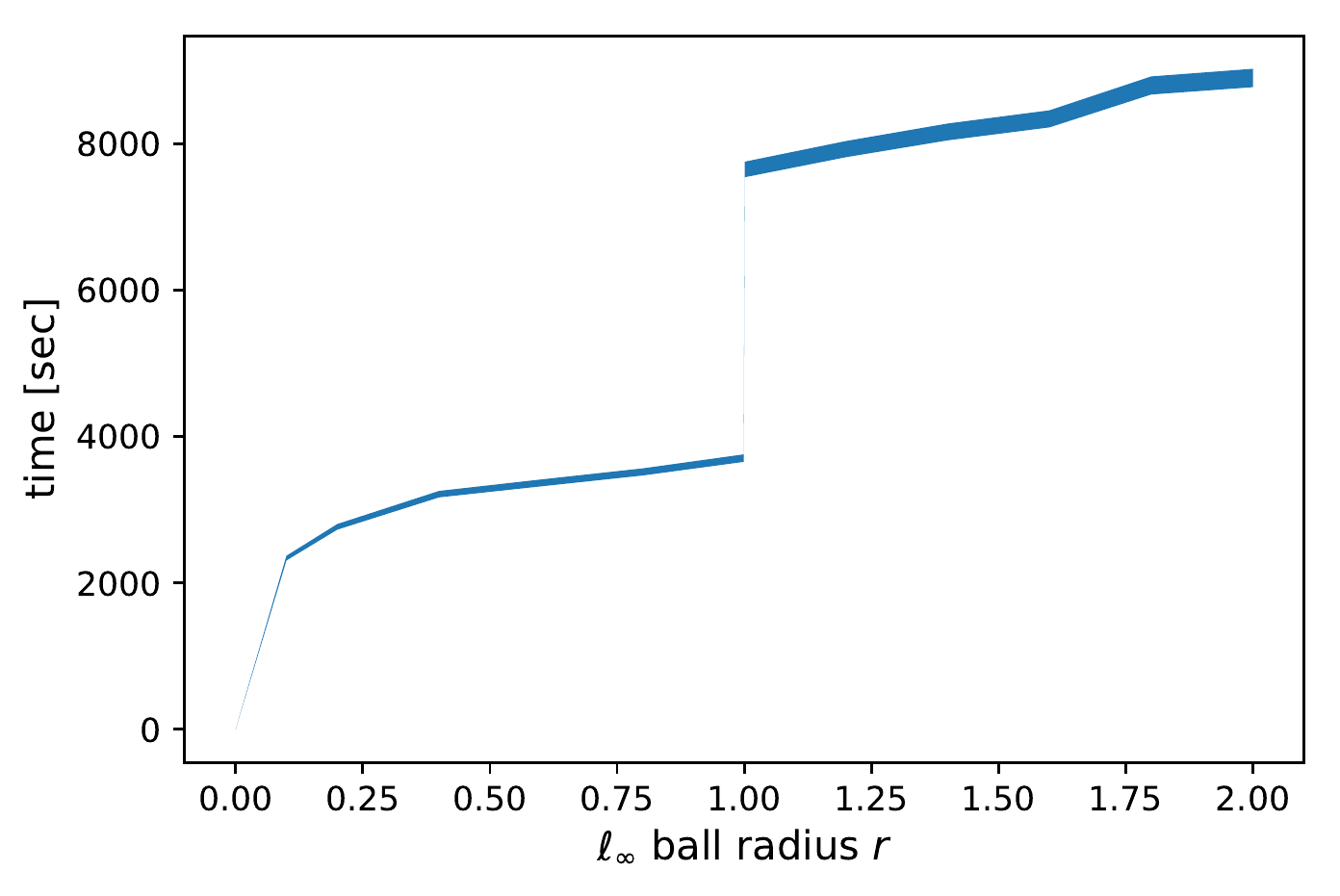}
    \caption{Time to compute maximum deviation for logistic regression models (left) and Explainable Boosting Machines (right) on the HMDA dataset as a function of certification set size (radius $r$). The filled-in region shows the min-max variation with model complexity ($\ell_1$ norm for LR, \texttt{max\_bins} for EBM).}
    \label{fig:HMDA_time}
\end{figure}

\paragraph{Running time} Figure~\ref{fig:HMDA_time} shows the time required to compute the maximum deviation for LR and GAM on the HMDA dataset. These times were obtained using a single $2.0$ GHz core of a server with $64$ GB of memory (only a small fraction of which was used) running Ubuntu 16.04 (64-bit).  
The times increase with the $\ell_\infty$ ball radius $r$ because of the increasing number of ball-leaf intersections that become non-empty and hence need to be evaluated. The time for $r = 0$ is minimal because this case requires only model evaluation over the finite test set, as mentioned. The jumps at $r = 1$ are due again to the ability of categorical features to change values, leading to an increase in ball-leaf intersections. The filled-in regions show that there was little variation due to different $\ell_1$ norms for LR or \texttt{max\_bins} for GAM. This was most likely because of a vectorized implementation, which operates on all LR coefficients or all GAM bins at once (i.e., without a for loop).

\clearpage

\subsection{Adult Income dataset}
\label{sec:expt_add:Adult}

We use the given partition of the Adult Income dataset into training and test sets.

\begin{figure}[t]
    \centering
    \includegraphics[width=\textwidth]{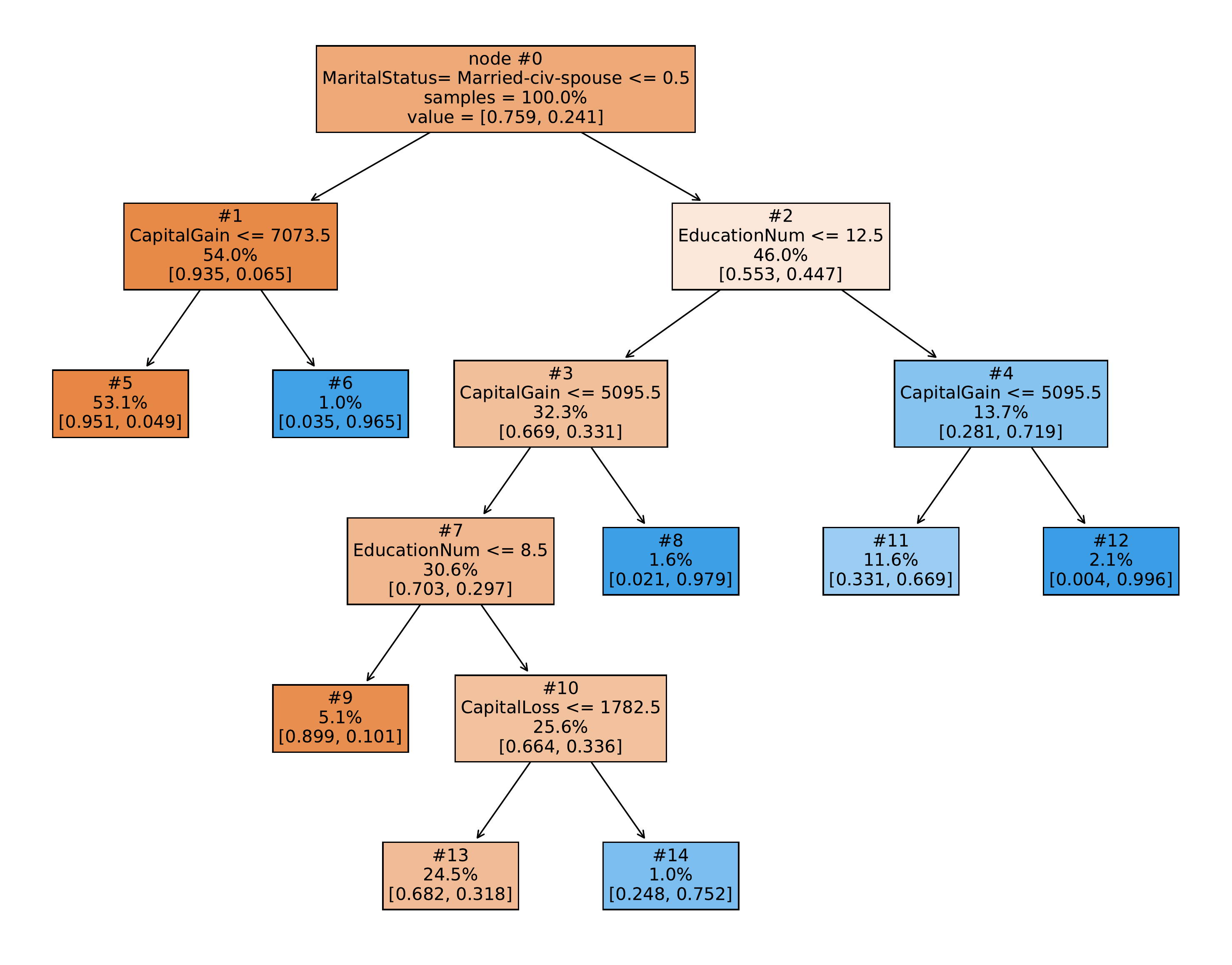}
    \caption{Decision tree reference model with 8 leaves for the Adult Income dataset.}
    \label{fig:Adult_DT}
\end{figure}

\paragraph{Reference model} Figure~\ref{fig:Adult_DT} depicts the $8$-leaf DT reference model used in the experiments on the Adult Income dataset. This DT has $85.0\%$ accuracy on the test set. The root node separates individuals based on whether the marital status is \textit{Married-civ-spouse}. The remaining splits divide the population into those with high and low education, high and low capital gains, and high and low capital losses. In particular, having high capital gains or losses is a good predictor of high income ($> \$50000$).

\begin{table}[ht]
    \small
    \centering
    \begin{tabular}{rrrrr}
    \toprule
    $C$&nonzeros&$\ell_1$ norm&accuracy&AUC\\
    \midrule
    3e-4&1&0.2&0.764&0.715\\
    1e-3&6&2.6&0.825&0.885\\
    3e-3&7&4.7&0.840&0.895\\
    1e-2&16&7.4&0.848&0.900\\
    3e-2&30&12.9&0.852&0.905\\
    1e-1&38&17.3&0.853&0.905\\
    3e-1&62&25.3&0.853&0.905\\
    1e+0&83&40.7&0.852&0.905\\
    3e+0&92&54.6&0.852&0.905\\
    1e+1&101&65.1&0.852&0.904\\
    3e+1&105&73.5&0.852&0.904\\
    1e+2&107&77.6&0.852&0.904\\
    3e+2&107&79.1&0.852&0.904\\
    \bottomrule
    \end{tabular}
    \caption{Number of nonzero coefficients, $\ell_1$ norm of coefficients, test set accuracy, and AUC for logistic regression models on the Adult Income dataset as a function of inverse $\ell_1$ penalty $C$.}
    \label{tab:Adult_LR_stats}
\end{table}

\begin{table}[ht]
    \small
    \centering
    \begin{tabular}{rrr}
    \toprule
    \texttt{max\_bins}&accuracy&AUC\\
    \midrule
    4&0.858&0.910\\
    8&0.862&0.915\\
    16&0.865&0.920\\
    32&0.870&0.924\\
    64&0.871&0.925\\
    128&0.871&0.925\\
    256&0.872&0.925\\
    512&0.871&0.925\\
    1024&0.871&0.925\\
    \bottomrule
    \end{tabular}
    \caption{Test set accuracy and AUC for Explainable Boosting Machines on the Adult Income dataset as a function of \texttt{max\_bins} parameter.}
    \label{tab:Adult_GAM_stats}
\end{table}

\paragraph{LR and GAM models} Tables~\ref{tab:Adult_LR_stats} and \ref{tab:Adult_GAM_stats} show the values of $C$ and \texttt{max\_bins} used for LR and GAM respectively together with statistics of the resulting classifiers. Based in part on Tables~\ref{tab:Adult_LR_stats} and \ref{tab:Adult_GAM_stats}, we select $C = 0.01$ and \texttt{max\_bins} $= 8$ as representative models that remain simple and have accuracies and AUCs not far from the maximum attainable. Plots for these two models are shown in Figures~\ref{fig:Adult_LR_coef} and \ref{fig:Adult_GAM_functions}.

\begin{figure}[ht]
    \centering
    \includegraphics[width=0.8\textwidth]{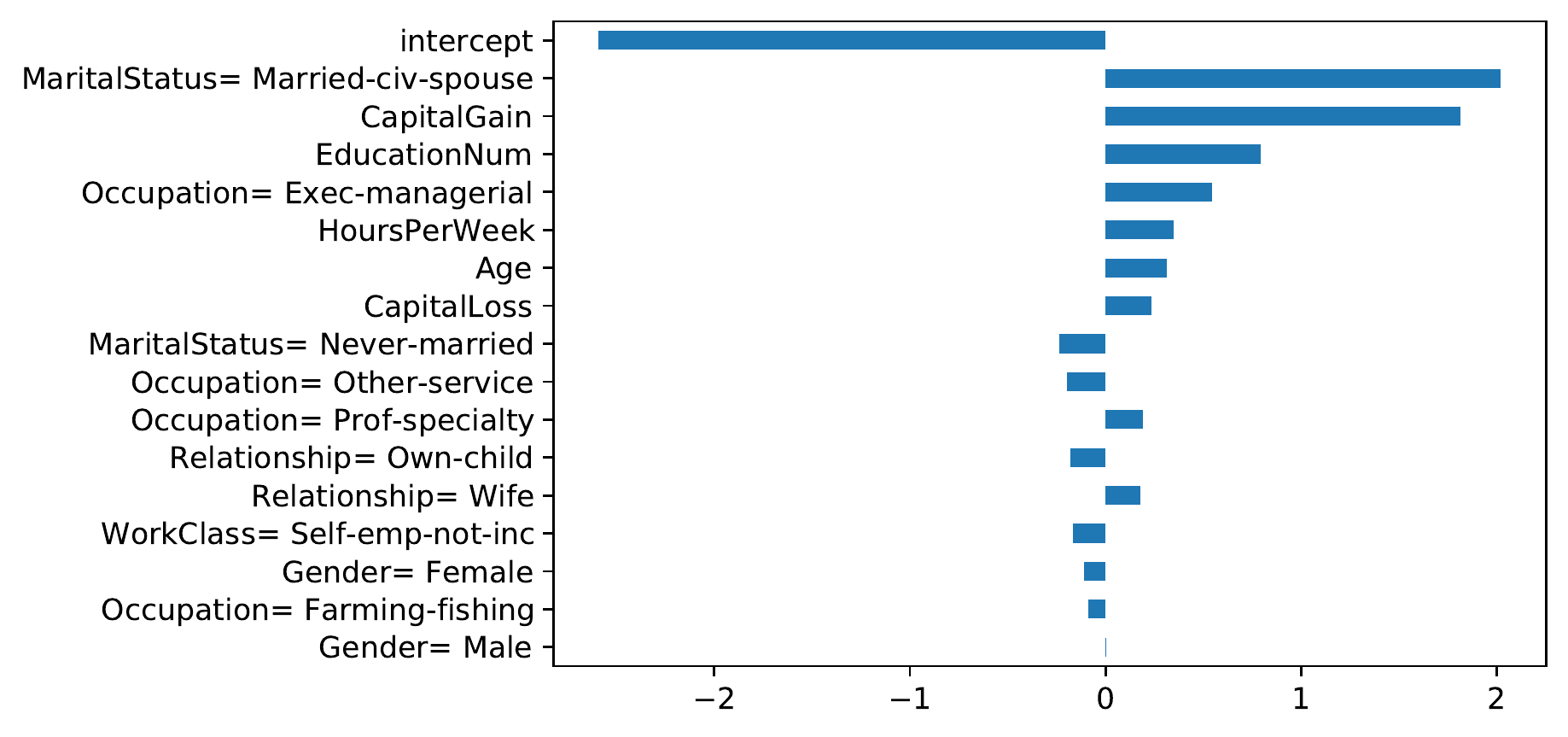}
    \caption{Coefficient values of the logistic regression model with $C = 0.01$ (16 nonzeros) for the Adult Income dataset.}
    \label{fig:Adult_LR_coef}
\end{figure}

\begin{figure}[t]
    \centering
    \includegraphics[width=0.495\textwidth]{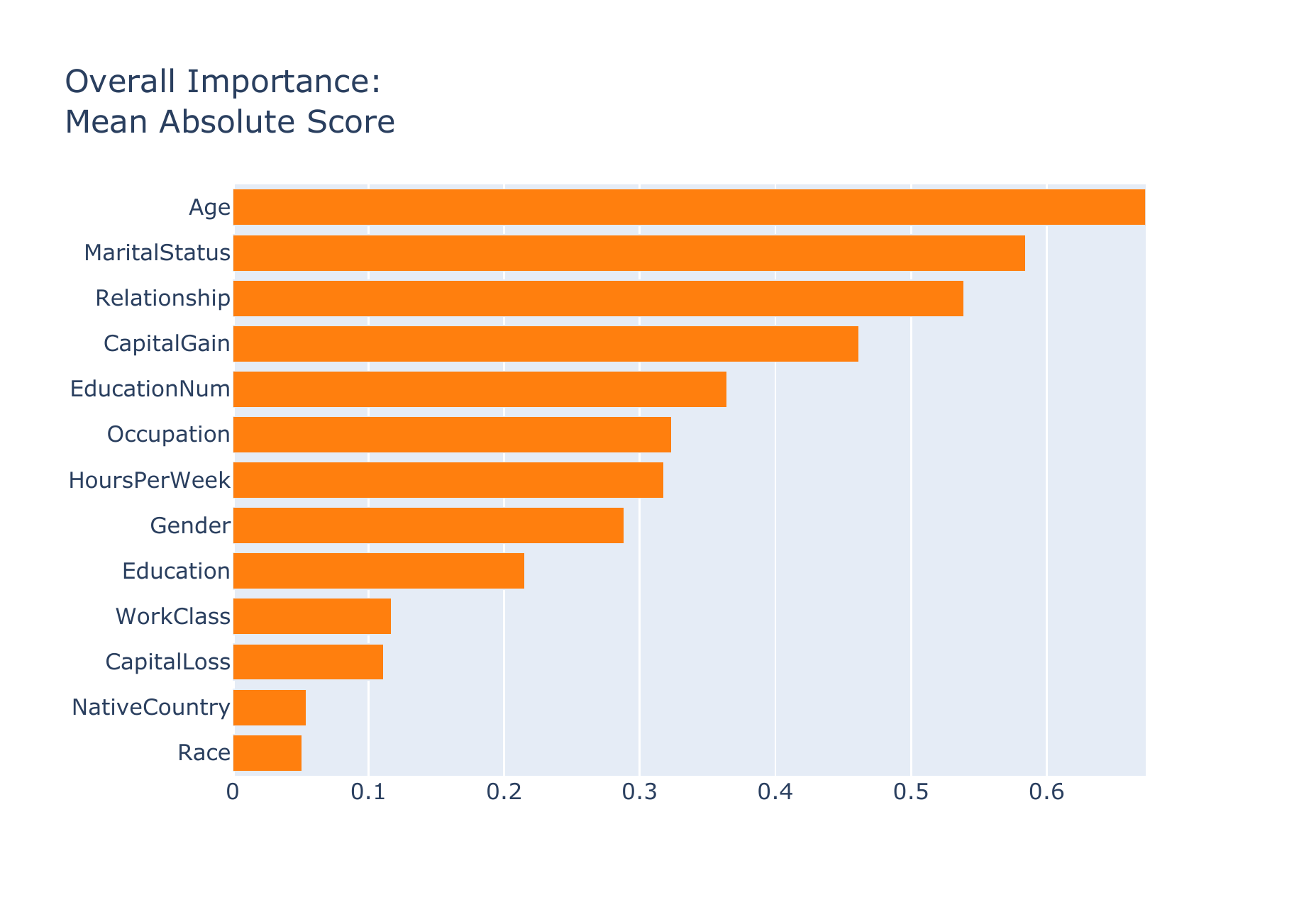}
    \includegraphics[width=0.495\textwidth]{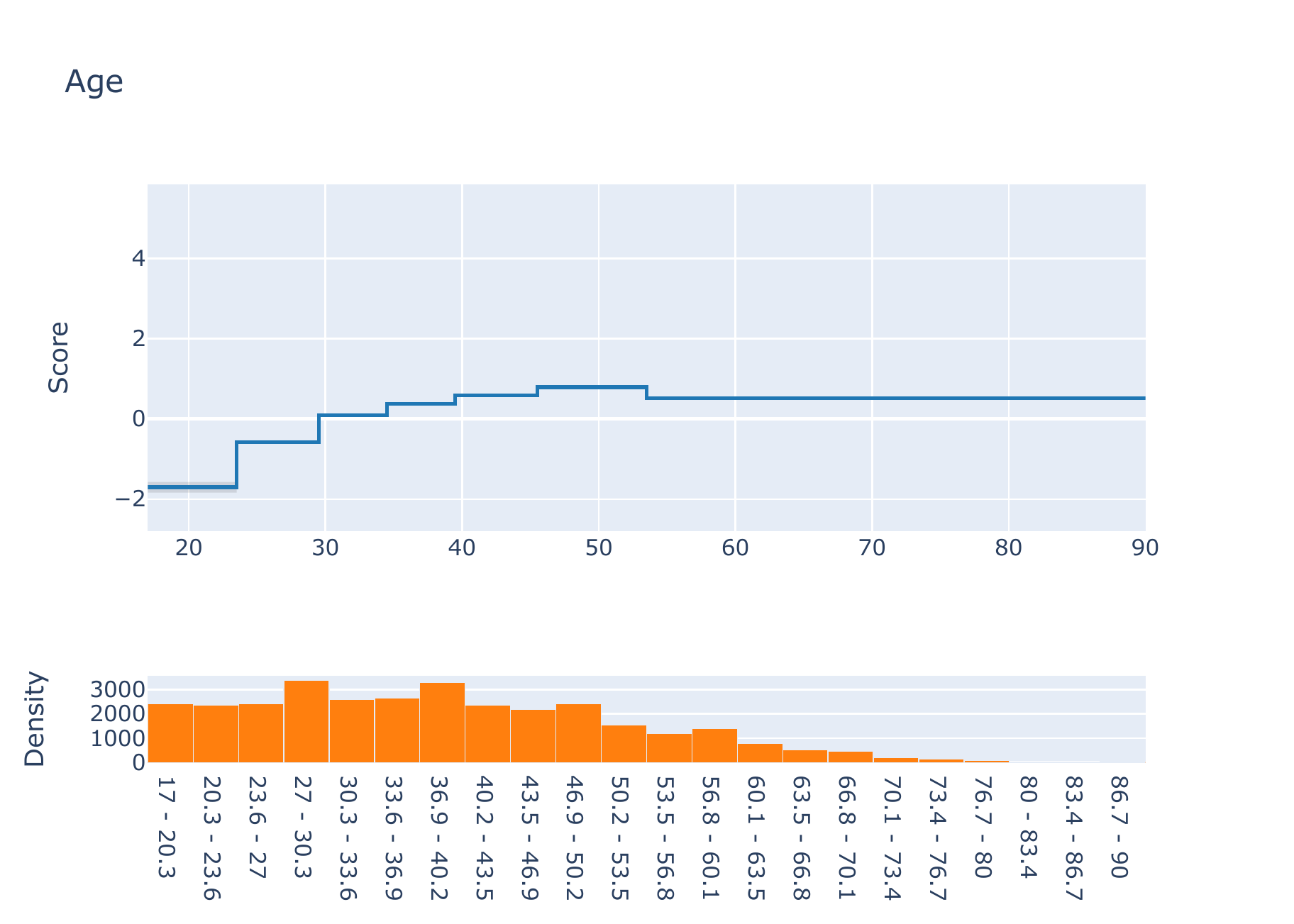}
    \includegraphics[width=0.495\textwidth]{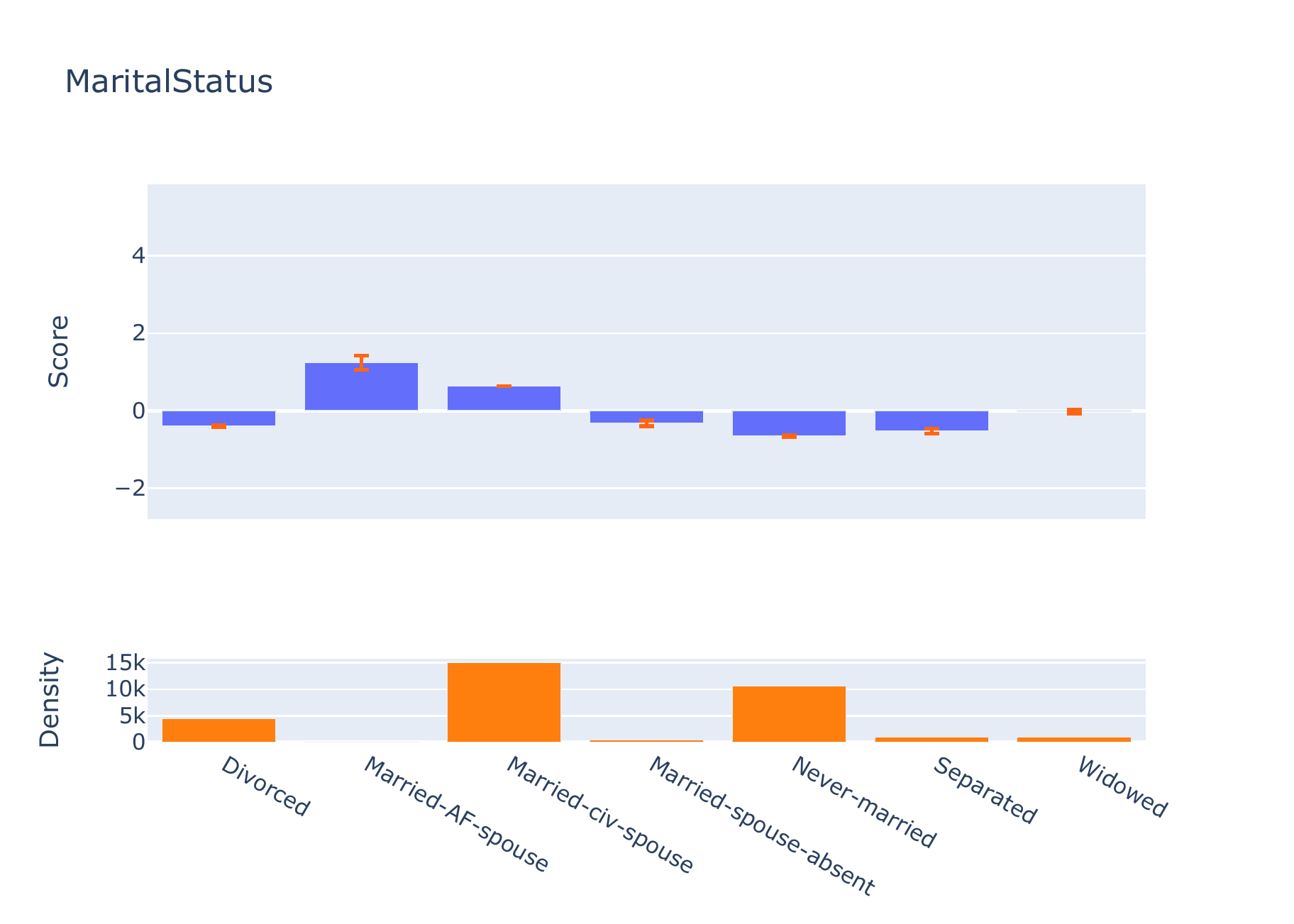}
    \includegraphics[width=0.495\textwidth]{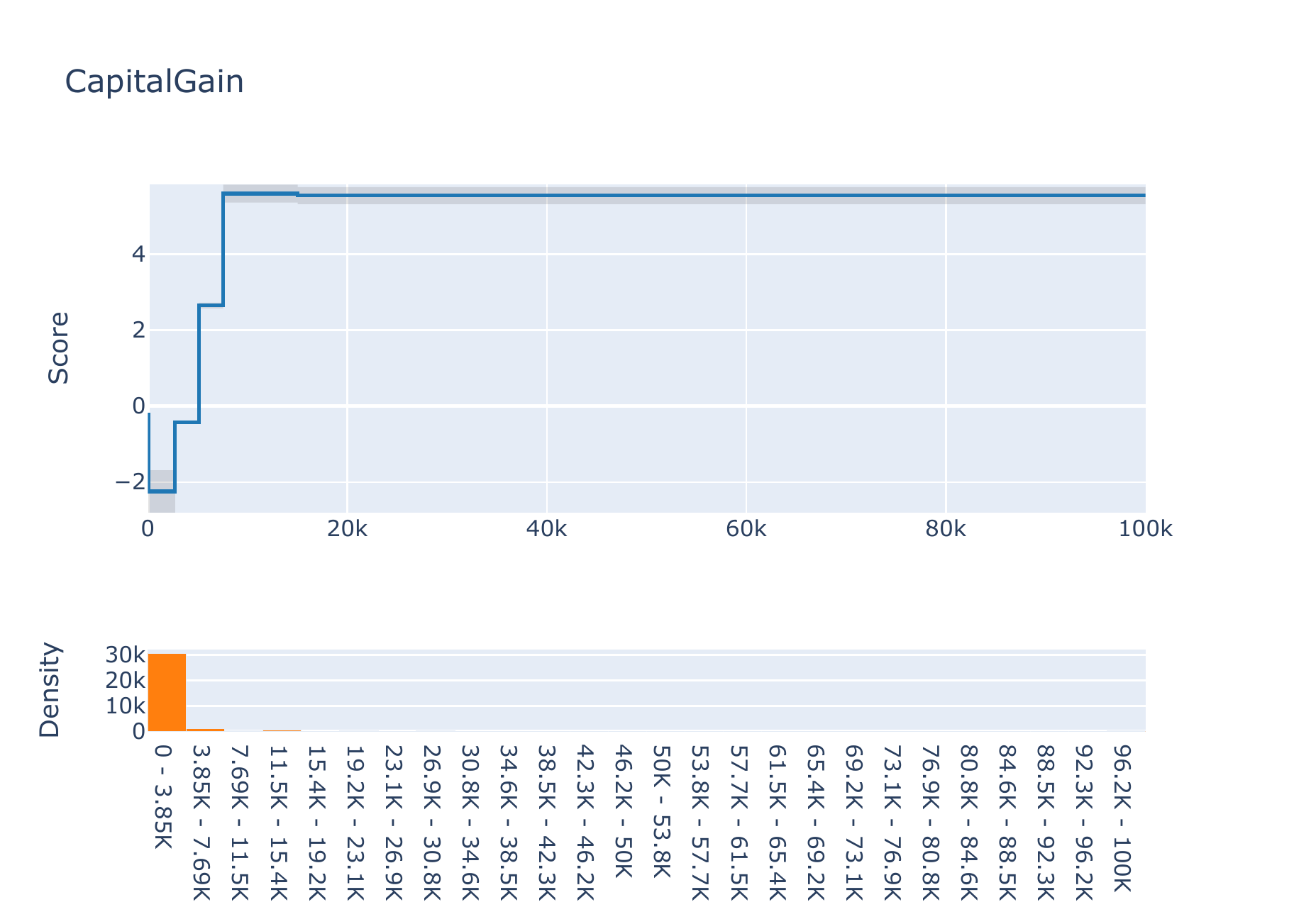}
    \includegraphics[width=0.495\textwidth]{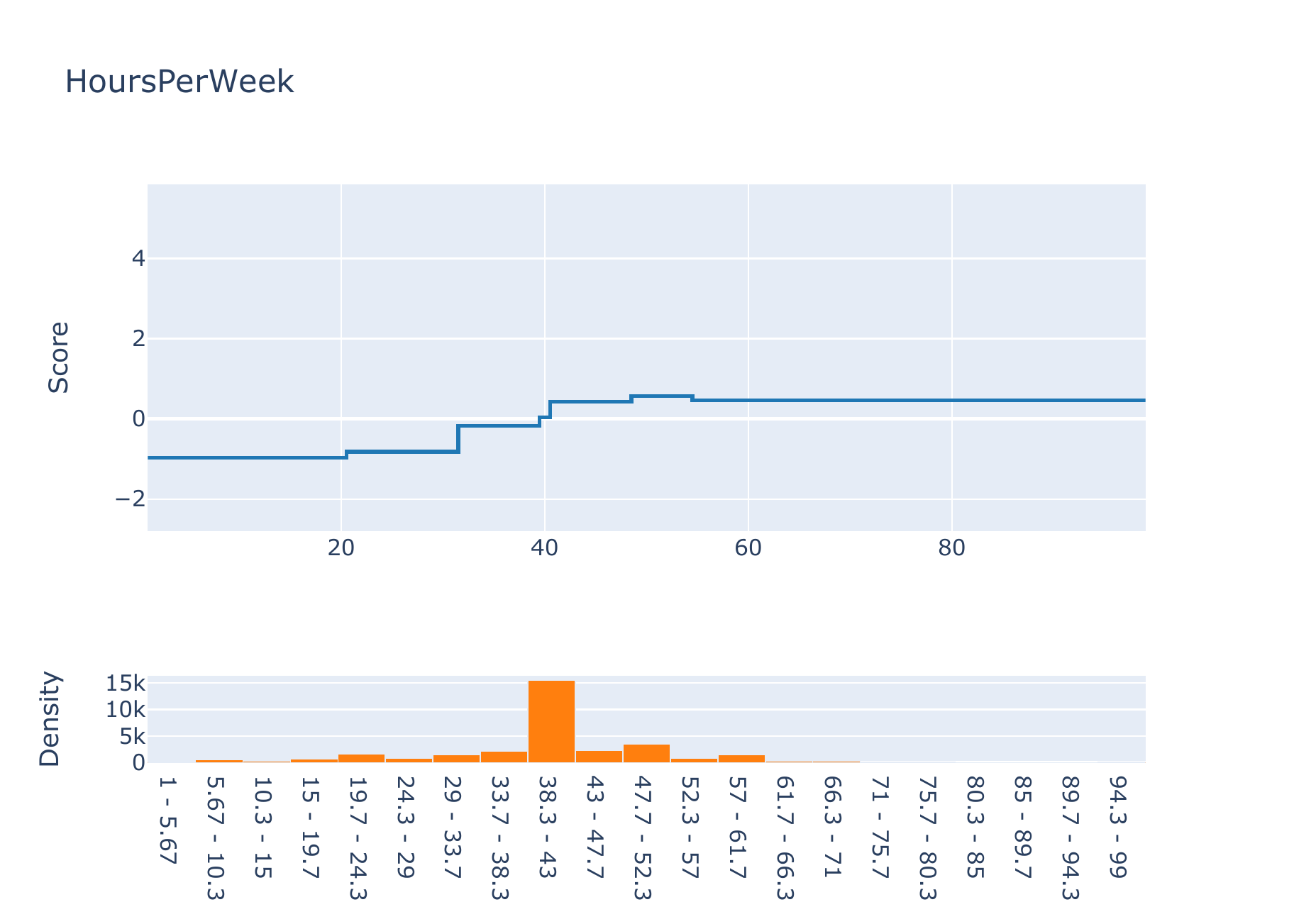}
    \includegraphics[width=0.495\textwidth]{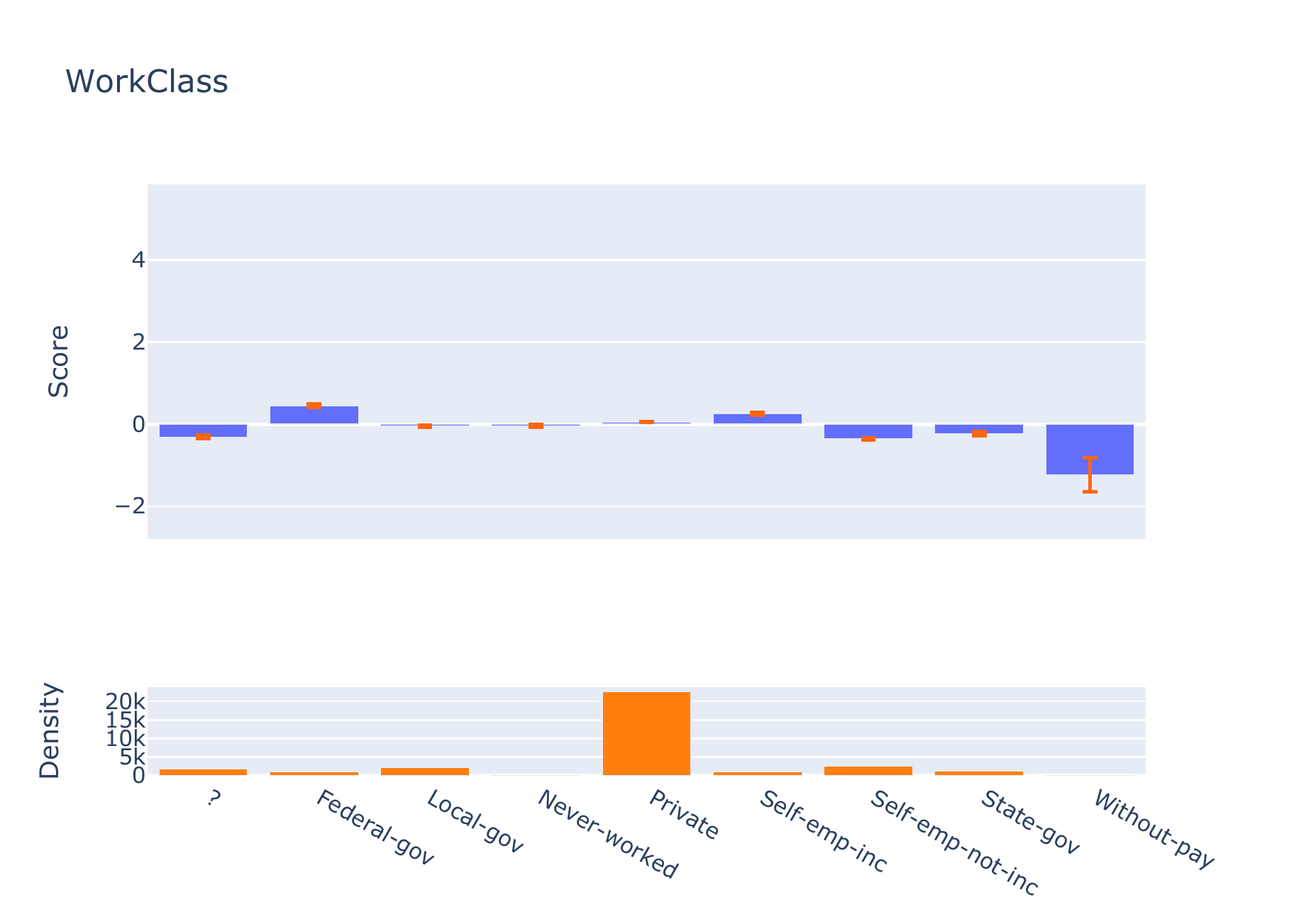}
    \includegraphics[width=0.495\textwidth]{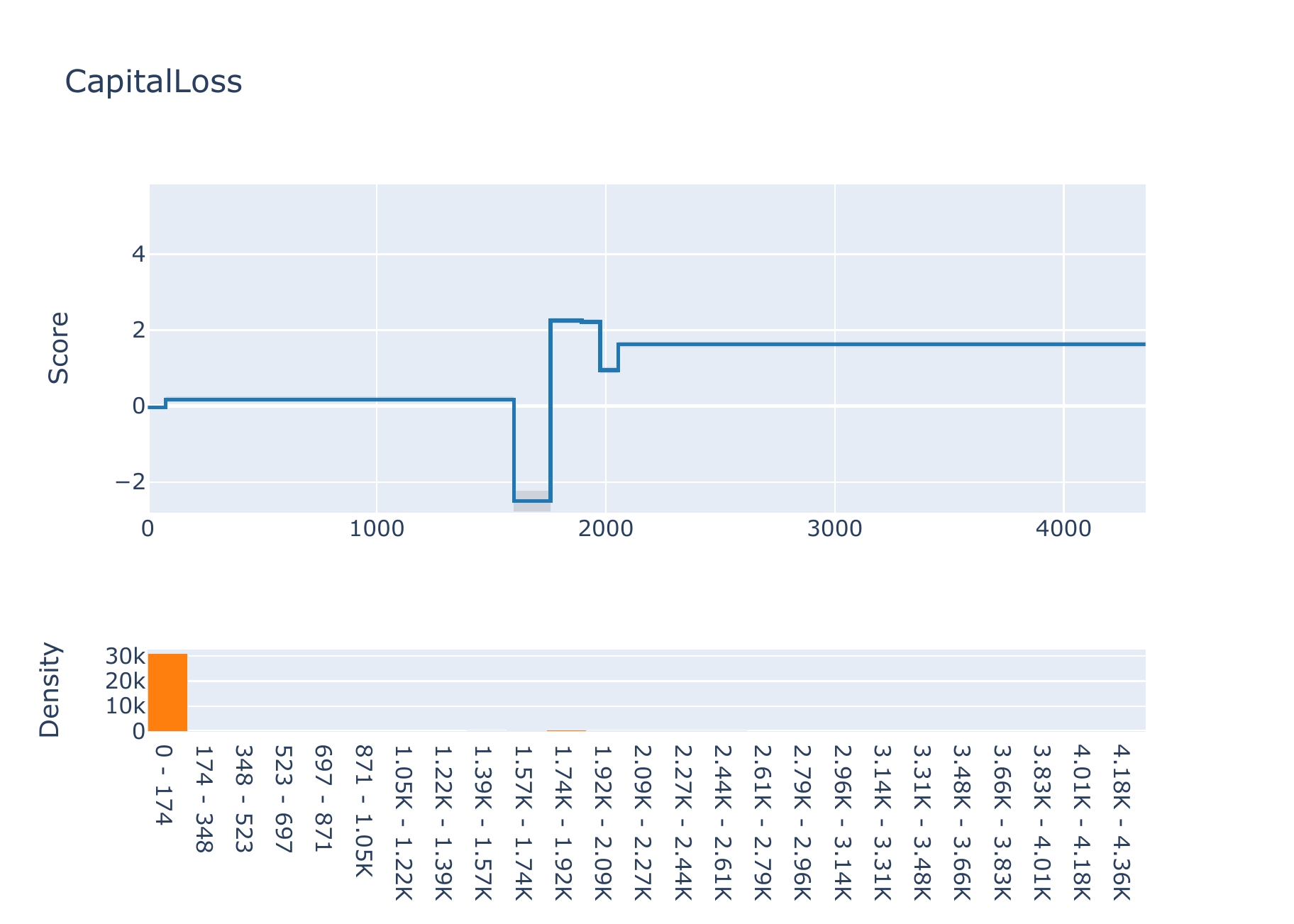}
    \includegraphics[width=0.495\textwidth]{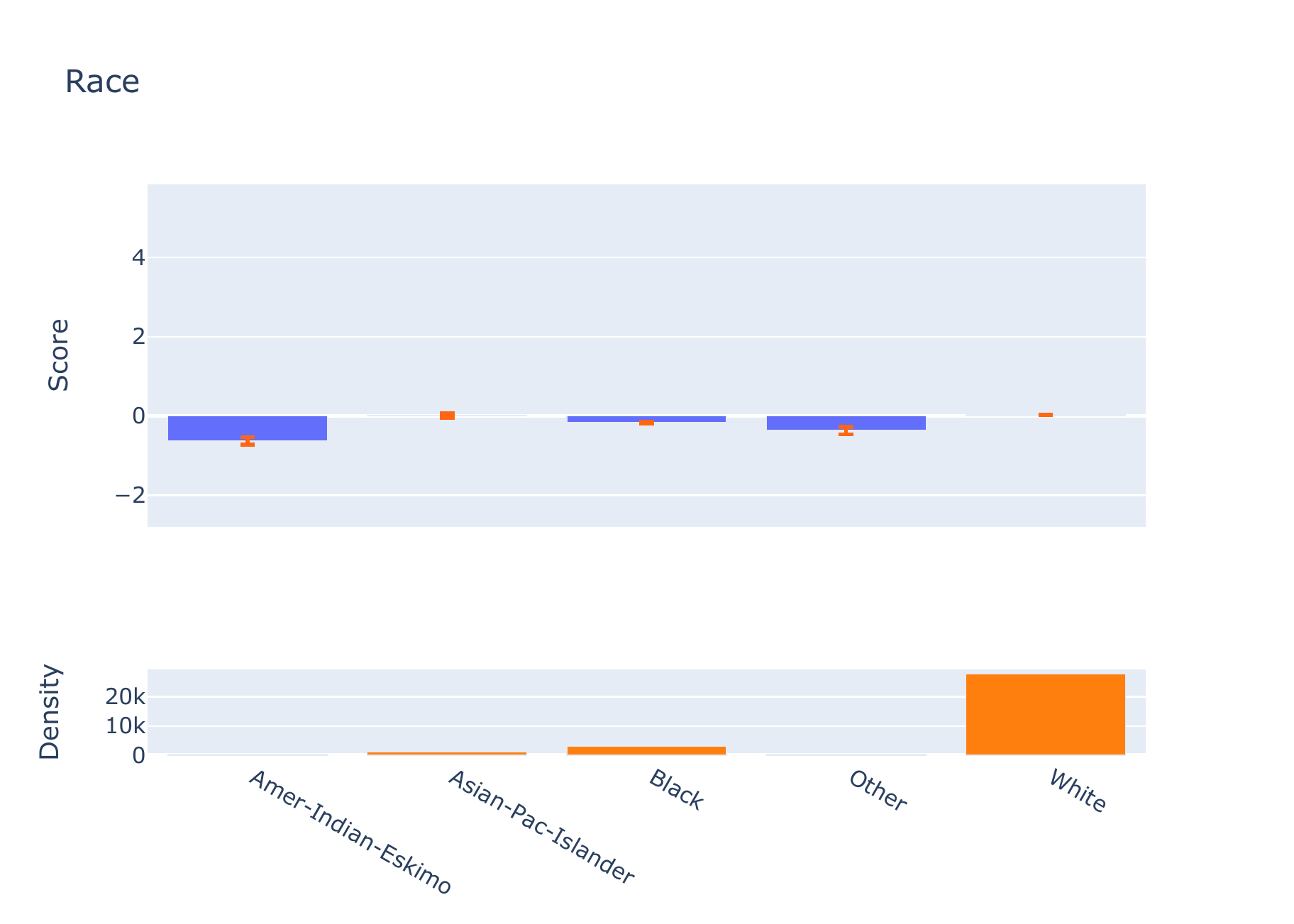}
    \caption{Feature importances and selected univariate functions $f_j$ for the Explainable Boosting Machine with \texttt{max\_bins} $= 8$ on the Adult Income dataset.}
    \label{fig:Adult_GAM_functions}
\end{figure}

\clearpage

\begin{figure}[ht]
    \centering
    \begin{subfigure}[b]{0.33\textwidth}
        \includegraphics[width=\textwidth]{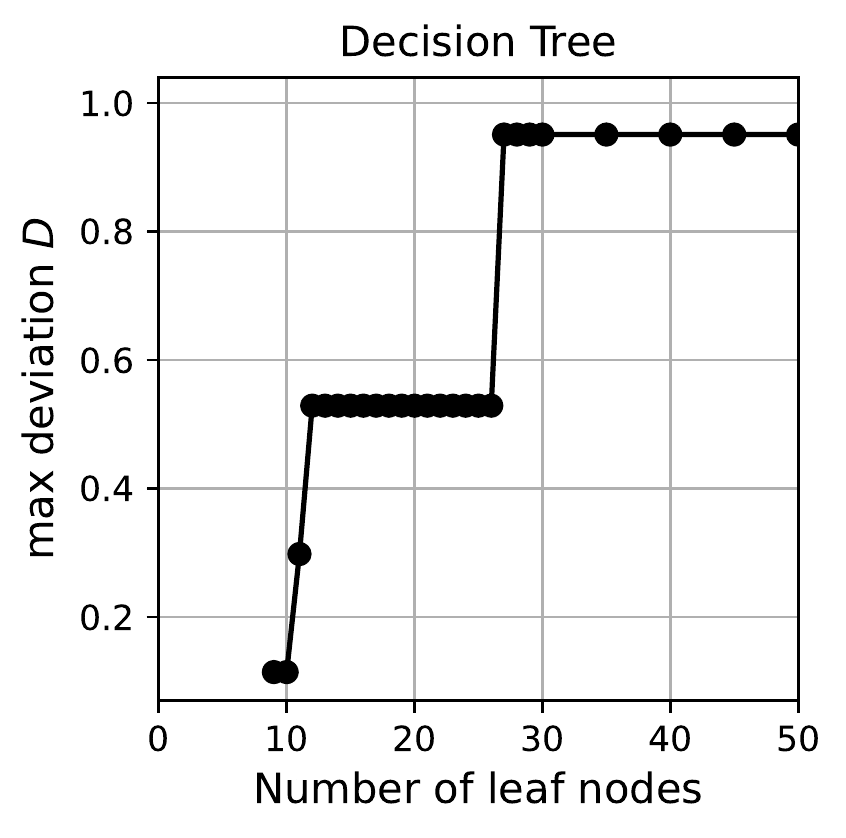}
        \caption{}
        \label{fig:Adult_DT_leaves_linear}
    \end{subfigure}%
    \begin{subfigure}[b]{0.33\textwidth}
        \includegraphics[width=\textwidth]{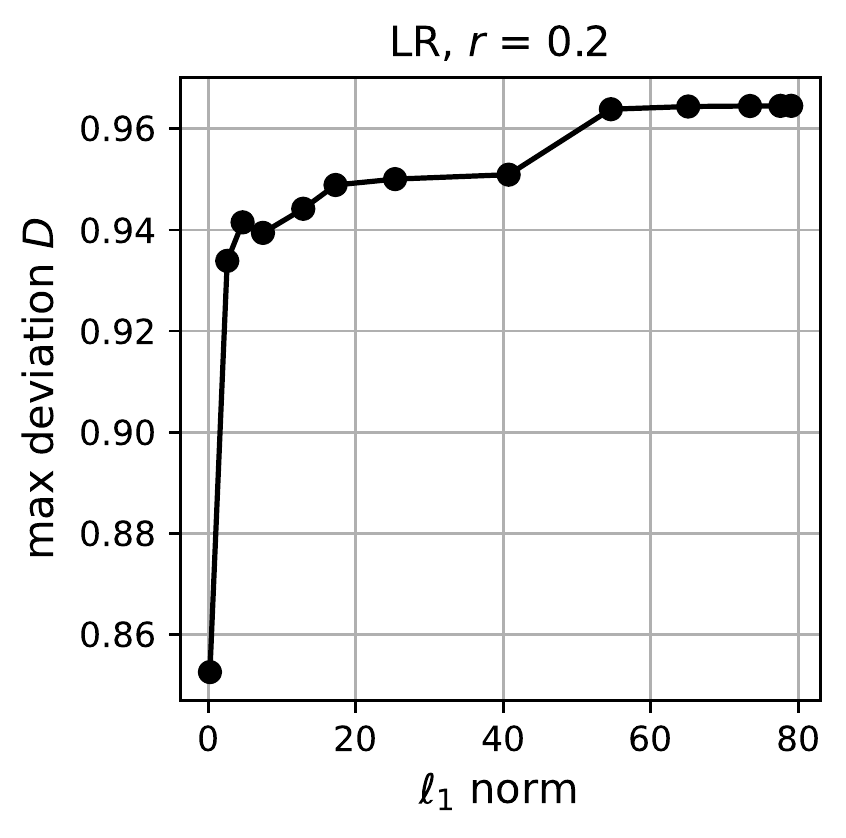}
        \caption{}
        \label{fig:Adult_LR_C}
    \end{subfigure}%
    \begin{subfigure}[b]{0.33\textwidth}
        \includegraphics[width=\textwidth]{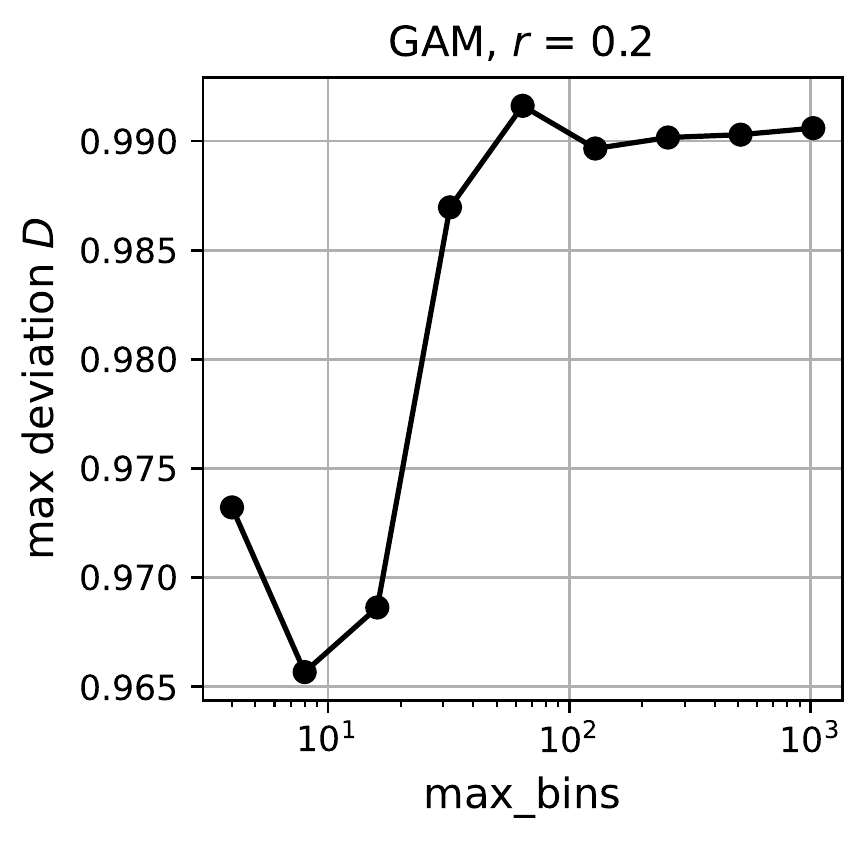}
        \caption{}
        \label{fig:Adult_GAM_maxBins}
    \end{subfigure}
    \caption{Maximum deviation $D$ on the Adult Income dataset as a function of model complexity for (a) DT (number of leaves), (b) LR ($\ell_1$ norm), and (c) GAM (\texttt{max\_bins}). 
    }
    \label{fig:Adult_all_add}
\end{figure}

\paragraph{Dependence on model complexity} 
Figure~\ref{fig:Adult_all_add} shows the dependence on model complexity for DT (number of leaves), LR (coefficient $\ell_1$ norm), and EBM (\texttt{max\_bins}). In Figure~\ref{fig:Adult_DT_leaves_linear}, 
the maximum deviation is $0.114$ for trees with $9$ and $10$ leaves, and remains moderate up to $26$ leaves, which is different than in Figure~\ref{fig:HMDA_DT_leaves}. Similar to Figures~\ref{fig:HMDA_LR_C} and \ref{fig:HMDA_GAM_maxBins}, $\ell_1$ norm has a larger effect on maximum deviation than \texttt{max\_bins} (note the vertical scale
in Figure~\ref{fig:Adult_GAM_maxBins}).

\paragraph{Dependence on certification set size}
Figure~\ref{fig:Adult_DT_LR_GAM_r} shows the dependence on the certification set radius $r$ for DT, LR, and GAM. The patterns are similar to those in Figure~\ref{fig:HMDA_r}: the deviations for LR and GAM increase from $r = 0$ and have jumps at $r = 1$, while the deviation for DT remains constant. One difference is that the LR curve in Figure~\ref{fig:Adult_DT_LR_GAM_r} meets its $r \to \infty$ asymptote (dashed line in figure), similar to GAM.

Figure \ref{fig:adult:d_vs_Q} shows the upper bound on the maximum deviation as a function of the certification set size for two RF models. As the test set is large in this case, the deviations observed even for small values of $r$ are high and grow to reach the value of the full feature space quickly.

\begin{figure}[htb]
    \centering
    \begin{subfigure}[b]{0.3\textwidth}
        \includegraphics[width=\textwidth]{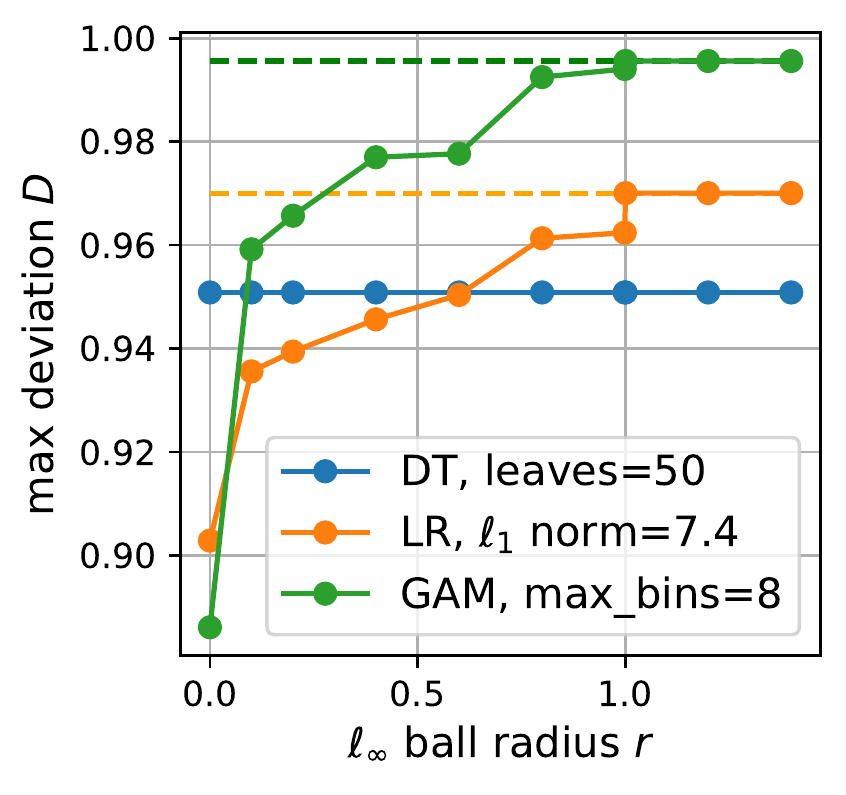}
        \caption{}
        \label{fig:Adult_DT_LR_GAM_r}
    \end{subfigure}
    \begin{subfigure}[b]{0.4\textwidth}
        \includegraphics[width=\textwidth]{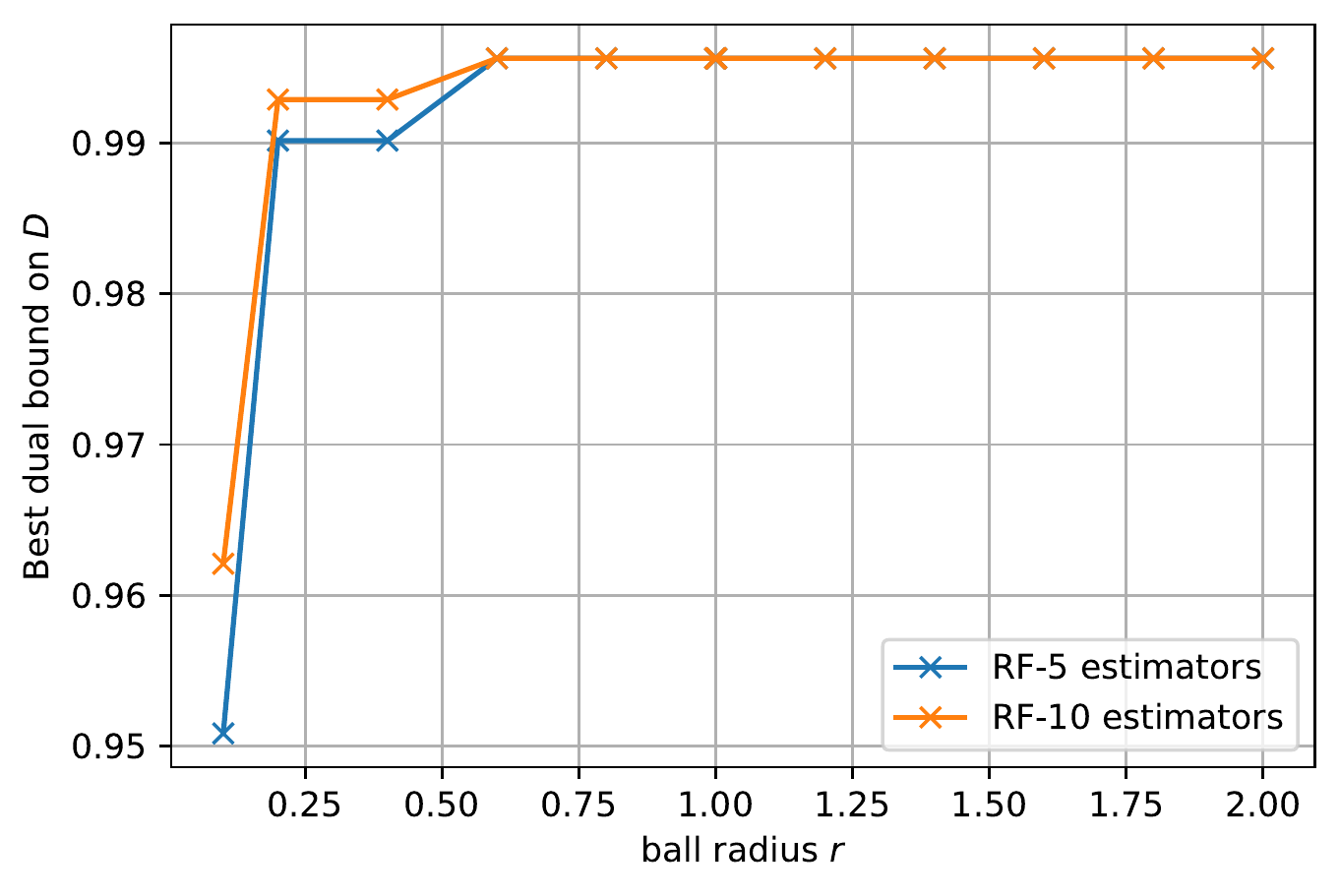}
        \caption{}
        \label{fig:adult:d_vs_Q}
    \end{subfigure}
    \caption{(Left) Maximum deviation $D$ on the Adult Income dataset as a function of certification set radius $r$ for DT, LR, and GAM. (Right) Upper bound on maximum deviation of $f$, a Random Forest, trained on the Adult Income dataset.}
\end{figure}

{
\paragraph{Relationships with accuracy and robust accuracy} In Figure~\ref{fig:Adult_acc}, we show maximum deviation as a function of test set accuracy for the DT, LR, and GAM models shown in  Figure~\ref{fig:Adult_all_add} (the LR and GAM models are listed in Tables~\ref{tab:Adult_LR_stats} and \ref{tab:Adult_GAM_stats}). Broadly, the plots show two regimes: one where accuracy increases and maximum deviation increases moderately or not at all, and one where accuracy stalls while maximum deviation increases. The latter is less desirable as it suggests increasing safety risks without a gain in accuracy. The last branch of the DT curve actually decreases in accuracy, indicating overfitting, while maximum deviation is high.

\begin{figure}[t]
    \centering
    \includegraphics[width=0.29\textwidth]{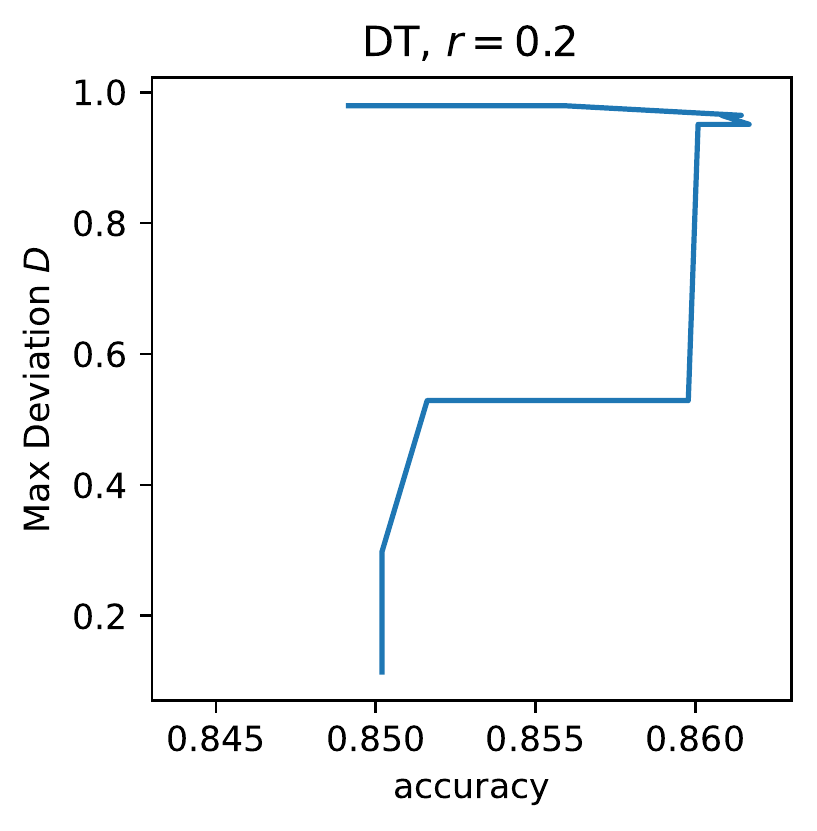}
    \includegraphics[width=0.30\textwidth]{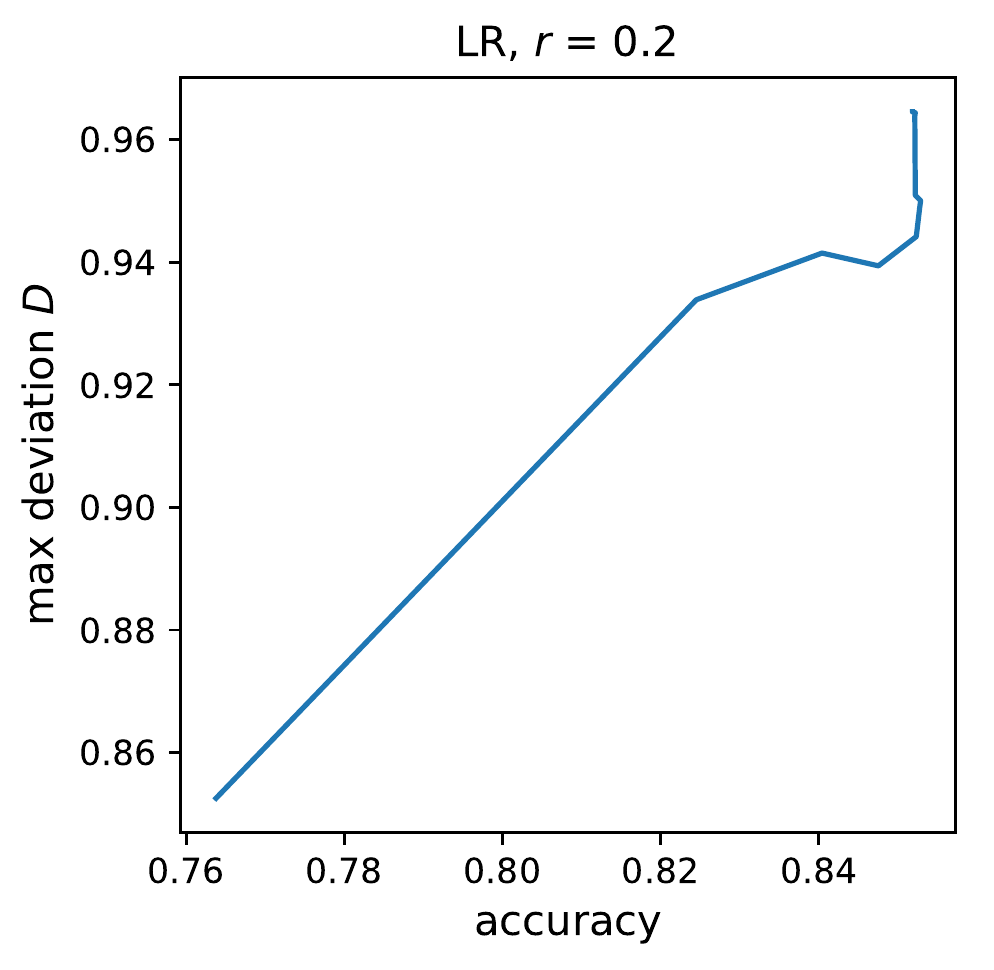}
    \includegraphics[width=0.30\textwidth]{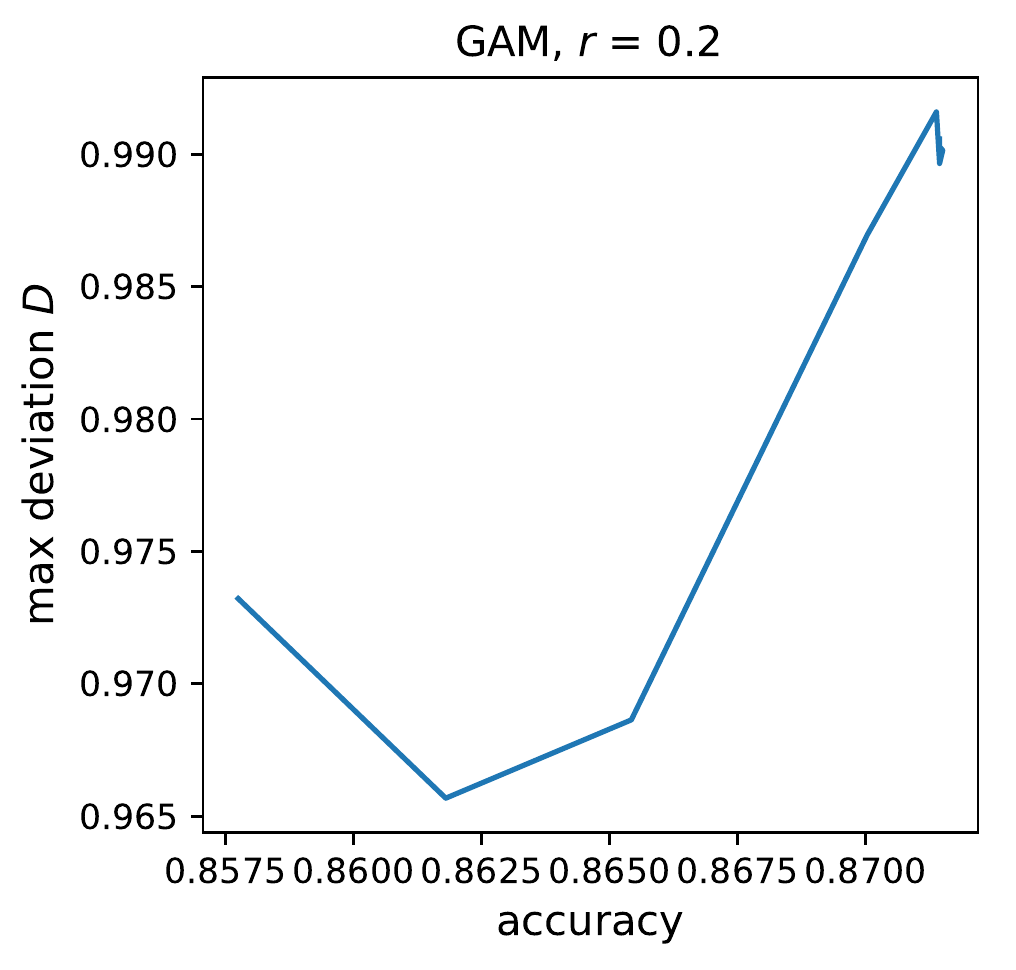}
    \caption{Maximum deviation $D$ (at certification set radius $r = 0.2$) vs.~test set accuracy on the Adult Income dataset.}
    \label{fig:Adult_acc}
\end{figure}

We also consider the relationship of maximum deviation to \emph{robust accuracy}. Following \citet{wong2018provable}, robust loss for a pair $(x, y)$ is defined as the worst-case loss over an $\ell_\infty$ ball centered at $x$, 
\begin{equation}\label{eqn:robustLoss}
\max_{\lVert\Delta\rVert_{\infty} \leq \epsilon} L(f(x + \Delta), y),
\end{equation}
and robust accuracy is therefore $1$ minus the average robust $0$-$1$ loss over a dataset. While \citet{wong2018provable} focus on bounding robust loss for feedforward neural networks with ReLU activations, we find that the results in Section~\ref{sec:model:linear_additive} apply to computing robust loss \eqref{eqn:robustLoss} exactly for LR and GAM models. Specifically, for $0$-$1$ loss and $\ell_\infty$ balls, the separable optimization \eqref{eqn:maxAdditiveSep} applies, and the worst case is obtained by minimizing $f$ when the label $y$ is positive and maximizing $f$ when $y$ is negative.

The resulting robust accuracy values for DT, LR and GAM are plotted in Figure~\ref{fig:Adult_robustAcc} in a similar fashion as Figure~\ref{fig:Adult_acc}. Here we set $\epsilon = 0.1$ and $r = 0.1$ as well in computing maximum deviation. The DT plot shows maximum deviation increasing with model complexity while robust accuracy is stable up to a point. In the subsequent regime, when there are a large number of leaves, model robustness reduces while deviation remains high.
The LR plot begins similarly to the one in Figure~\ref{fig:Adult_acc} in that robust accuracy increases along with maximum deviation, but then it stalls and decreases for maximum deviation above $0.94$. In the GAM plot, robust accuracy actually decreases with the \texttt{max\_bins} parameter, i.e., the curve goes from right to left.

\begin{figure}[t]
    \centering
    \includegraphics[width=0.29\textwidth]{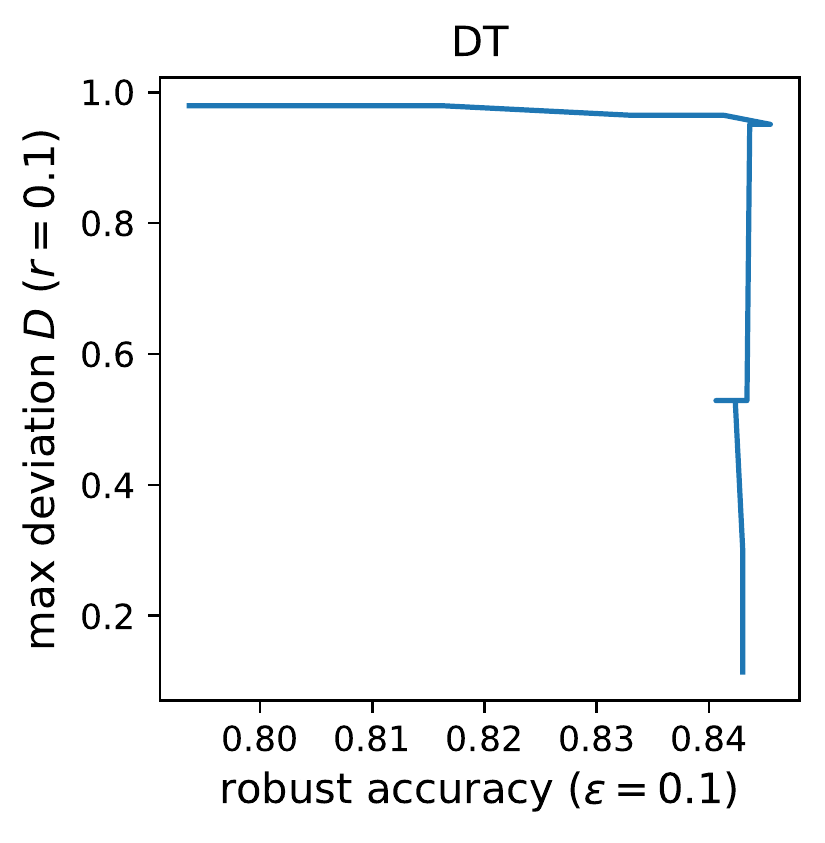}
    \includegraphics[width=0.30\textwidth]{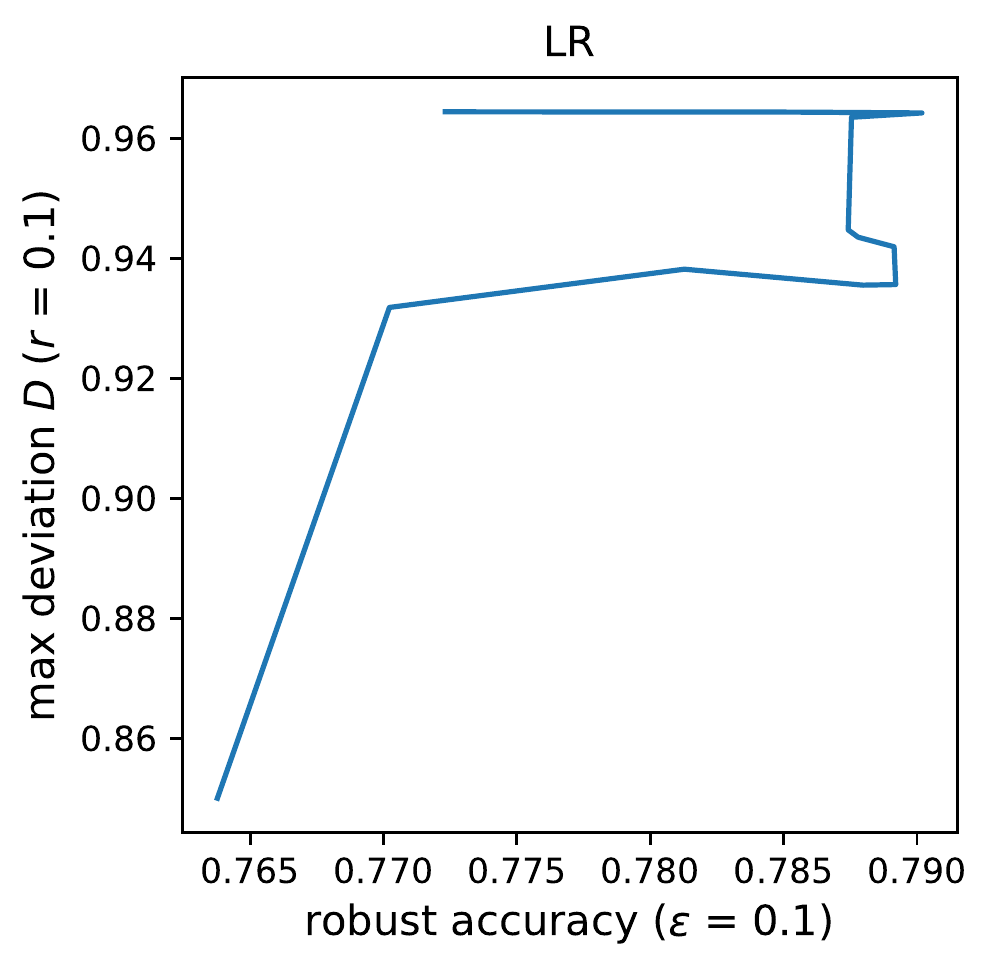}
    \includegraphics[width=0.30\textwidth]{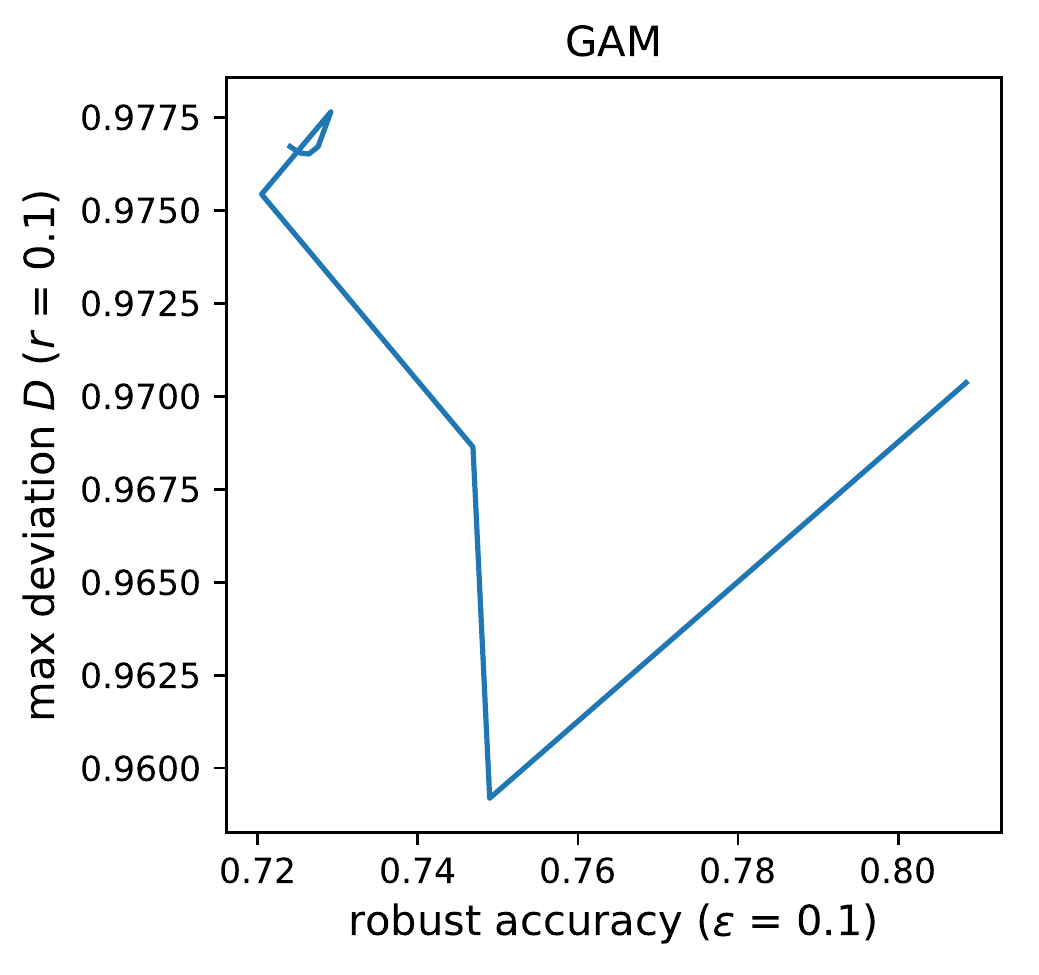}
    \caption{Maximum deviation $D$ vs.~robust accuracy ($\epsilon = 0.1$, $r = 0.1$) on the Adult Income dataset.}
    \label{fig:Adult_robustAcc}
\end{figure}}

\paragraph{Breakdown by leaves of $f_0$} In Figures~\ref{fig:Adult_leaves_LR_r}--\ref{fig:Adult_leaves_GAM_maxBins}, we plot the maximum log-odds achieved by model $f$ ($\max$ on RHS of \eqref{eqn:maxDevAdditiveTree}), the minimum log-odds achieved by $f$, and the reference model log-odds $g(y_{0m})$ over each leaf of the decision tree reference model in Figure~\ref{fig:Adult_DT}. Plots are on the log-odds scale to show the deviations more clearly, including those that would be compressed by the nonlinear logistic function $g^{-1}(z) = 1 / (1 + e^{-z})$. Figures~\ref{fig:Adult_leaves_LR_r} and \ref{fig:Adult_leaves_GAM_r} show the dependence on the certification set radius $r$ while Figures~\ref{fig:Adult_leaves_LR_C} and \ref{fig:Adult_leaves_GAM_maxBins} show dependence on the smoothness parameters for LR and GAM. These figures provide a more granular picture corresponding to the summary in 
Figure~\ref{fig:Adult_all_add} and support the trends seen there. In Figures~\ref{fig:Adult_leaves_LR_r} and \ref{fig:Adult_leaves_GAM_r}, there are jumps at $r = 1$ because this is the smallest radius that permits the values of categorical features of test set points (the ball centers in \eqref{eqn:unionBalls}) to change to any other value. (Recall that categorical features are one-hot encoded into binary-valued features.) In Figure~\ref{fig:Adult_leaves_GAM_r}, the GAM achieves the limiting deviations corresponding to $r = \infty$ ($\certset = \mathcal{X}$, dashed lines) no later than $r = 1.2$. In Figure~\ref{fig:Adult_leaves_LR_r}, the LR model achieves the lower limit on log-odds as soon as $r > 1$ but the upper limit is not achieved for most leaves.

\begin{figure}[ht]
    \centering
    \includegraphics[width=\textwidth]{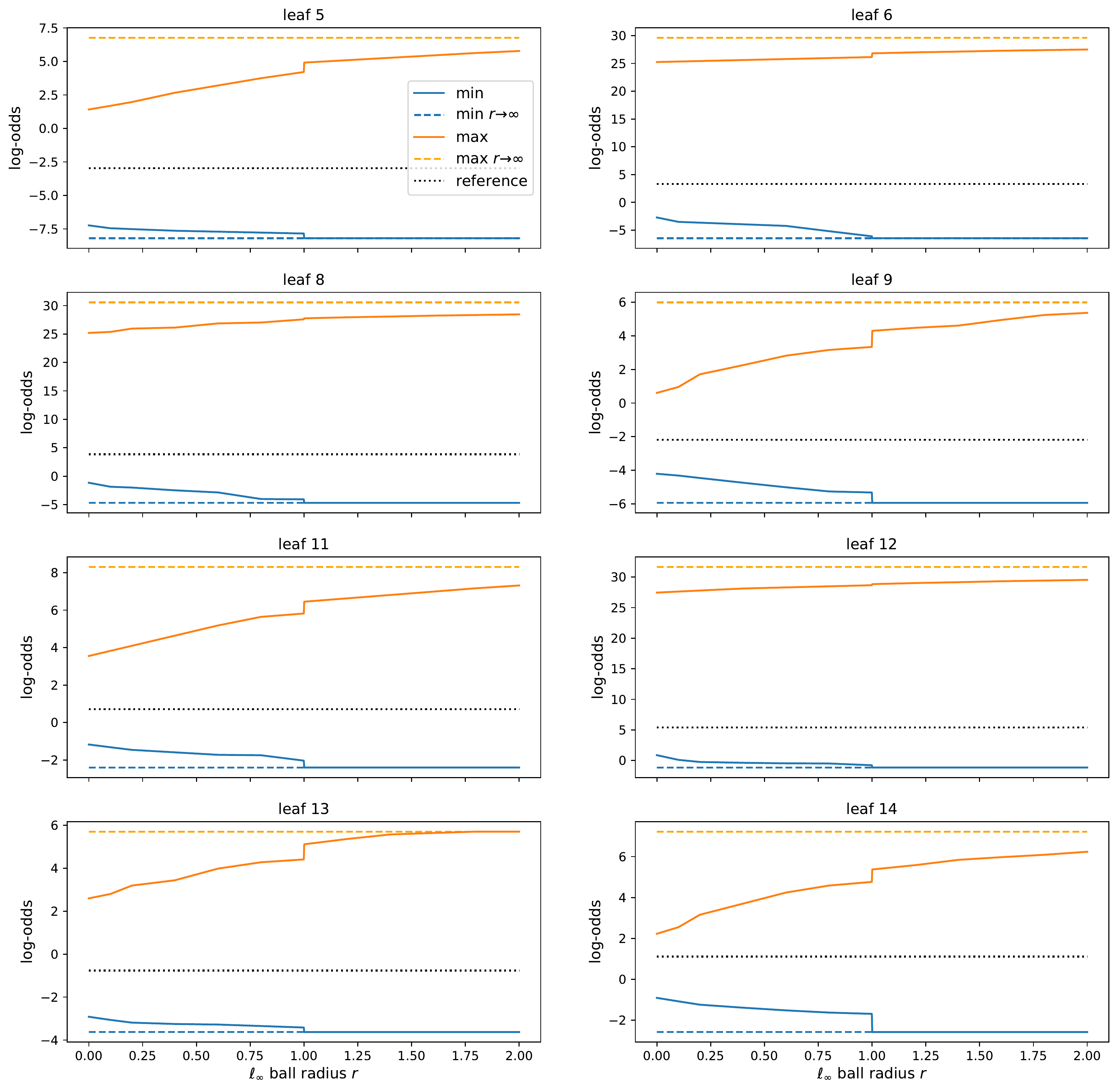}
    \caption{Minimum and maximum predicted log-odds for a logistic regression model with inverse $\ell_1$ penalty $C = 0.01$, as a function of certification set size (radius $r$) and broken down by leaves of the decision tree reference model.}
    \label{fig:Adult_leaves_LR_r}
\end{figure}

\begin{figure}[t]
    \centering
    \includegraphics[width=\textwidth]{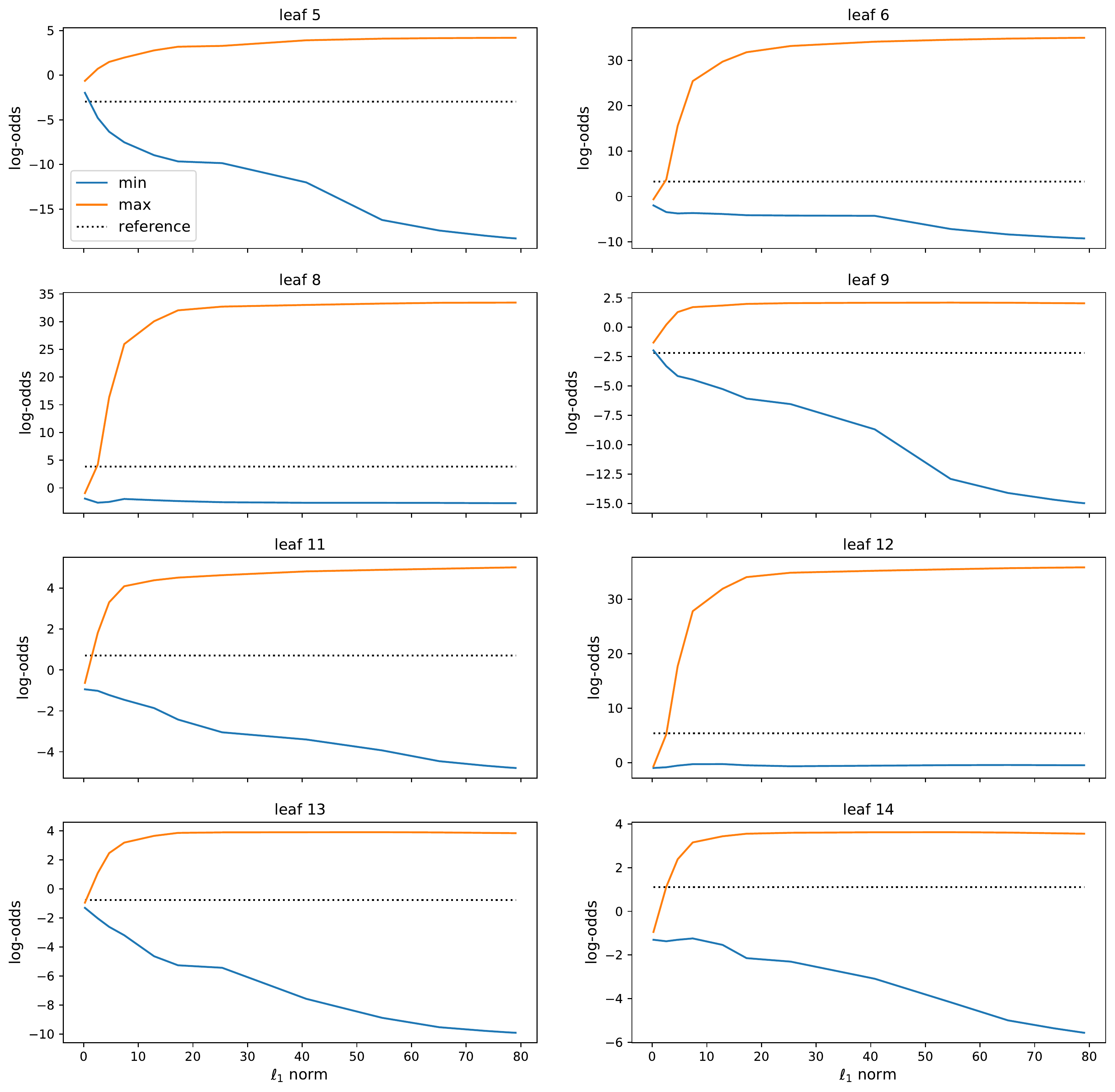}
    \caption{Minimum and maximum predicted log-odds for logistic regression models with different $\ell_1$ penalties $C$, broken down by leaves of the decision tree reference model. The certification set $\ell_\infty$ ball radius is $r = 0.2$.}
    \label{fig:Adult_leaves_LR_C}
\end{figure}

\begin{figure}[t]
    \centering
    \includegraphics[width=\textwidth]{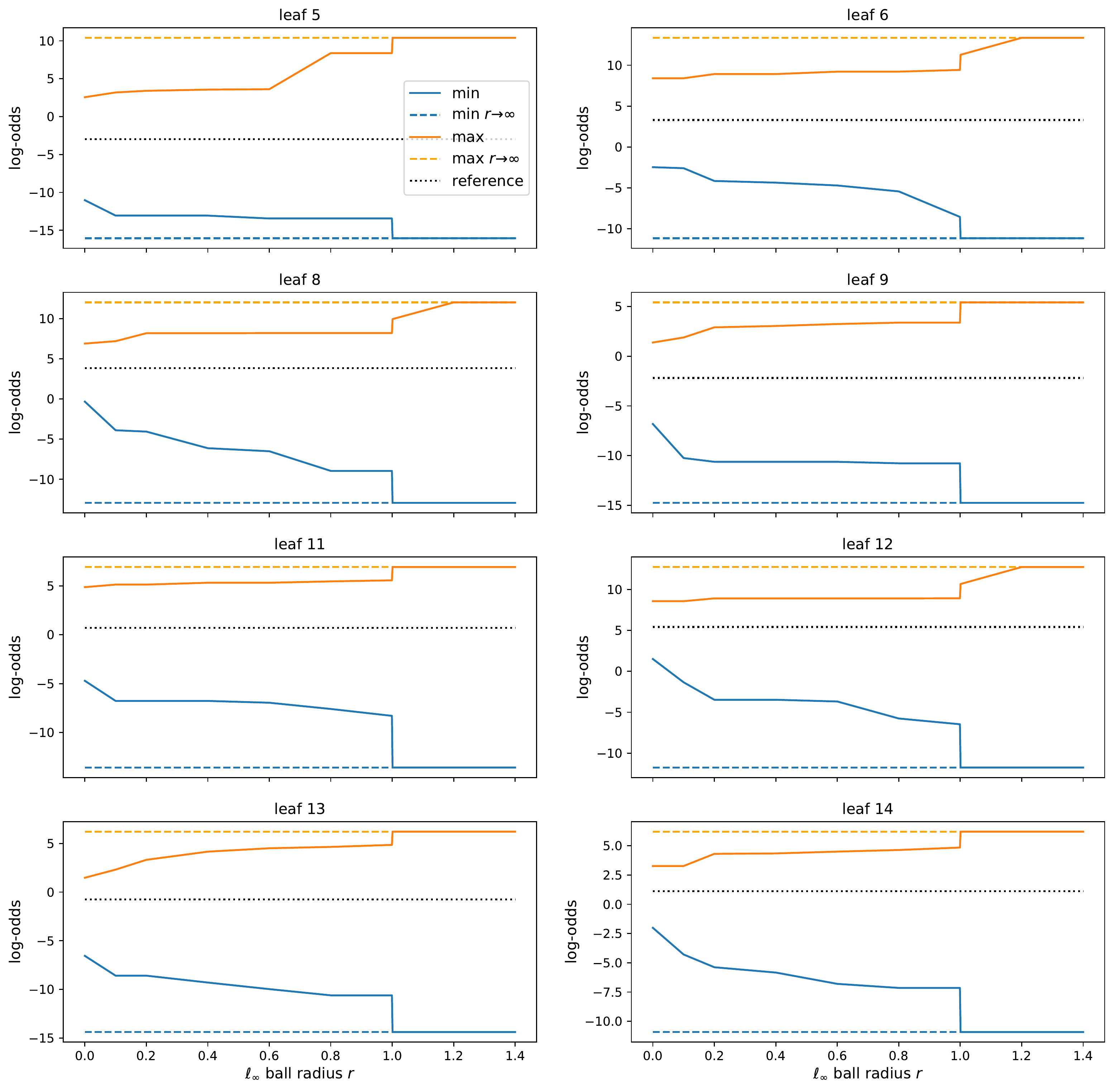}
    \caption{Minimum and maximum predicted log-odds for an Explainable Boosting Machine with \texttt{max\_bins} $= 8$, as a function of certification set size (radius $r$) and broken down by leaves of the decision tree reference model.}
    \label{fig:Adult_leaves_GAM_r}
\end{figure}

\begin{figure}[t]
    \centering
    \includegraphics[width=\textwidth]{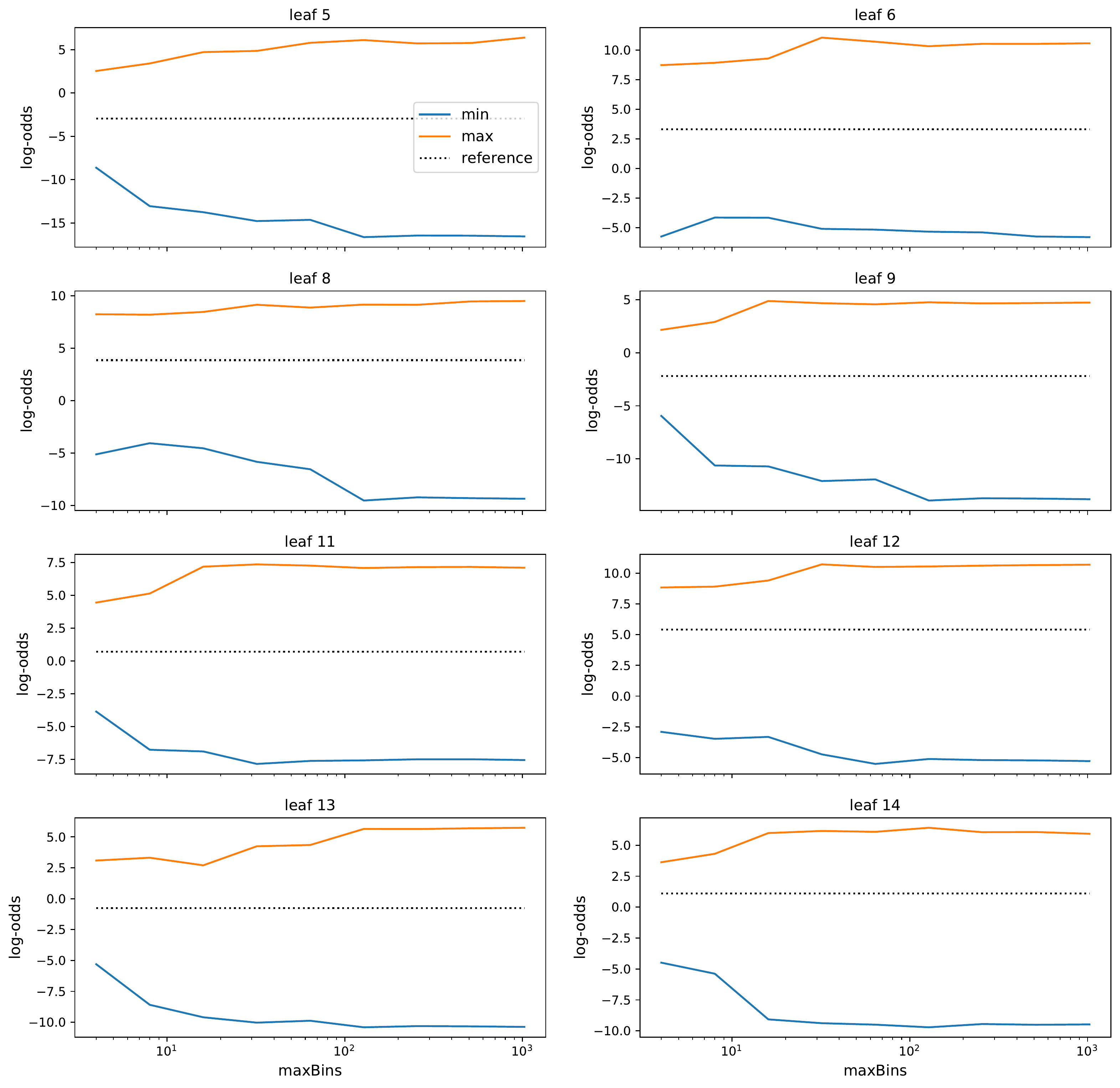}
    \caption{Minimum and maximum predicted log-odds for Explainable Boosting Machines with different \texttt{max\_bins} values, broken down by leaves of the decision tree reference model. The certification set $\ell_\infty$ ball radius is $r = 0.2$.}
    \label{fig:Adult_leaves_GAM_maxBins}
\end{figure}

\clearpage

\begin{figure}[ht]
    \centering
    \includegraphics[width=0.495\textwidth]{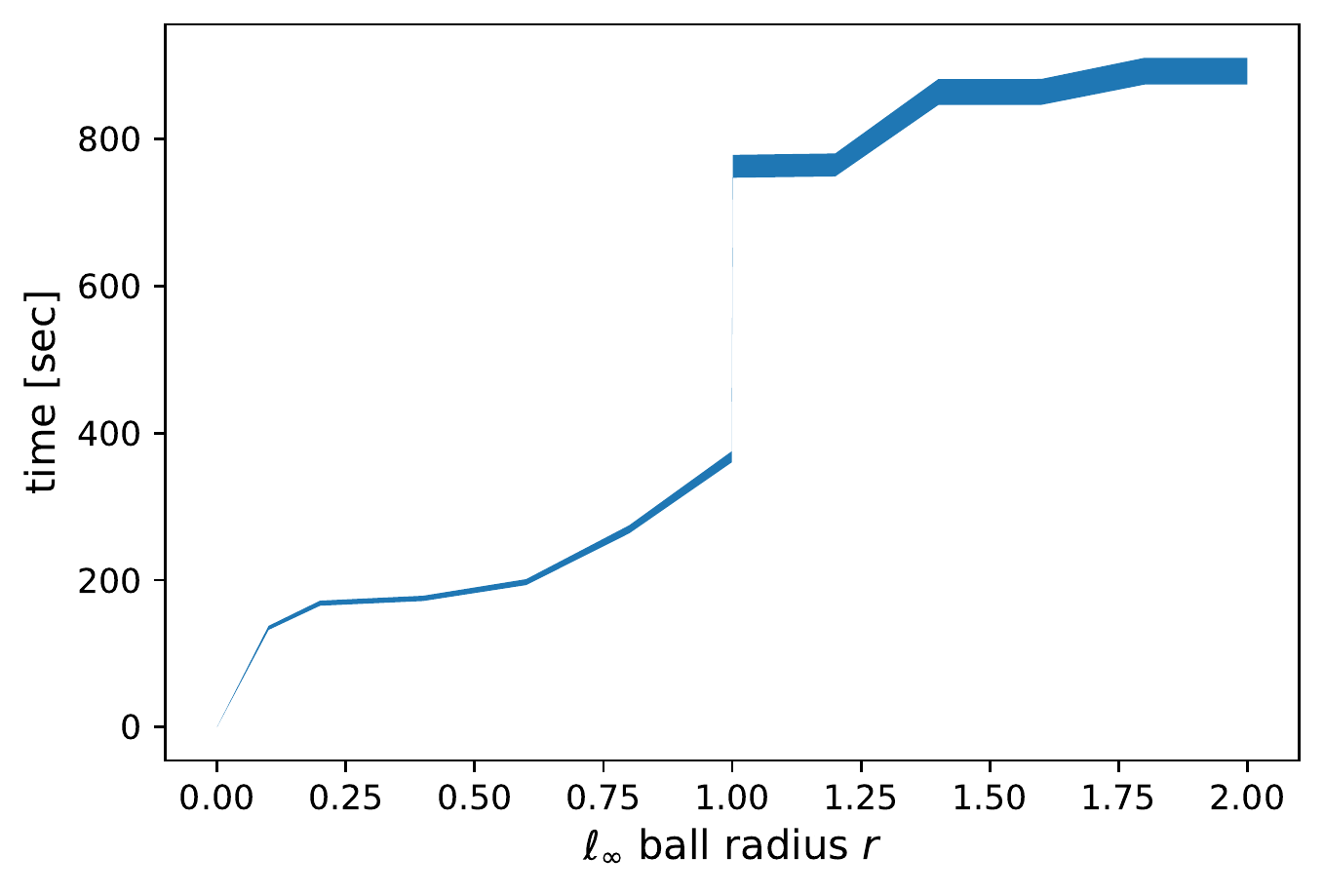}
    \includegraphics[width=0.495\textwidth]{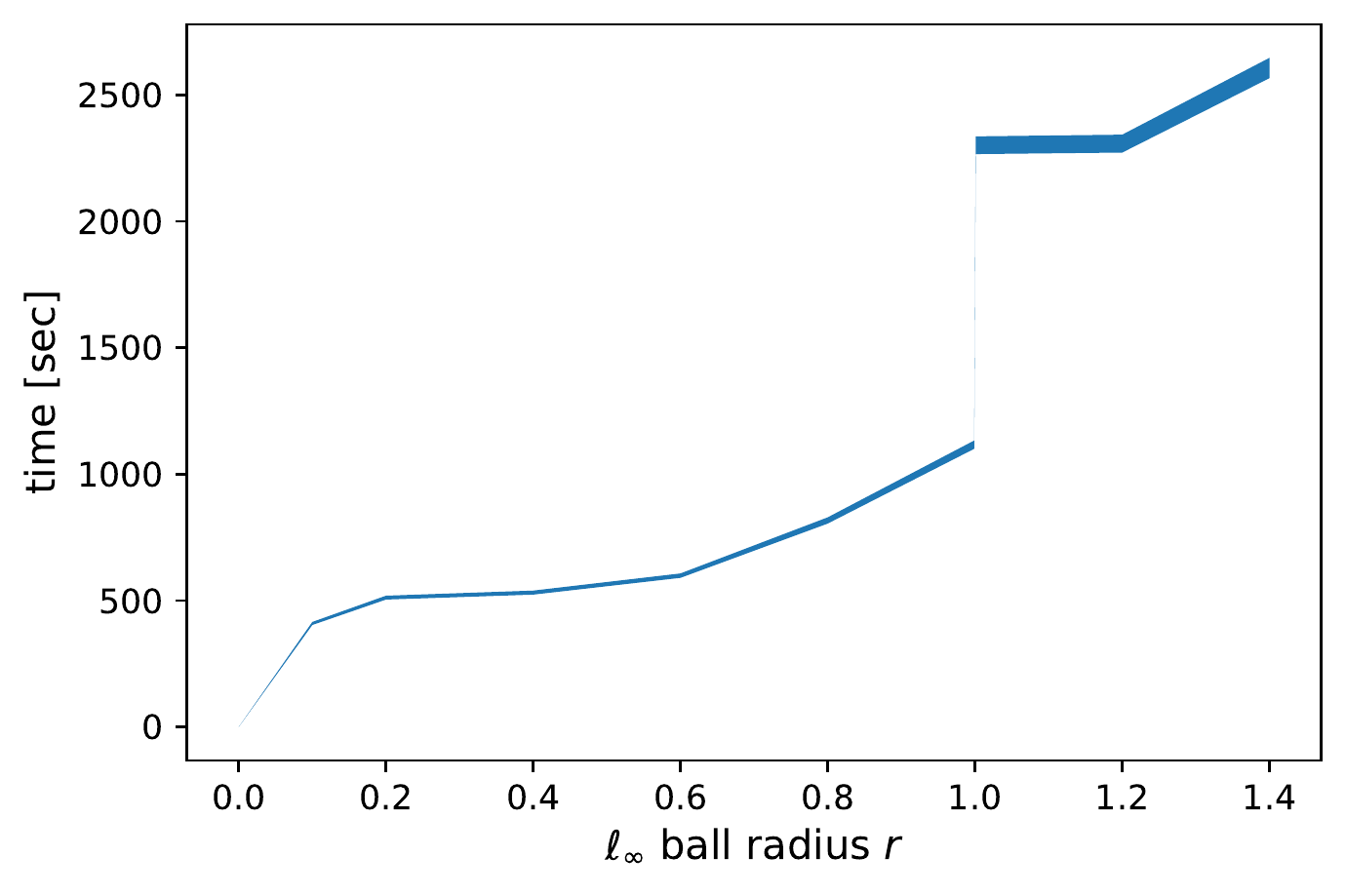}
    \caption{Time to compute maximum deviation for logistic regression models (left) and Explainable Boosting Machines (right) on the Adult Income dataset as a function of certification set size (radius $r$). The filled-in region shows the min-max variation with model complexity ($\ell_1$ norm for LR, \texttt{max\_bins} for EBM).}
    \label{fig:Adult_time}
\end{figure}

\paragraph{Running time} Figure~\ref{fig:Adult_time} shows the time required to compute the maximum deviation for LR and GAM on the Adult Income dataset. The same observations apply as in Figure~\ref{fig:HMDA_time} earlier.

\paragraph{Primal bounds for RF} The primal bound for max-deviation shown in Figure \ref{fig:Adult_RF_Nest} for RF is updated each time the algorithm finds a $K+1$ partite, i.e. has examined all trees in the Random Forest. The max-deviation computed for such a partite is a valid deviation. To prove if its optimal, Algorithm \ref{alg:tree-ensembles:enum} needs to run to completion which may not be feasible. Figure \ref{fig:Adult_RF_Nest} shows that as the RF models get larger (number of partites increase), it gets harder to find primal solutions. 

\paragraph{Maximal cliques evaluated for DT, RF} To investigate the effectiveness of pruning by bounds in Algorithm \ref{alg:tree-ensembles:enum}, we investigate the number of times all the decision trees in the Random Forest have to be processed. This represents number of times the state could not be pruned and needed to be evaluated fully. 

Figure \ref{fig:expt:pruning} shows two aspects at play. (a) Pruning by bound is effective in restricting the search space more so for Random Forests than for decision trees, and (b) for larger graphs, more time is spent in computing bounds in Eq. \eqref{eq:tree-ensemble:nodesel}. 

\begin{figure}[ht]
     \centering
     \begin{subfigure}[b]{0.4\textwidth}
         \centering
         \includegraphics[width=\textwidth]{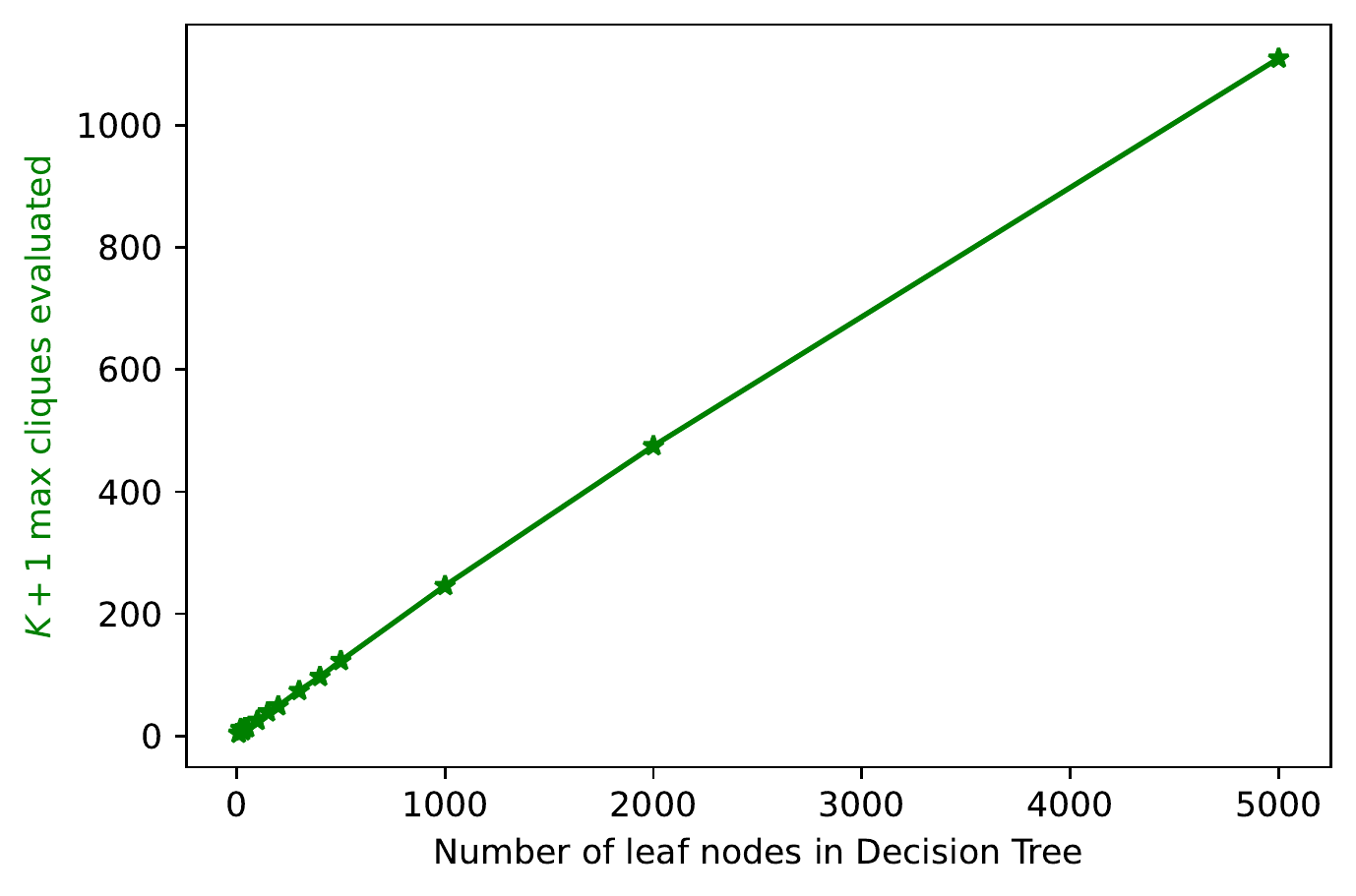}
         \caption{Decision Tree}
         \label{fig:expt:dt:size}
     \end{subfigure}
     \begin{subfigure}[b]{0.45\textwidth}
         \centering
         \includegraphics[width=\textwidth]{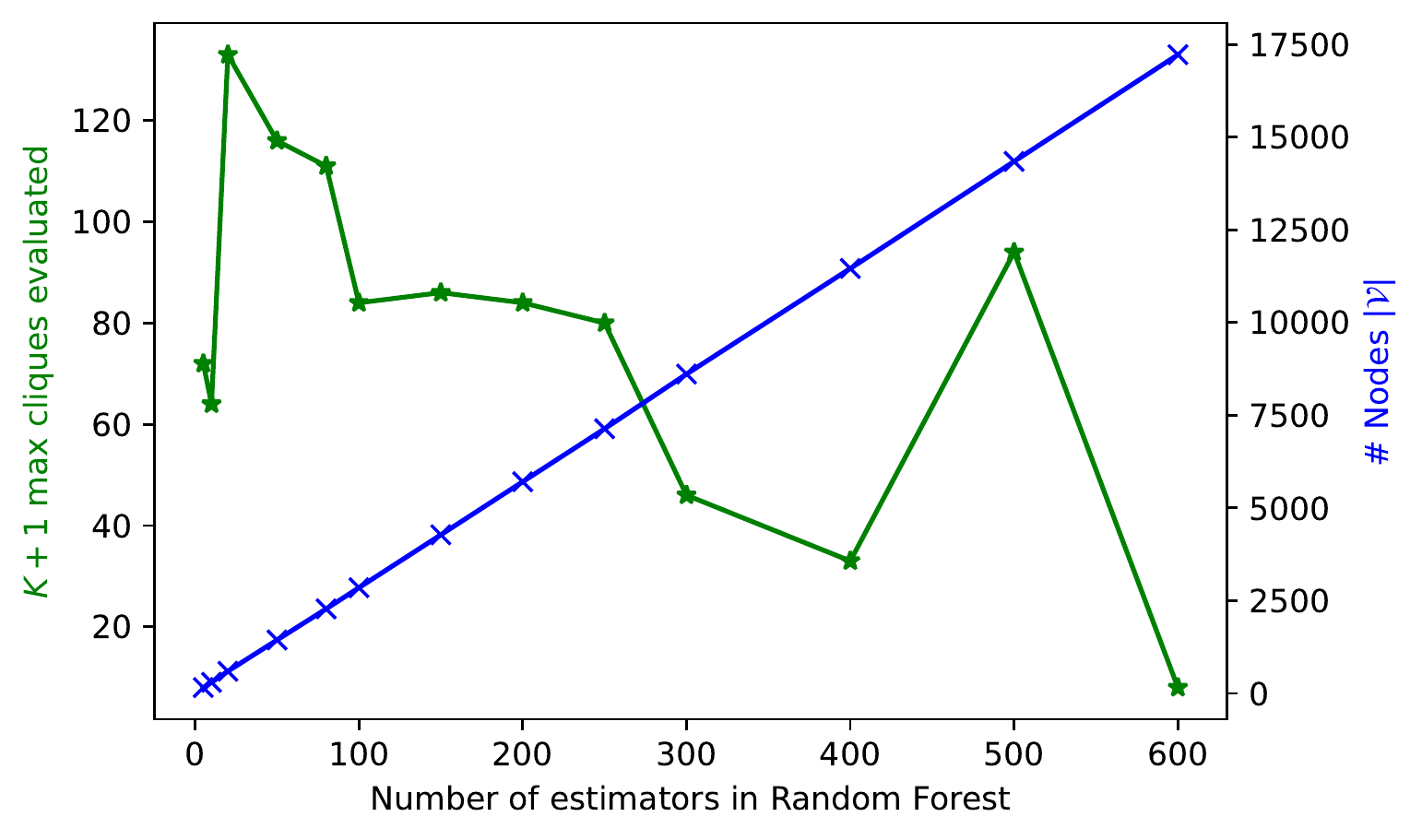}
         \caption{Random Forest}
         \label{fig:expt:ensemble:size}
     \end{subfigure}
        \caption{Effectiveness of pruning by bound for tree-based models}
        \label{fig:expt:pruning}
\end{figure}

\paragraph{Feature combinations that maximize deviation} In Tables~\ref{tab:Adult_LR} and \ref{tab:Adult_GAM}, we report feature values that maximize deviation over selected leaves of the DT reference model, for LR and GAM respectively. For Table~\ref{tab:Adult_LR}, we have chosen the minimum log-odds over leaf 6 (corresponding to Figure~\ref{fig:Adult_leaves_LR_r}, leaf 6, blue curve), which is one of the two leaves $m$ in \eqref{eqn:maxDevAdditiveTree} that maximize the deviation overall (the other being leaf 8). For Table~\ref{tab:Adult_GAM}, the minimum log-odds over leaf 12 is chosen (corresponding to Figure~\ref{fig:Adult_leaves_GAM_r}, leaf 12, blue curve) because this maximizes the deviation overall for most values of $r$. The tables show the 6 features that contribute most to the minimum log-odds. These contributions are again determined using \eqref{eqn:maxAdditiveSep} (with $\max$ replaced by $\min$); since the minimum log-odds occurs in one of the $\ell_\infty$ ball-leaf intersections and this intersection is a Cartesian product, the decomposition in \eqref{eqn:maxAdditiveSep} applies. The contribution of feature $j$ is then $\min_{x_j \in \mathcal{S}_j} f_j(x_j)$. As in Tables~\ref{tab:HMDA_GAM_min} and \ref{tab:HMDA_GAM_max}, we take an average of the contributions over $r$ to give a single ranking of features for all $r$. 

\begin{table}[ht]
    \small
    \centering
    \begin{tabular}{lrrrrrr}
    \toprule
    $r$&EducationNum&HoursPerWeek&Age&MaritalStatus&Occupation&Relationship\\
    \midrule
    0.0&9.0&15.0&23.0& Never-married& Sales& Own-child\\
    0.1&4.7&33.8&26.6& Never-married& Transport-moving& Not-in-family\\
    0.2&4.5&32.5&25.3& Never-married& Transport-moving& Not-in-family\\
    0.4&4.0&30.1&22.5& Never-married& Transport-moving& Not-in-family\\
    0.6&3.5&27.6&19.8& Never-married& Transport-moving& Not-in-family\\
    0.8&1.0&26.1&17.0& Never-married& Farming-fishing& Not-in-family\\
    0.999&1.0&7.7&17.0& Never-married& Other-service& Own-child\\
    1.001&1.0&1.0&17.0& Never-married& Other-service& Own-child\\
    1.2&1.0&1.0&17.0& Never-married& Other-service& Own-child\\
    1.4&1.0&1.0&17.0& Never-married& Other-service& Own-child\\
    1.6&1.0&1.0&17.0& Never-married& Other-service& Own-child\\
    1.8&1.0&1.0&17.0& Never-married& Other-service& Own-child\\
    2.0&1.0&1.0&17.0& Never-married& Other-service& Own-child\\
    $\infty$&1.0&1.0&17.0& Never-married& Other-service& Own-child\\
    \bottomrule
    \end{tabular}
    \caption{Feature values that minimize log-odds for a logistic regression model ($C = 0.01$) over leaf 6 of the decision tree reference model. The 6 features that contribute most to the minimum are shown as a function of certification set radius $r$.}
    \label{tab:Adult_LR}
\end{table}

As $r$ increases, the predominant trend of the values of continuous features is toward extremes of the domain $\mathcal{X}$, depending on the sign of the corresponding LR coefficient $w_j$ or shape of the GAM function $f_j$. For example, EducationNum (education on an ordinal scale), hours per week, and age decrease toward minimum values, while capital gain occupies the minimal interval permitted for leaf 12 (see Figure~\ref{fig:Adult_DT}). (These examples make sense since the log-odds of high income is being minimized.) This movement toward extremes is expected in the LR case because the functions $w_j x_j$ are either increasing or decreasing, and it is also true for GAM if the function $f_j$ is mainly increasing or decreasing. The values sometimes change abruptly in the opposite direction, for example hours per week in both Tables~\ref{tab:Adult_LR}, \ref{tab:Adult_GAM}, and age in the latter. These abrupt changes are due to the minimum jumping from one ball in \eqref{eqn:unionBalls} to another as $r$ increases, but the overall trend eventually prevails. For categorical features, the trend is toward values that minimize $f_j(x_j)$, e.g., \textit{Never-married} marital status, \textit{Without-pay} work class. While the contribution of each of these features to minimizing log-odds may be limited, together they do add up.

\begin{table}[ht]
    \small
    \centering
    \begin{tabular}{lrrrrrr}
    \toprule
    $r$&CapitalLoss&Age&HoursPerWeek&WorkClass&CapitalGain&Race\\
    \midrule
    0.0&0&29.0&40.0& Private&7298& White\\
    0.1&[ 0 40]&[53.6 56.4]&[18.8 20.5]& ?&[5095 5119]& White\\
    0.2&[ 0 78]&[23.3 23.5]&[4.5 9.5]& State-gov&[5095 5119]& White\\
    0.4&[ 0 78]&[20.5 23.5]&[ 2.1 11.9]& State-gov&[5095 5119]& White\\
    0.6&[ 0 78]&[28.8 29.5]&[30.6 31.5]& Private&[5095 5119]& Asian-Pac-Islander\\
    0.8&[1598 1759]&[20.1 23.5]&[30.1 31.5]& State-gov&[5095 5119]& White\\
    0.999&[1598 1759]&[21.4 23.5]&[27.7 31.5]& Private&[5095 5119]& Amer-Indian-Eskimo\\
    1.001&[1598 1759]&[17.  23.5]&[11.6 20.5]& Without-pay&[5095 5119]& Amer-Indian-Eskimo\\
    1.2&[1598 1759]&[17.  23.5]&[ 9.2 20.5]& Without-pay&[5095 5119]& Amer-Indian-Eskimo\\
    1.4&[1598 1759]&[17.  23.5]&[ 6.7 20.5]& Without-pay&[5095 5119]& Amer-Indian-Eskimo\\
    $\infty$&[1598 1759]&[17.  23.5]&[ 1.  20.5]& Without-pay&[5095 5119]& Amer-Indian-Eskimo\\
    \bottomrule
    \end{tabular}
    \caption{Feature values that minimize log-odds for an Explainable Boosting Machine (\texttt{max\_bins} $= 8$) over leaf 12 of the decision tree reference model. The 6 features that contribute most to the minimum are shown as a function of certification set radius $r$. For $r > 0$, the minimizing values of continuous features form an interval because the corresponding functions $f_j$ are piecewise constant.}
    \label{tab:Adult_GAM}
\end{table}

A notable exception to the trend toward extremes is capital loss in Table~\ref{tab:Adult_GAM}. This was discussed in the ``Identification of an artifact'' example in Section~\ref{sec:expt}.

Given the results in Tables~\ref{tab:Adult_LR} and \ref{tab:Adult_GAM}, one question that arises is whether the feature combinations are indeed possible, if not the ones for $r \to \infty$, then at least for some finite value of $r$. For the top features shown in the two tables, while some combinations may appear improbable (for example, EducationNum $= 1$ and $1$ hour per week), we submit that none appear \emph{impossible}. However, if one considers features beyond the top 6, then some ``impossible'' combinations do occur (e.g., a female husband), although the contributions of these features to the minimum log-odds are much less. We touch upon this issue in Appendix~\ref{sec:discuss}. 

The next question one might consider is the implication of these maximal deviations. From Figure~\ref{fig:Adult_DT}, it is seen that leaf 6 classifies individuals with high capital gains as high income with high probability ($0.965$). Leaf 12 adds the attributes of married status and high education, and hence classifies as high income with even higher probability ($0.996$). At the same time, the feature values in Tables~\ref{tab:Adult_LR} and \ref{tab:Adult_GAM}, which minimize log-odds for LR and GAM, also make sense according to basic domain knowledge. For example, few hours per week and young age are associated with lower income, as are \textit{Without-pay} work class and \textit{Amer-Indian-Eskimo} race in the United States. When these conflicting associations occur in combination and the combination does not appear impossible, the question may be which one prevails. Such a question might be resolvable by a domain expert. Alternatively, the disagreement between models $f$ and $f_0$ on the extreme examples in Tables~\ref{tab:Adult_LR}, \ref{tab:Adult_GAM} may be reason to be cautious about using either of the models in these cases. This might lead to a way of combining the models or abstaining from prediction altogether. Lastly, the anomalously low region in the CapitalLoss function identified in Table~\ref{tab:Adult_GAM} is a clear, concrete example where further investigation is warranted.

\clearpage

\subsection{Lending Club dataset}
\label{sec:expt_add:Lending}

This dataset consists of 2.26 million rows with 14 features on loans. The target variable is whether a loan will be paid-off or defaulted on. Features describe the terms of the loan, e.g. duration, grade, purpose, etc. and borrower financial information such as credit history and income. For this case study, we consider a loan approval scenario using only information available at the time of application. In particular, we exclude the feature `total\_pymnt' (total payment over time on the loan), which becomes known at essentially the same time as the target variable. (When `total\_pymnt' is included as a feature, the prediction task becomes easy and accuracies in the high $90\%$ range are possible.)

\begin{figure}[ht]
    \centering
    \includegraphics[width=\textwidth]{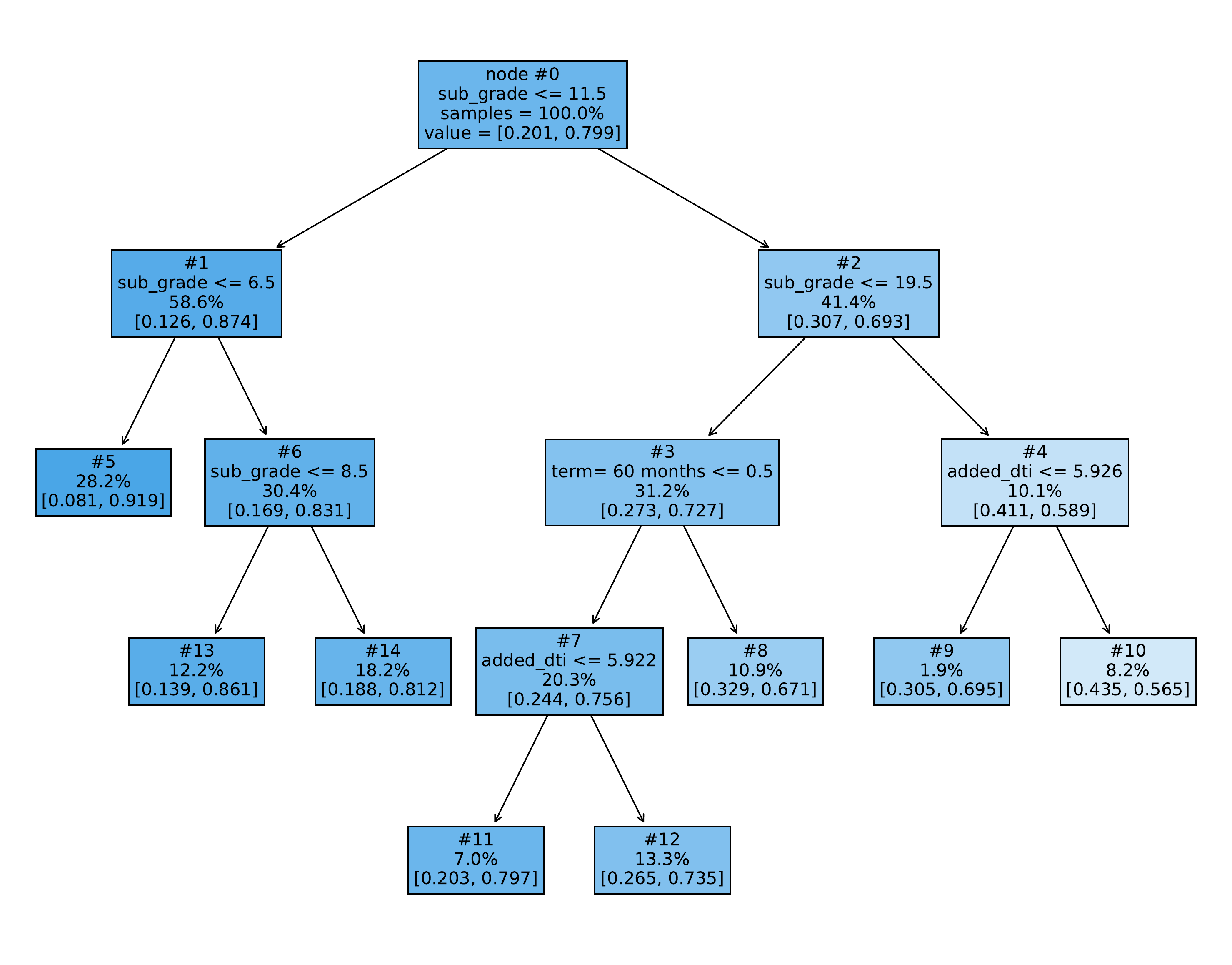}
    \caption{Decision tree reference model with 8 leaves for the Lending Club dataset.}
    \label{fig:Lending_DT}
\end{figure}

\paragraph{Reference model} Figure~\ref{fig:Lending_DT} depicts the $8$-leaf DT reference model for the Lending Club dataset. Most of the splits partition the `sub\_grade' feature, which is a measure of the quality of the loan ($0$--$34$ range, lower is better). Node 3 differentiates between 60-month terms and 36-month terms (the only alternative), while nodes 4 and 7 split on `added\_dti' (added debt-to-income ratio), which is the ratio between 12 months worth of the loan's payment installments and the borrower's annual income. While the structure of the DT agrees with domain knowledge (lower `sub\_grade' and lower `added\_dti' correlate with higher repayment probability), the test set accuracy of $79.8\%$ is no better than that of the trivial predictor that always returns the majority class of ``paid off''. The DT's AUC of $0.689$ however does indicate an improvement over the trivial predictor.

\begin{table}[ht]
    \small
    \centering
    \begin{tabular}{rrrrr}
    \toprule
    $C$&nonzeros&$\ell_1$ norm&accuracy&AUC\\
    \midrule
    1e-4&1&0.4&0.798&0.693\\
    3e-4&2&0.6&0.799&0.695\\
    1e-3&6&1.0&0.799&0.702\\
    3e-3&8&1.3&0.799&0.702\\
    1e-2&12&1.6&0.800&0.702\\
    3e-2&17&2.1&0.799&0.703\\
    1e-1&22&3.1&0.799&0.703\\
    3e-1&24&3.7&0.799&0.703\\
    1e+0&26&4.1&0.799&0.703\\
    3e+0&26&4.2&0.799&0.703\\
    \bottomrule
    \end{tabular}
    \caption{Number of nonzero coefficients, $\ell_1$ norm of coefficients, test set accuracy, and AUC for logistic regression models on the Lending Club dataset as a function of inverse $\ell_1$ penalty $C$.}
    \label{tab:Lending_LR_stats}
\end{table}

\begin{table}[ht]
    \small
    \centering
    \begin{tabular}{rrr}
    \toprule
    max\_bins&accuracy&AUC\\
    \midrule
    4&0.798&0.696\\
    8&0.798&0.703\\
    16&0.799&0.704\\
    32&0.799&0.705\\
    64&0.799&0.705\\
    128&0.799&0.705\\
    256&0.799&0.705\\
    512&0.799&0.705\\
    1024&0.799&0.705\\
    \bottomrule
    \end{tabular}
    \caption{Test set accuracy and AUC for Explainable Boosting Machines on the Lending Club dataset as a function of \texttt{max\_bins} parameter.}
    \label{tab:Lending_GAM_stats}
\end{table}

\paragraph{LR and GAM models} Tables~\ref{tab:Lending_LR_stats} and \ref{tab:Lending_GAM_stats} show the statistics of the LR and GAM classifiers that were trained on the Lending Club data. Similar to the DT reference model, the difference compared to Tables~\ref{tab:Adult_LR_stats}, \ref{tab:Adult_GAM_stats} for the Adult Income dataset is that the accuracies remain no better than that of the trivial predictor, while the AUC does not show much increase either. These statistics suggest that the prediction task is difficult with the features available. Figures~\ref{fig:Lending_LR_coef} and \ref{fig:Lending_GAM_functions} display plots for the LR model with $C = 0.01$ and GAM with \texttt{max\_bins} $= 8$, which are again chosen as representative models. The GAM in particular shows sensible monotonic behavior as functions of `sub\_grade', `int\_rate' (interest rate), `dti' (debt-to-income ratio), etc., despite the unimpressive accuracy.

\begin{figure}[ht]
    \centering
    \includegraphics[width=0.62\textwidth]{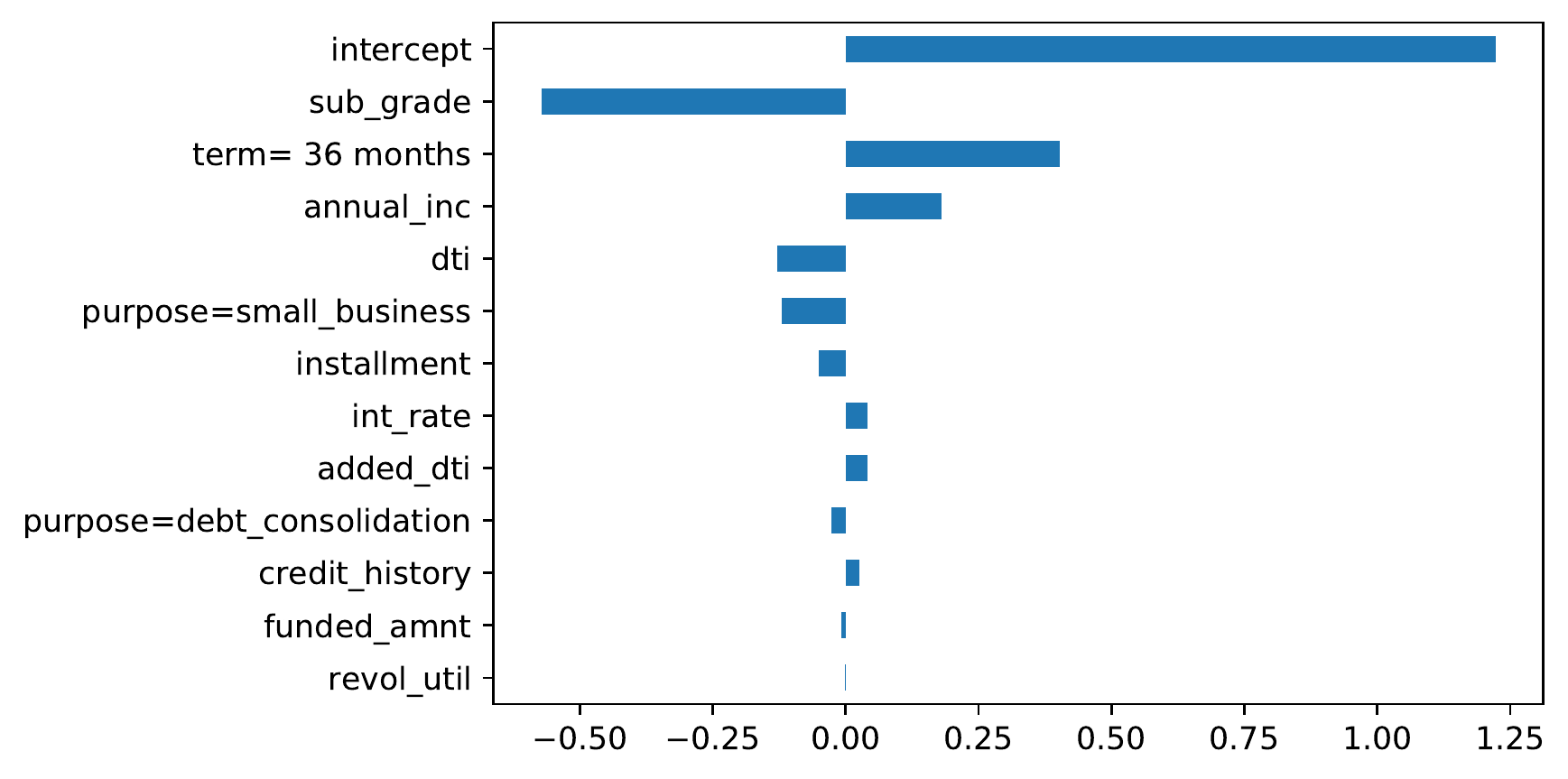}
    \caption{Coefficient values of the logistic regression model with $C = 0.01$ (12 nonzeros) for the Lending Club dataset.}
    \label{fig:Lending_LR_coef}
\end{figure}

\begin{figure}
    \centering
    \includegraphics[width=0.495\textwidth]{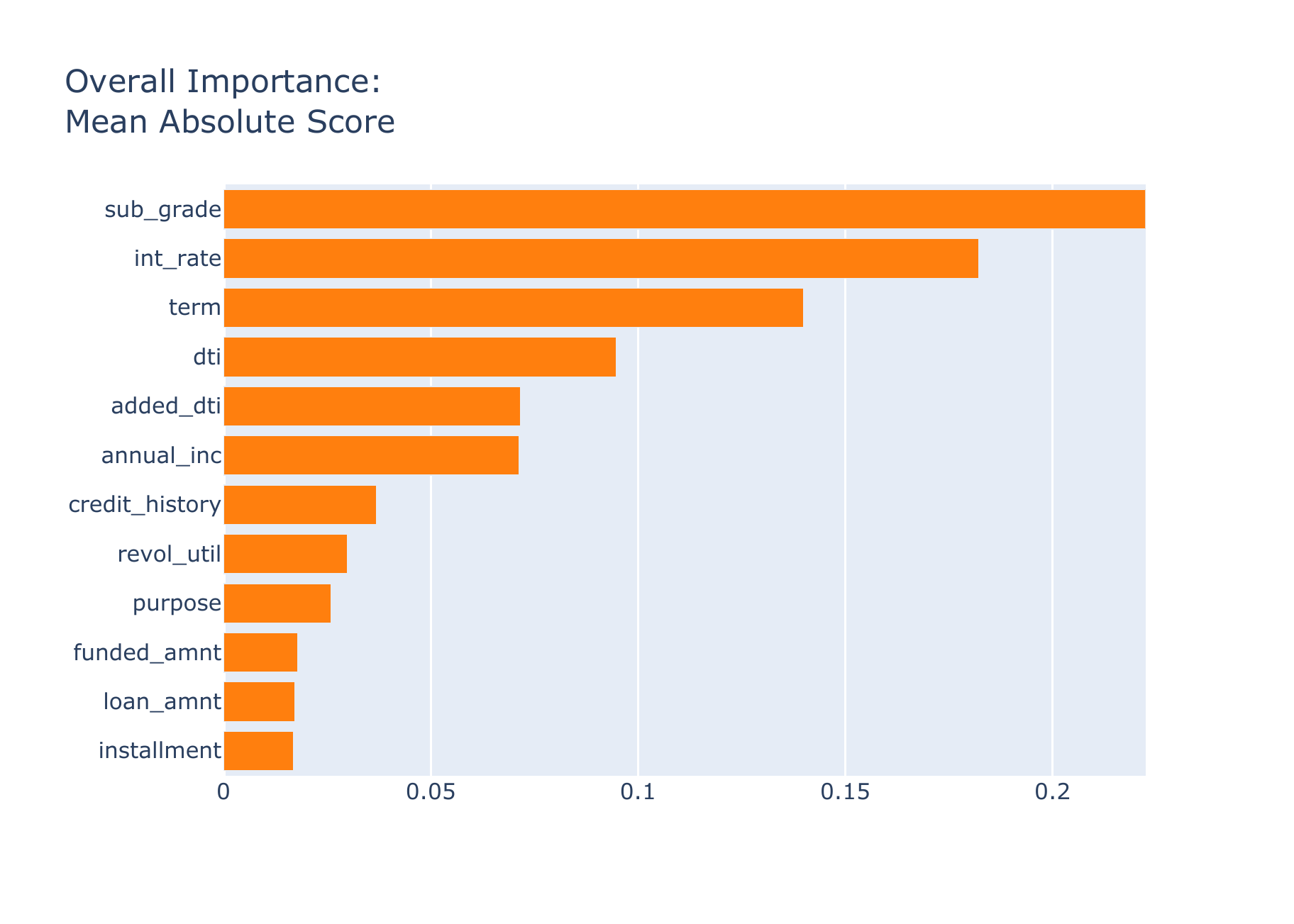}
    \includegraphics[width=0.495\textwidth]{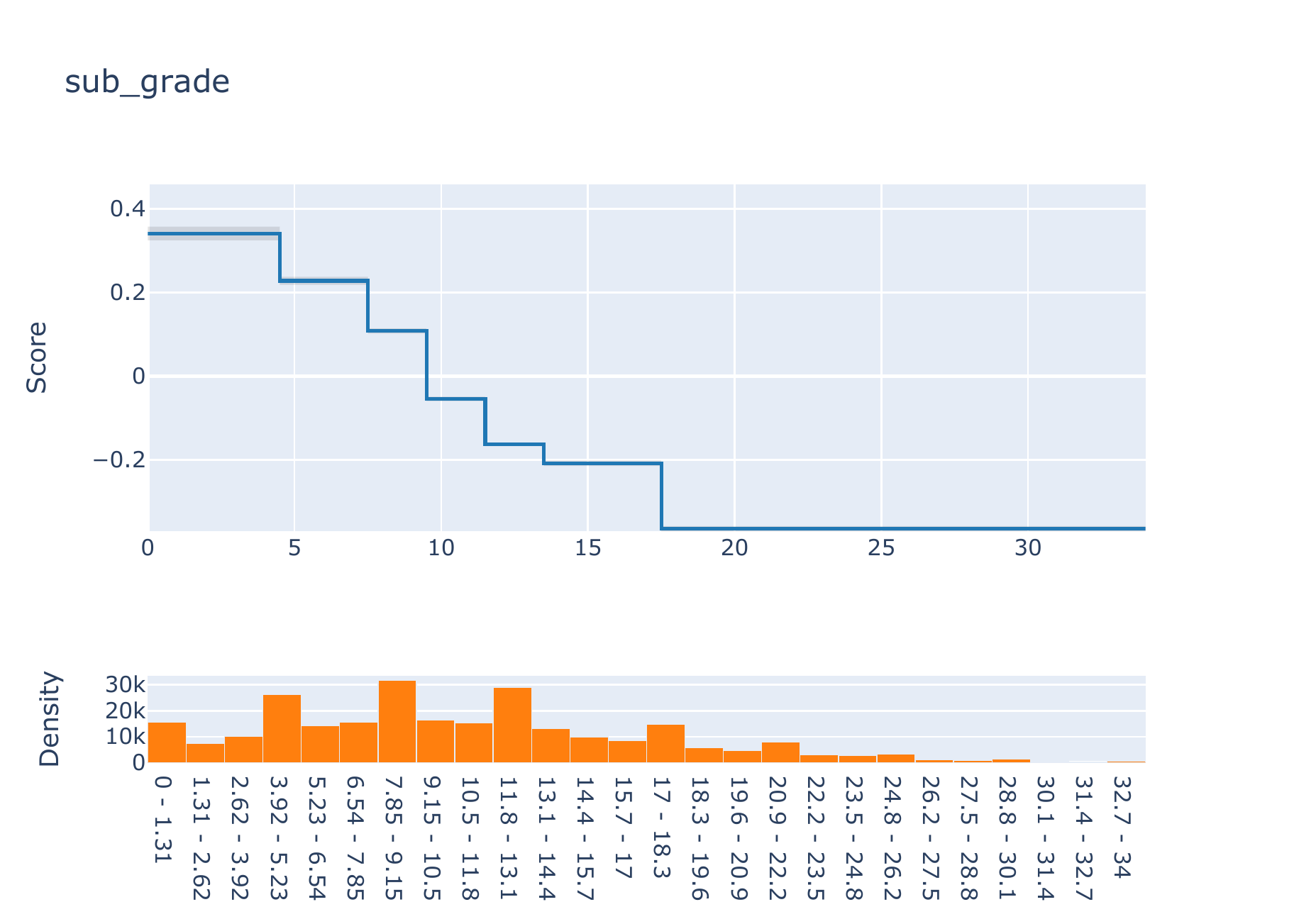}
    \includegraphics[width=0.495\textwidth]{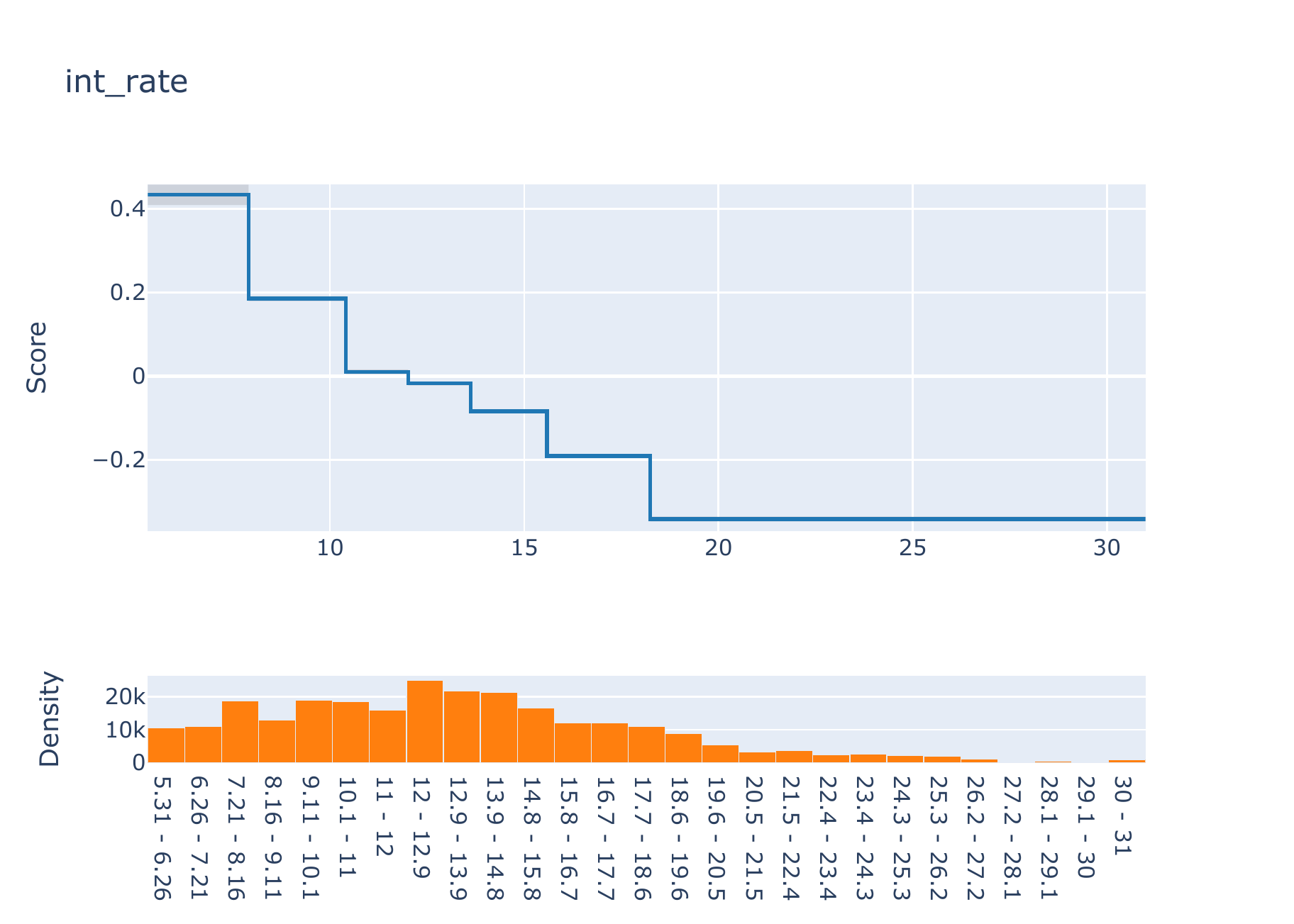}
    \includegraphics[width=0.495\textwidth]{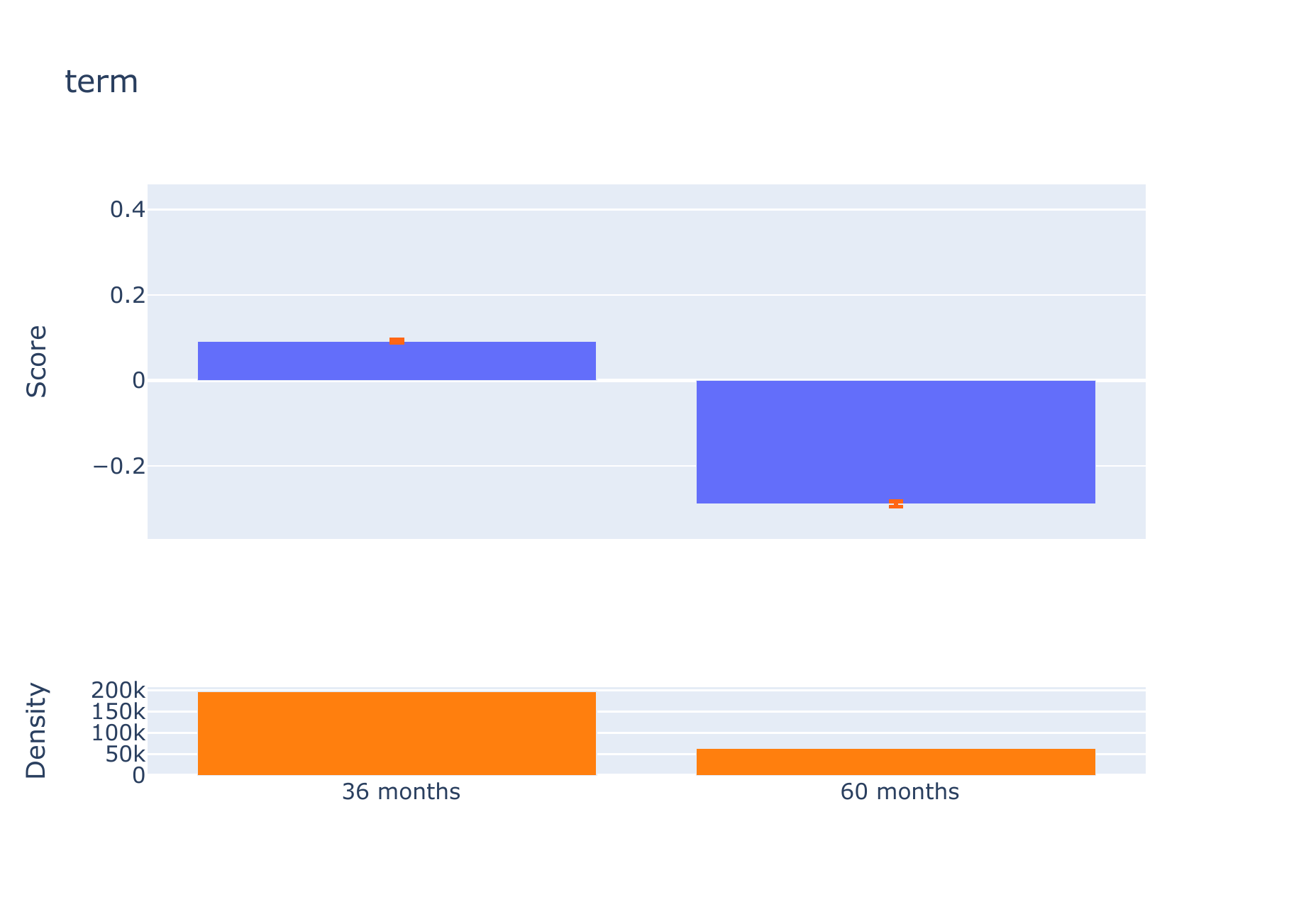}
    \includegraphics[width=0.495\textwidth]{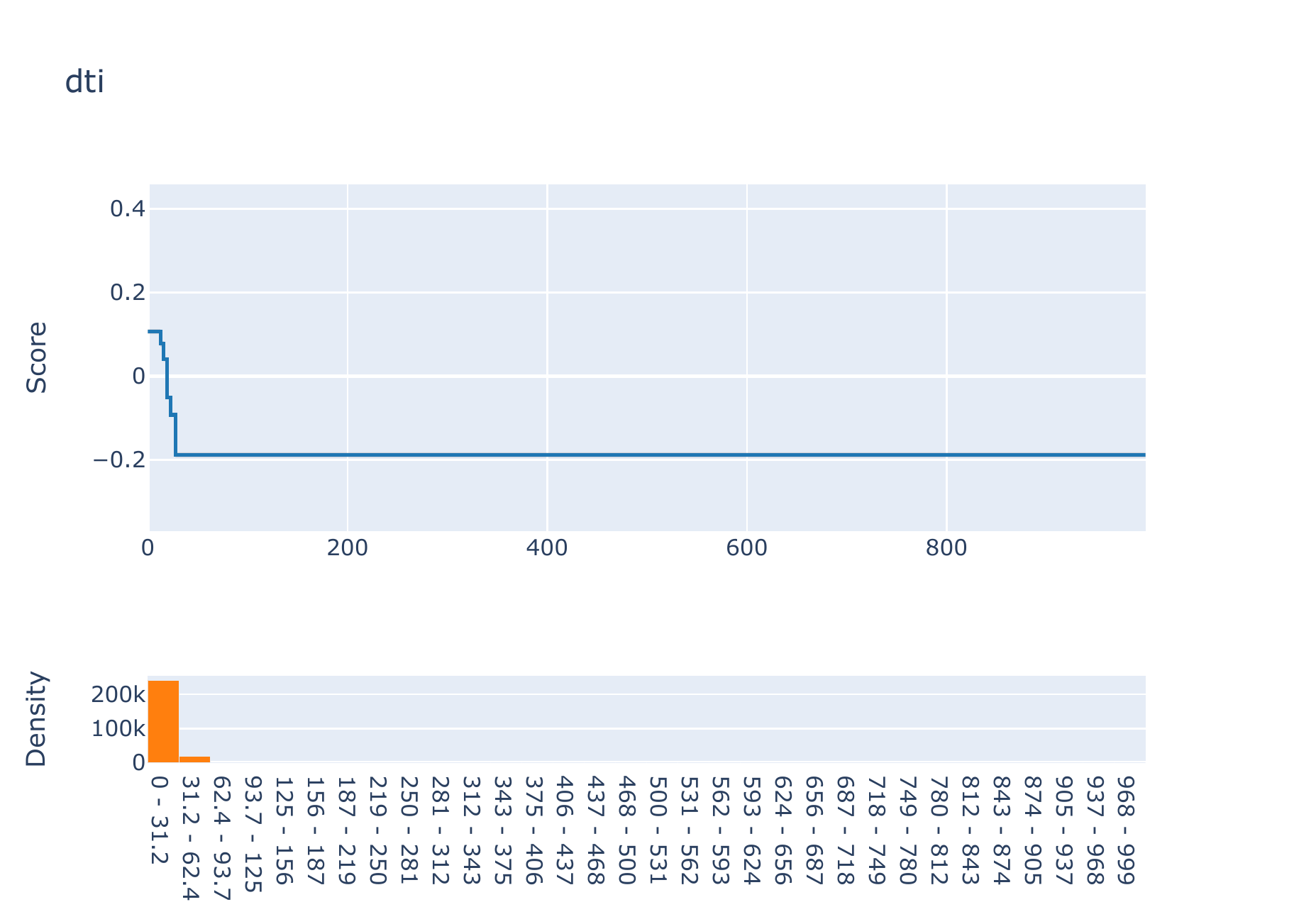}
    \includegraphics[width=0.495\textwidth]{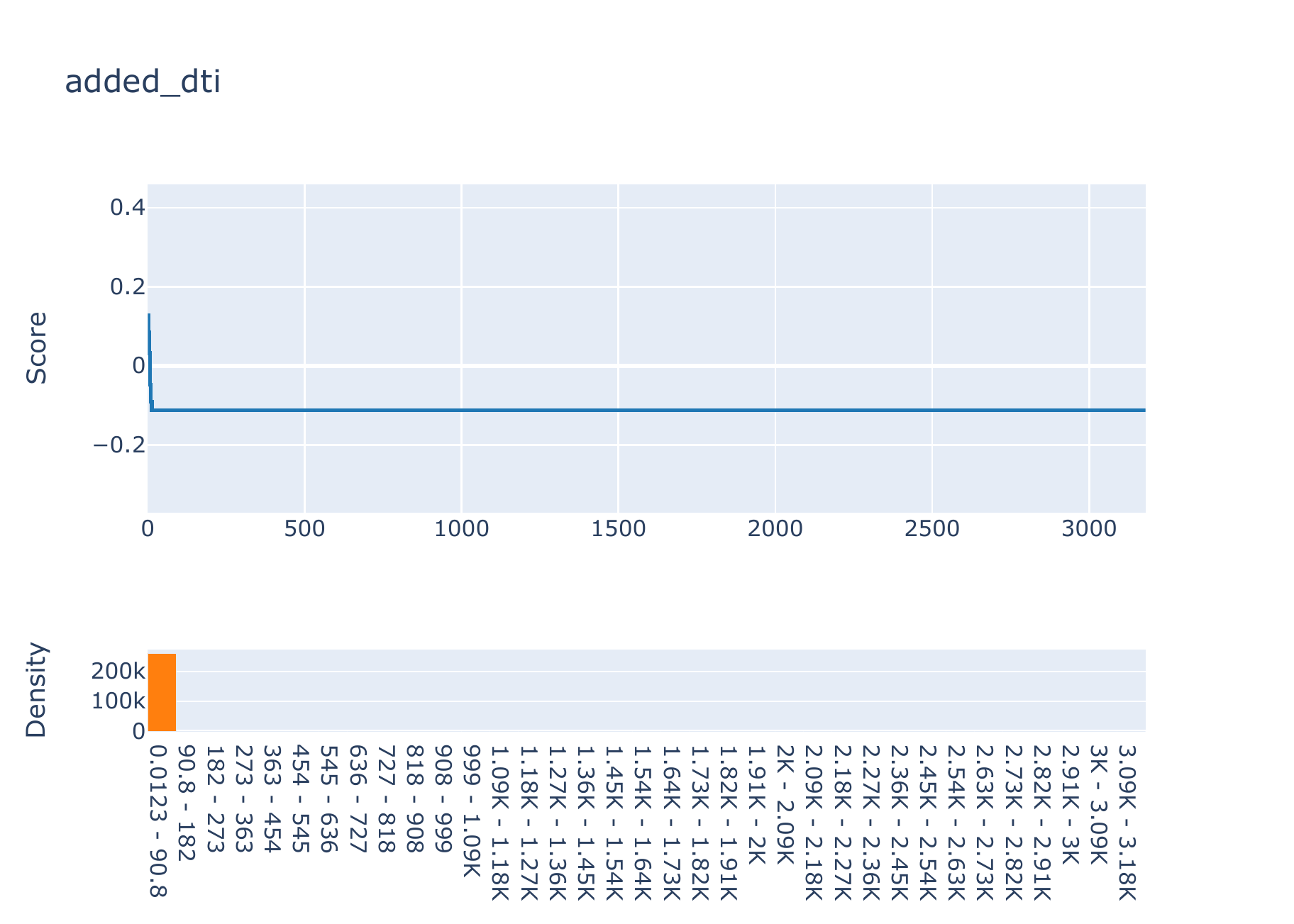}
    \includegraphics[width=0.495\textwidth]{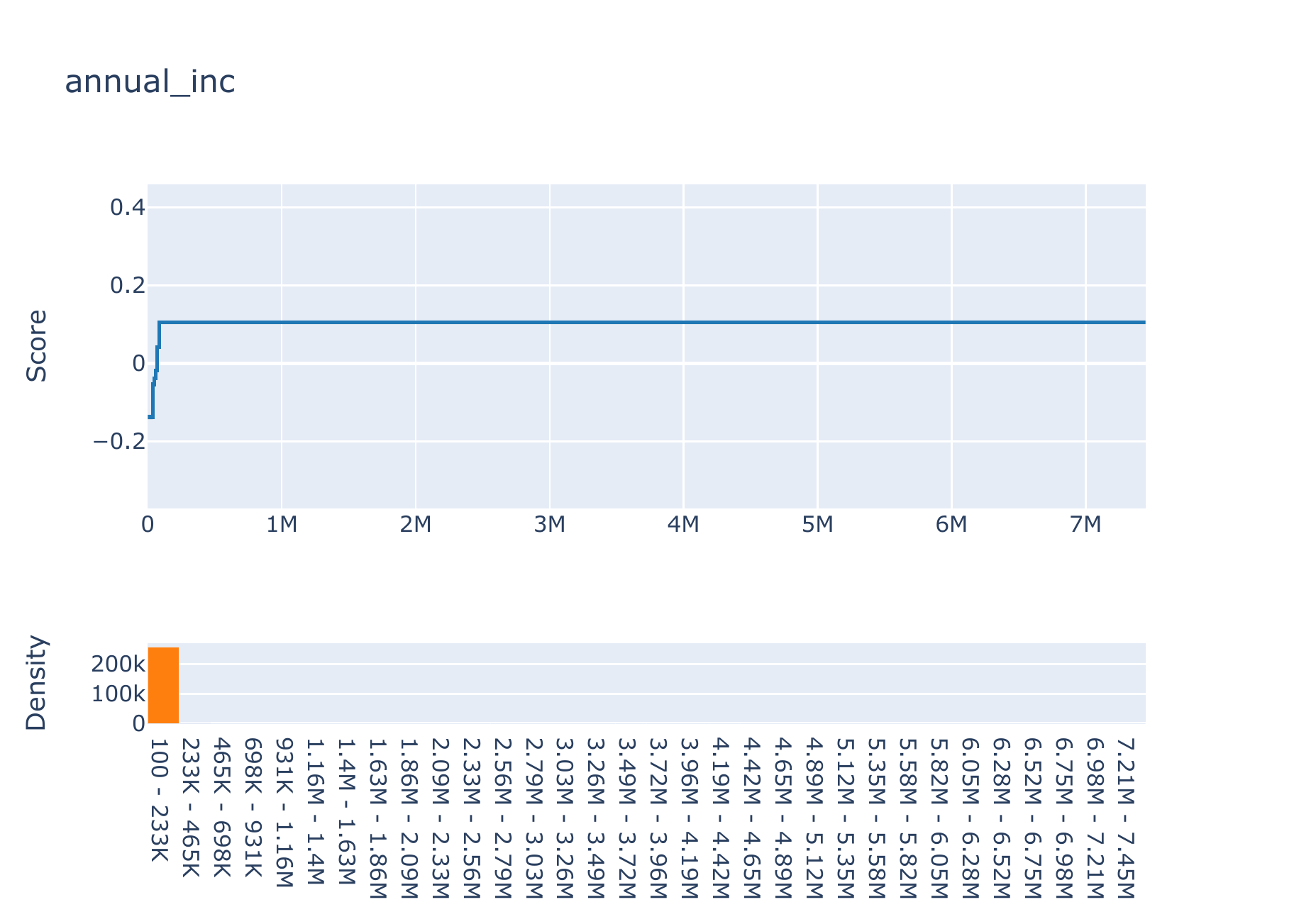}
    \includegraphics[width=0.495\textwidth]{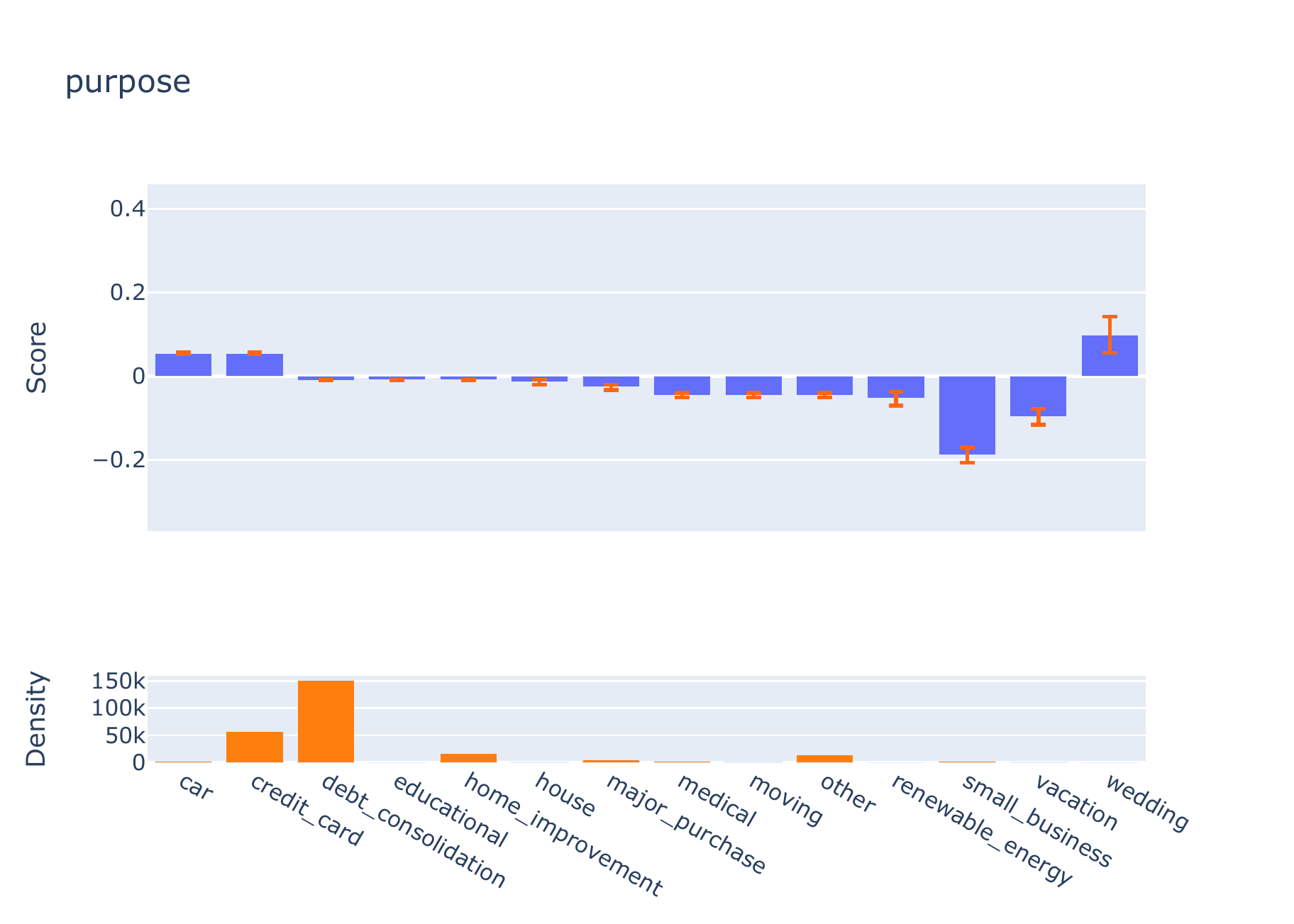}
    \caption{Feature importances and selected univariate functions $f_j$ for the Explainable Boosting Machine with \texttt{max\_bins} $= 8$ on the Lending Club dataset.}
    \label{fig:Lending_GAM_functions}
\end{figure}

\clearpage

\begin{figure}[ht]
    \centering
    \includegraphics[width=0.66\textwidth]{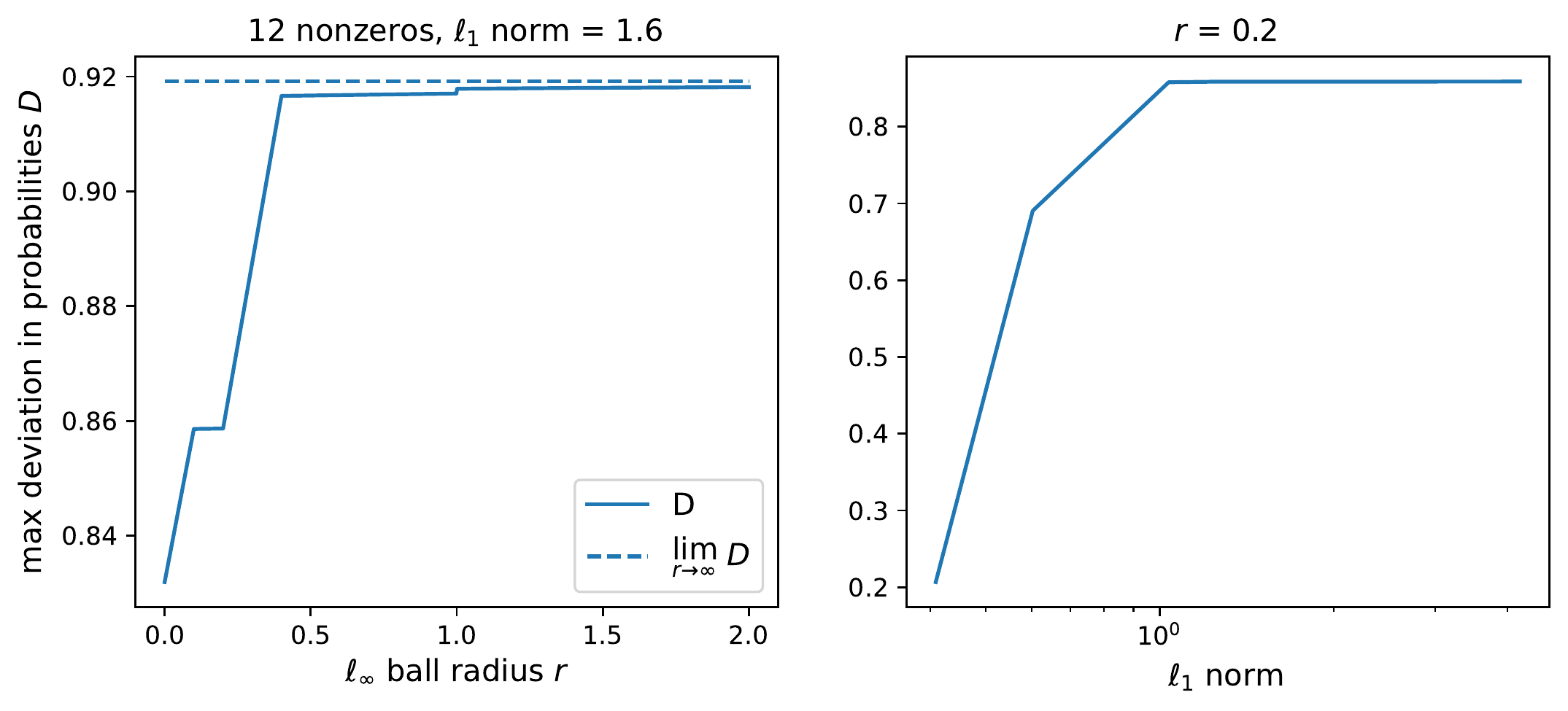}
    \caption{Maximum deviation $D$ for logistic regression models on the Lending Club dataset as a function of certification set size (radius $r$) and model smoothness ($\ell_1$ norm).}
    \label{fig:Lending_LR}
\end{figure}

\begin{figure}[ht]
    \centering
    \includegraphics[width=0.66\textwidth]{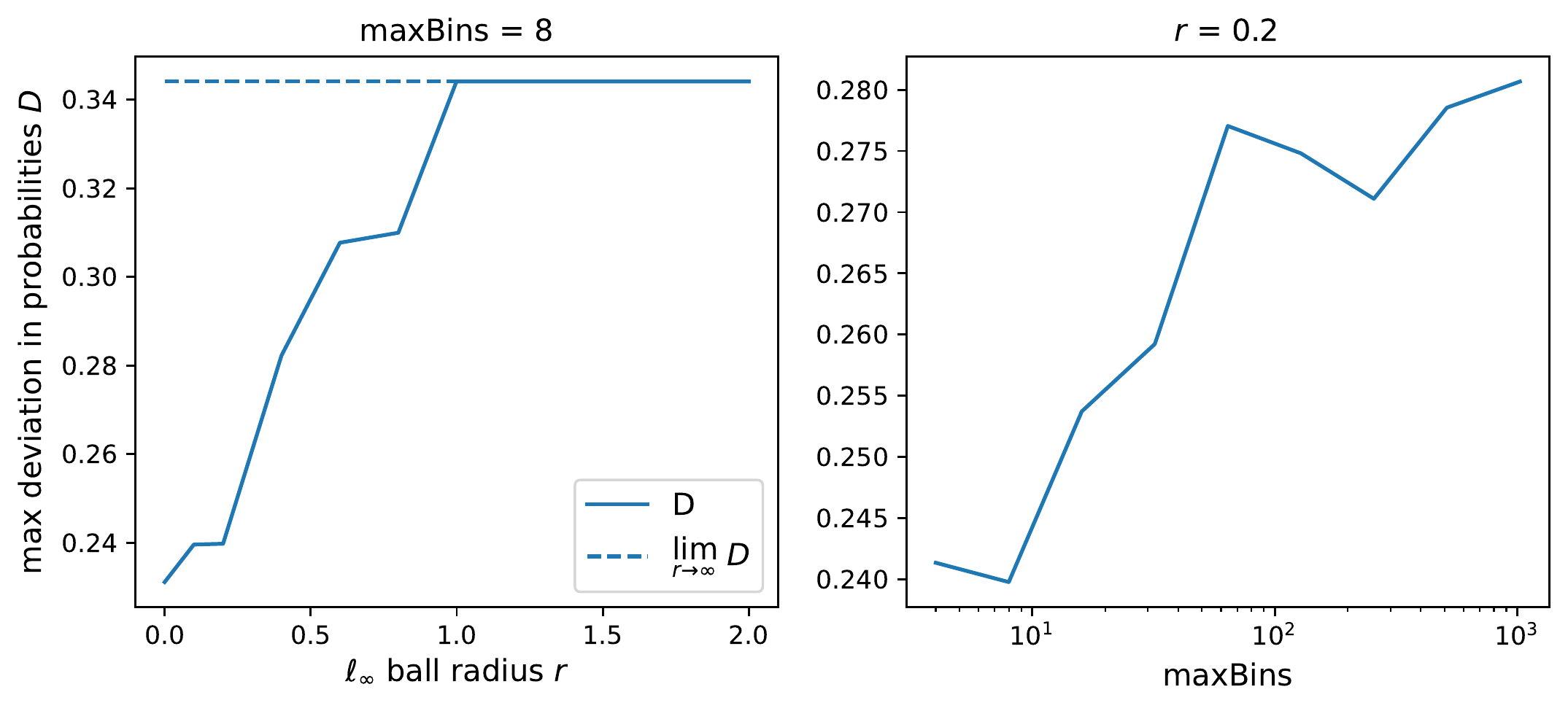}
    \caption{Maximum deviation $D$ for Explainable Boosting Machines on the Lending Club dataset as a function of certification set size (radius $r$) and model smoothness (\texttt{max\_bins} parameter).}
    \label{fig:Lending_GAM}
\end{figure}

\begin{figure}[ht]
     \centering
     \begin{subfigure}[b]{0.45\textwidth}
         \centering
         \includegraphics[width=\textwidth]{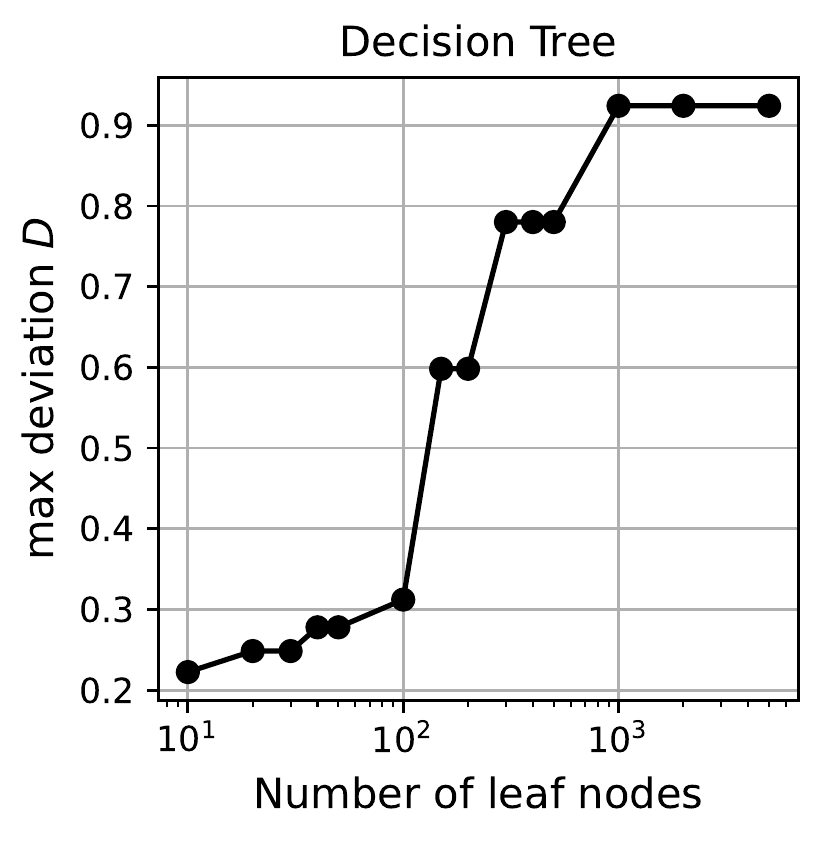}
         \caption{Decision Tree}
         \label{fig:expt:dt:lending:d}
     \end{subfigure}
     \begin{subfigure}[b]{0.45\textwidth}
         \centering
         \includegraphics[width=\textwidth]{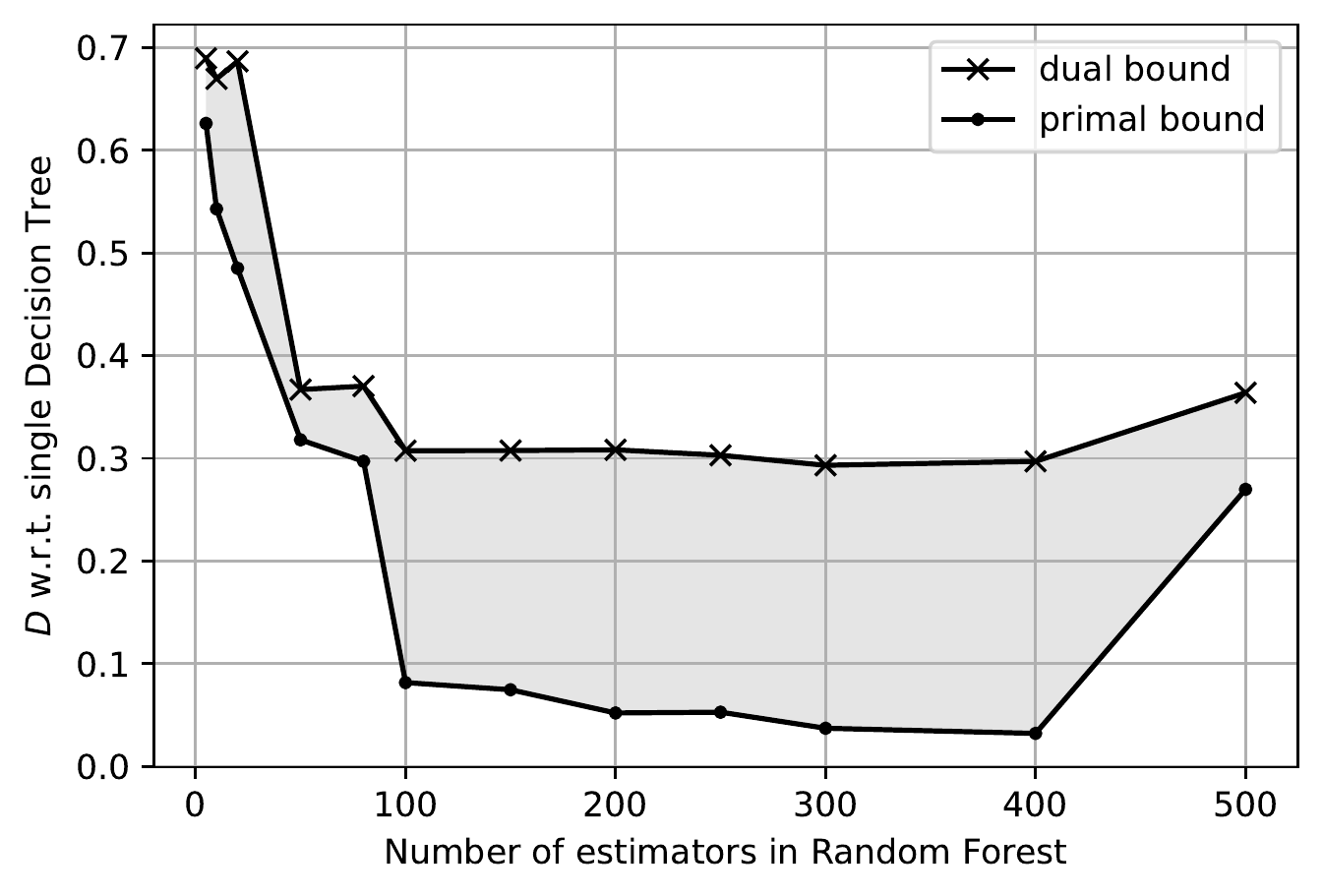}
         \caption{Random Forest}
         \label{fig:expt:ensemble:lending:d}
     \end{subfigure}
        \caption{Maximum deviations computed for tree and tree ensembles on the Lending Club dataset}
        \label{fig:expt:lending:d}
\end{figure}

\paragraph{Maximum deviation summary} Figures~\ref{fig:Lending_LR}--\ref{fig:expt:lending:d} show maximum deviation as functions of certification set radius $r$ and model complexity parameters, in a similar manner as  Figure~\ref{fig:Adult_all_add} for the Adult Income dataset. The qualitative patterns are similar to before: increasing maximum deviation in all cases except with the number of RF estimators in Figure~\ref{fig:expt:ensemble:lending:d}, where the upper bound is stable around $0.7$. A major quantitative difference is that the maximum deviations for the GAM in Figure~\ref{fig:Lending_GAM} are much lower than for the other models, in particular LR in Figure~\ref{fig:Lending_LR}. This is likely due to the fact that the GAM functions $f_j$ in Figure~\ref{fig:Lending_GAM_functions} are bounded while still being monotonic, unlike the linear functions $w_j x_j$ in the LR model.

\paragraph{Breakdown by leaves of $f_0$} Figures~\ref{fig:Lending_LR_leaves_r}--\ref{fig:Lending_EBM_leaves_maxBins} show a breakdown of the deviations for LR and GAM by leaves of the reference model, similar to Figures~\ref{fig:Adult_leaves_LR_r}--\ref{fig:Adult_leaves_GAM_maxBins} and again on the log-odds scale. One difference is that in Figure~\ref{fig:Lending_LR_leaves_r}, the deviations for finite $r$ do not come close to their $r \to \infty$ counterparts in most cases. In Figure~\ref{fig:Lending_EBM_leaves_r} however, the $r \to \infty$ values are all attained when $r$ is slightly greater than $1$.

\begin{figure}[t]
    \centering
    \includegraphics[width=\textwidth]{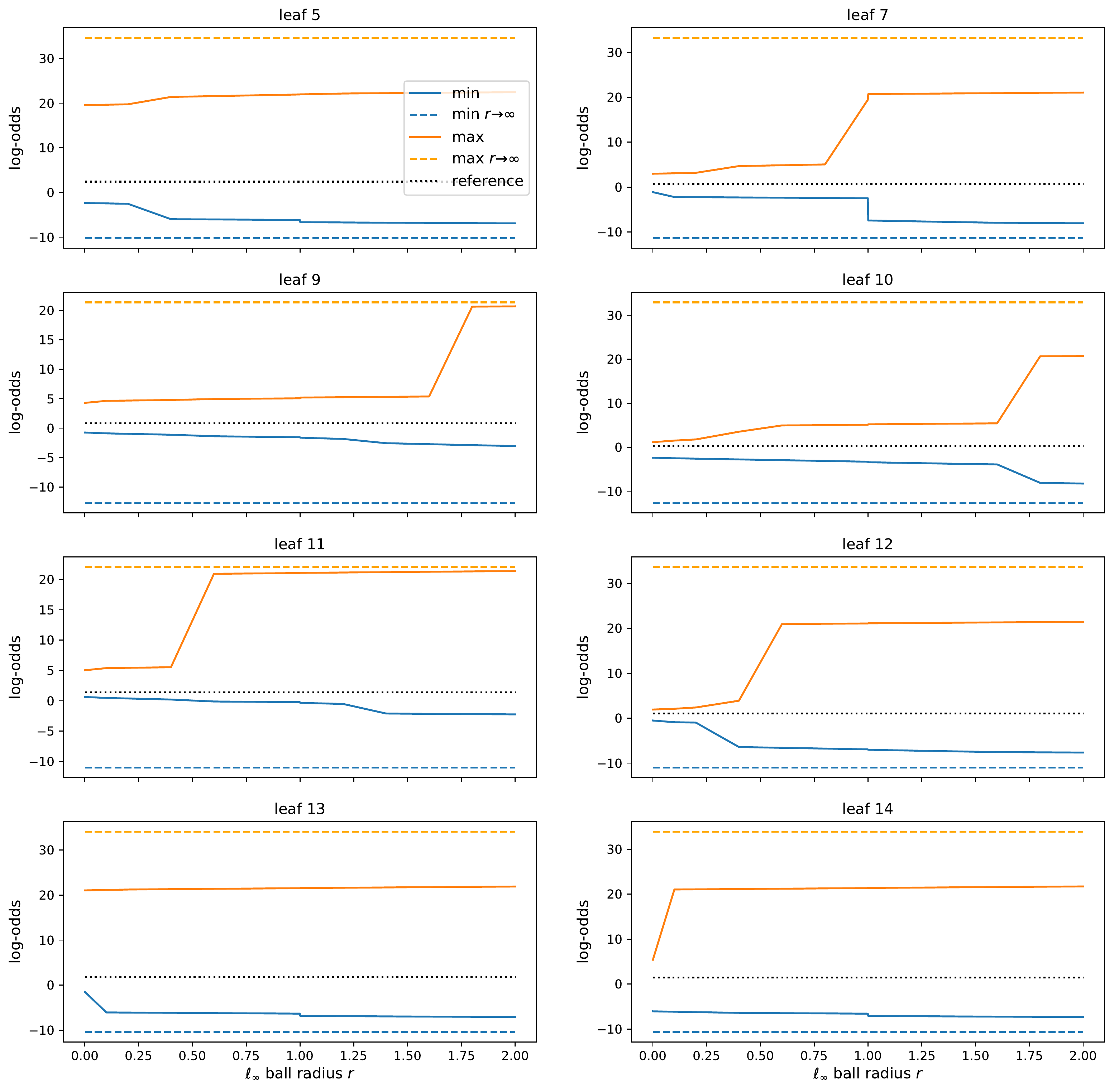}
    \caption{Minimum and maximum predicted log-odds for a logistic regression model with inverse $\ell_1$ penalty $C = 0.01$ on the Lending Club dataset, as a function of certification set size (radius $r$) and broken down by leaves of the decision tree reference model.}
    \label{fig:Lending_LR_leaves_r}
\end{figure}

\begin{figure}[t]
    \centering
    \includegraphics[width=\textwidth]{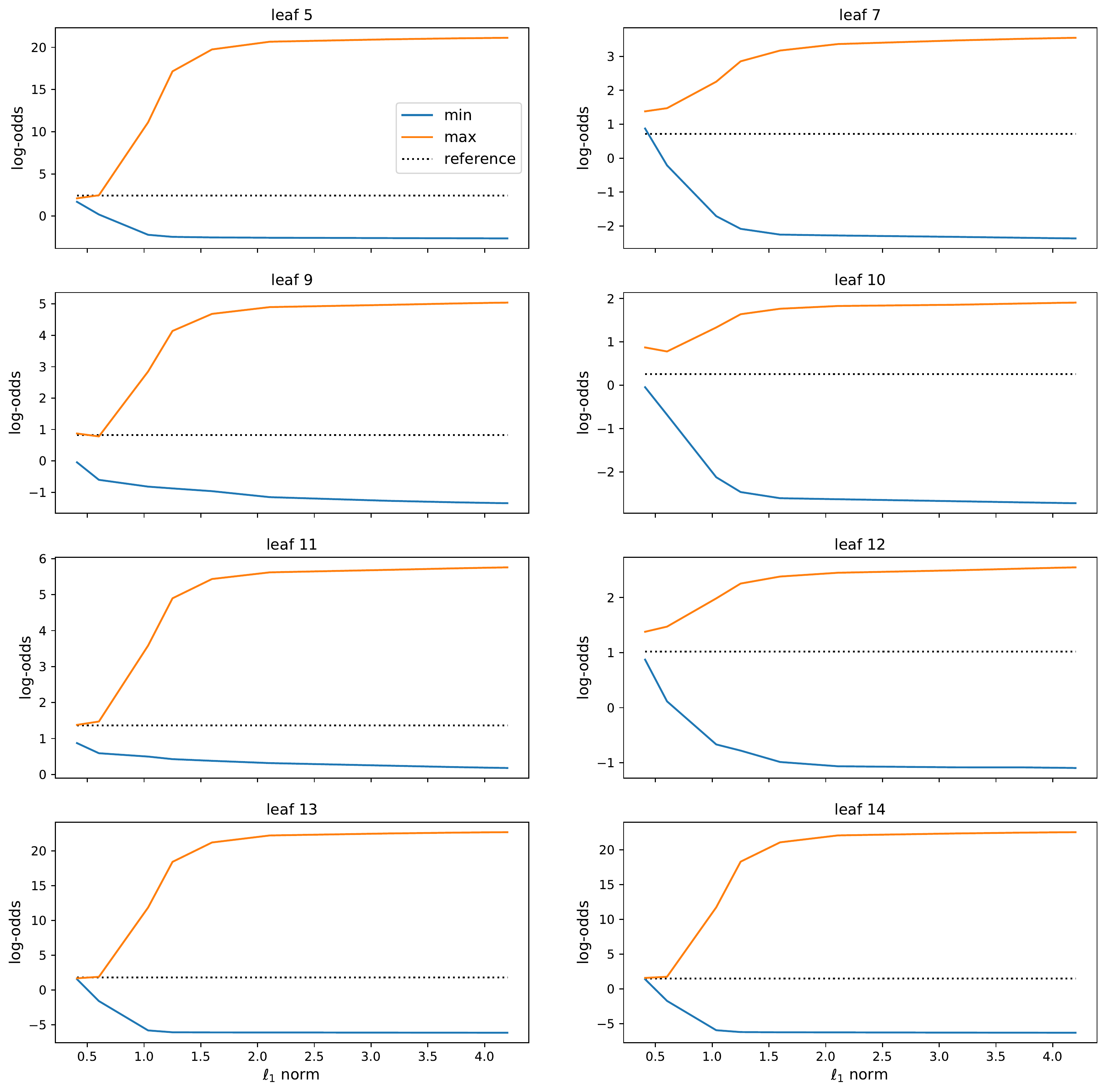}
    \caption{Minimum and maximum predicted log-odds for logistic regression models with different $\ell_1$ penalties $C$ on the Lending Club dataset, broken down by leaves of the decision tree reference model. The certification set $\ell_\infty$ ball radius is $r = 0.2$.}
    \label{fig:Lending_LR_leaves_C}
\end{figure}

\begin{figure}[t]
    \centering
    \includegraphics[width=\textwidth]{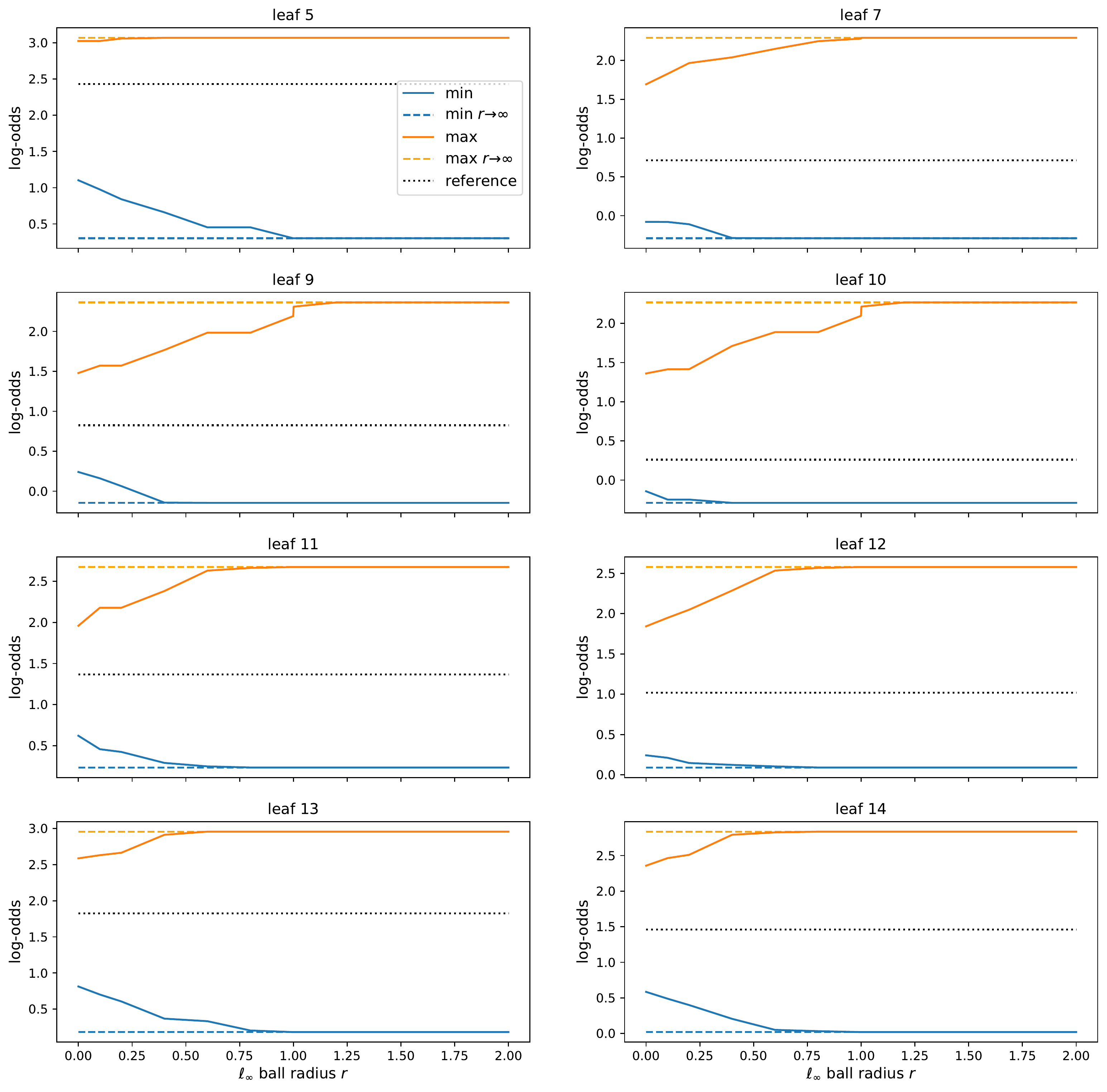}
    \caption{Minimum and maximum predicted log-odds for an Explainable Boosting Machine with \texttt{max\_bins} $= 8$ on the Lending Club dataset, as a function of certification set size (radius $r$) and broken down by leaves of the decision tree reference model.}
    \label{fig:Lending_EBM_leaves_r}
\end{figure}

\begin{figure}[t]
    \centering
    \includegraphics[width=\textwidth]{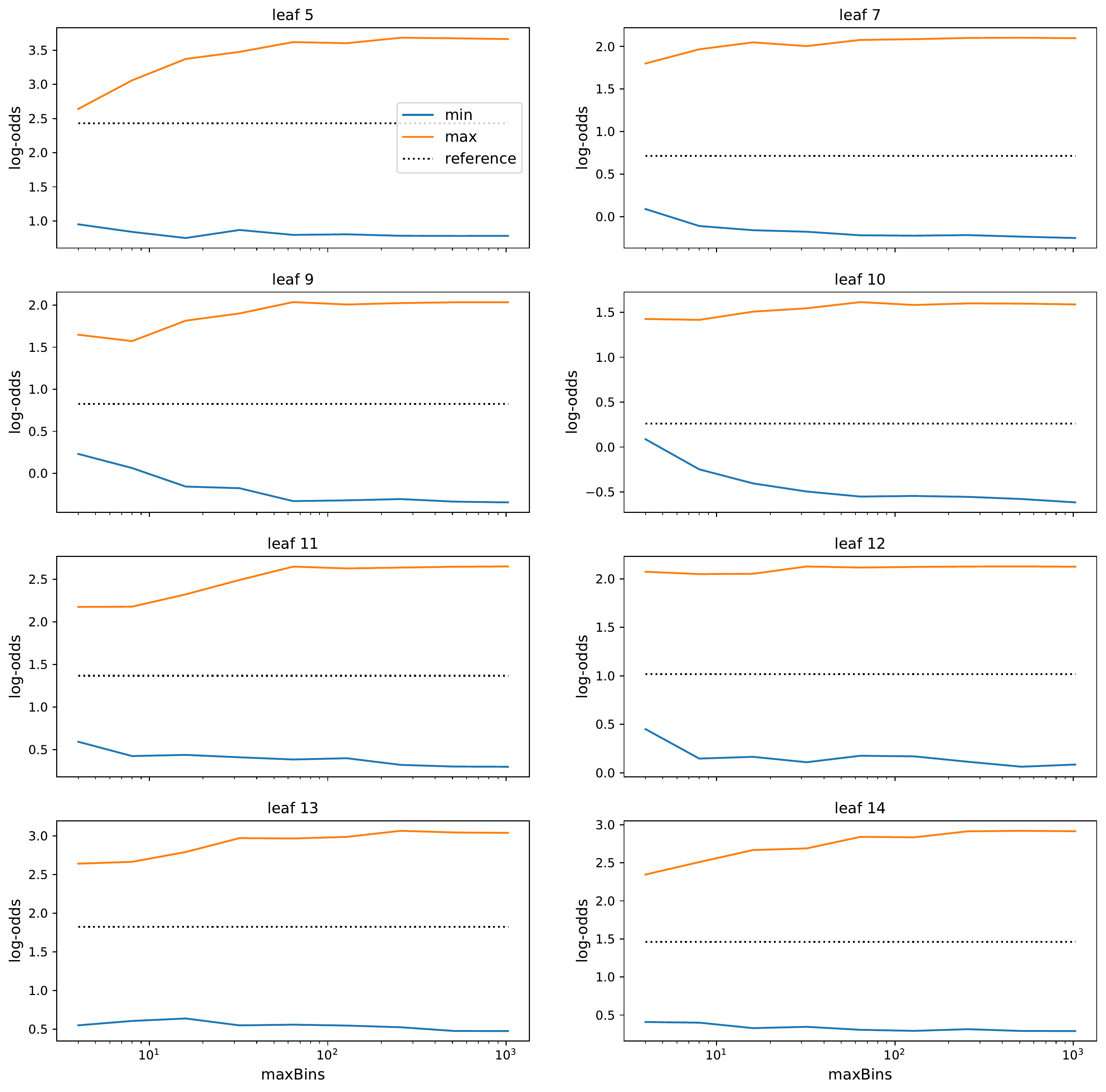}
    \caption{Minimum and maximum predicted log-odds for Explainable Boosting Machines with different \texttt{max\_bins} values on the Lending Club dataset, broken down by leaves of the decision tree reference model. The certification set $\ell_\infty$ ball radius is $r = 0.2$.}
    \label{fig:Lending_EBM_leaves_maxBins}
\end{figure}

\clearpage

\paragraph{Running time, maximal cliques evaluated} Figure~\ref{fig:Lending_time} shows the time required to compute the maximum deviation for LR and GAM on the Lending Club dataset. Figure~\ref{fig:expt:lending:pruning} shows the number of $K+1$-maximal cliques evaluated for DT and RF as well as the number of nodes in the graph. The observations are the same as in Figures~\ref{fig:Adult_time} and \ref{fig:expt:pruning}.

\begin{figure}[ht]
    \centering
    \includegraphics[width=0.495\textwidth]{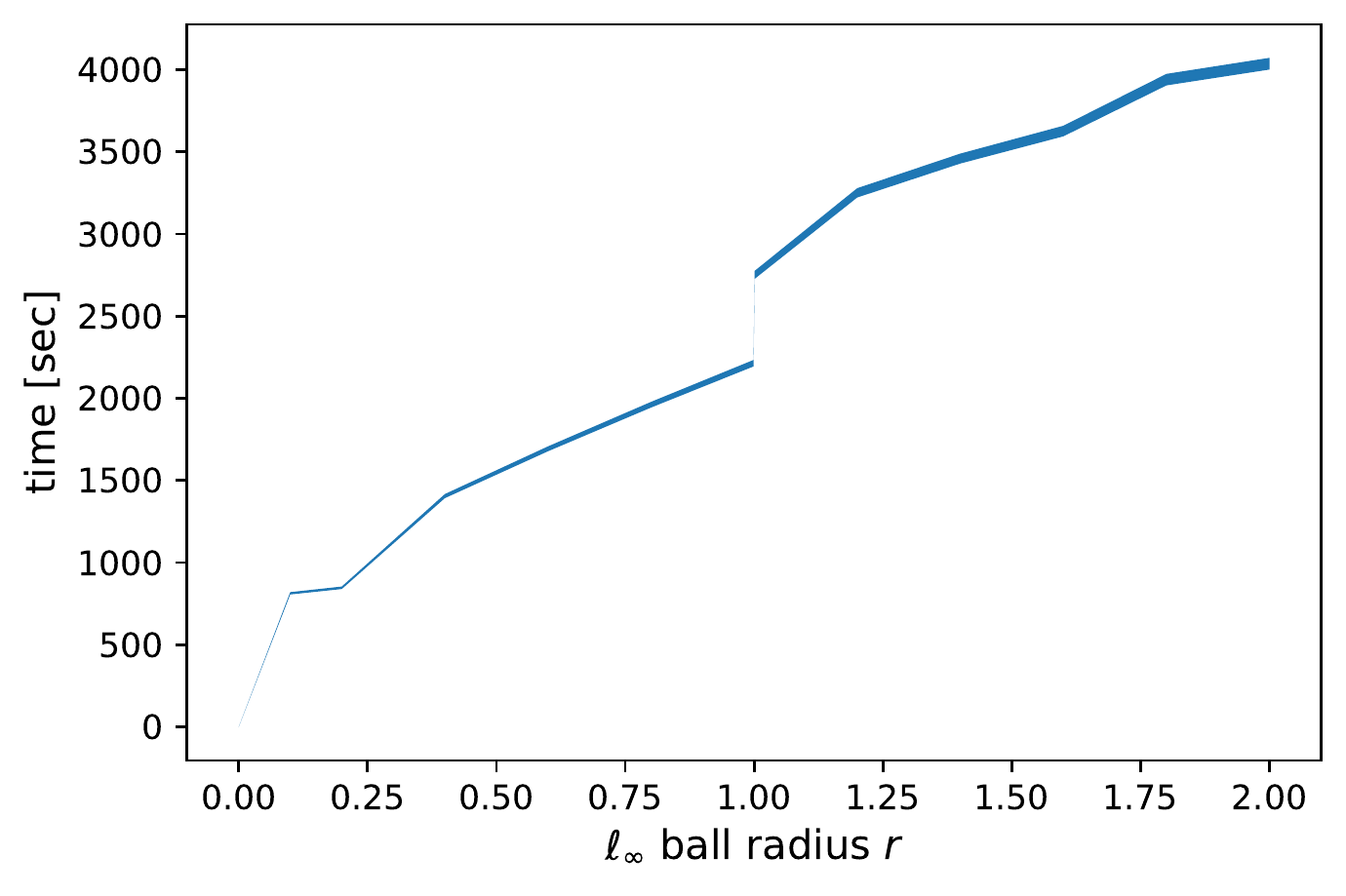}
    \includegraphics[width=0.495\textwidth]{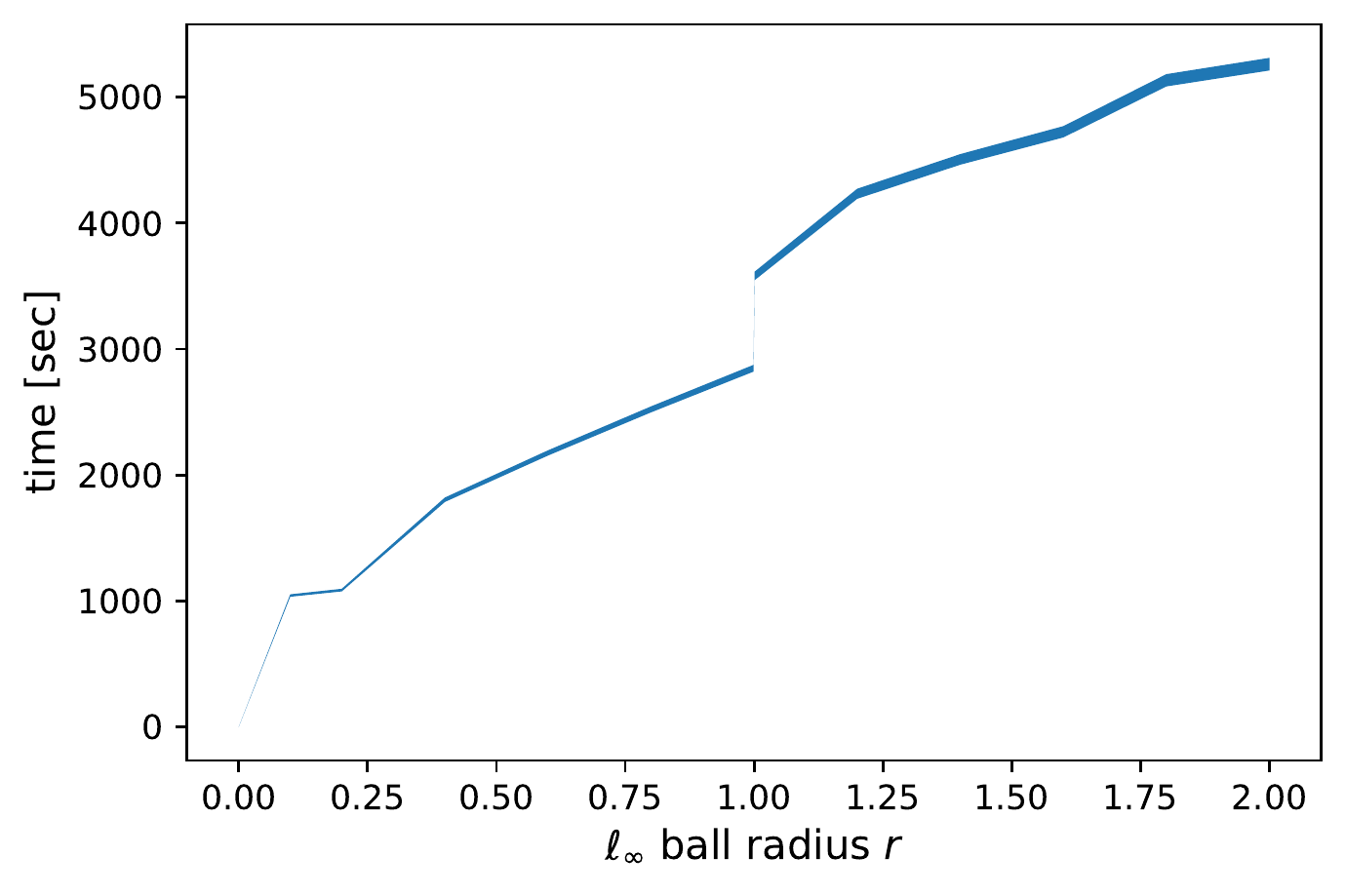}
    \caption{Time to compute maximum deviation for logistic regression models (left) and Explainable Boosting Machines (right) on the Lending Club dataset as a function of certification set size (radius $r$). The filled-in region shows the min-max variation with model complexity ($\ell_1$ norm for LR, \texttt{max\_bins} for EBM).}
    \label{fig:Lending_time}
\end{figure}

\begin{figure}[ht]
     \centering
     \begin{subfigure}[b]{0.4\textwidth}
         \centering
         \includegraphics[width=\textwidth]{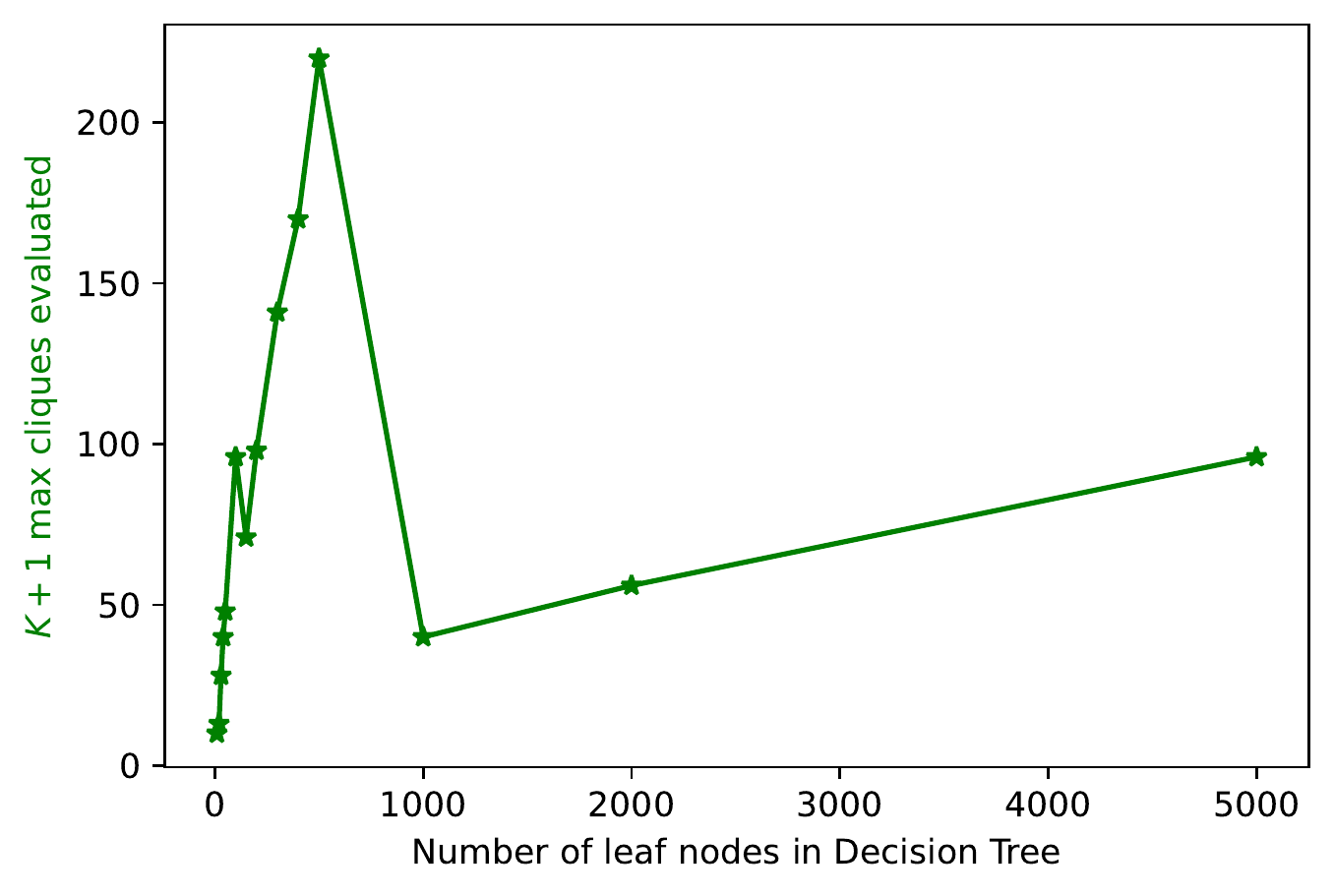}
         \caption{Decision Tree}
         \label{fig:expt:lending:dt:size}
     \end{subfigure}
     \begin{subfigure}[b]{0.45\textwidth}
         \centering
         \includegraphics[width=\textwidth]{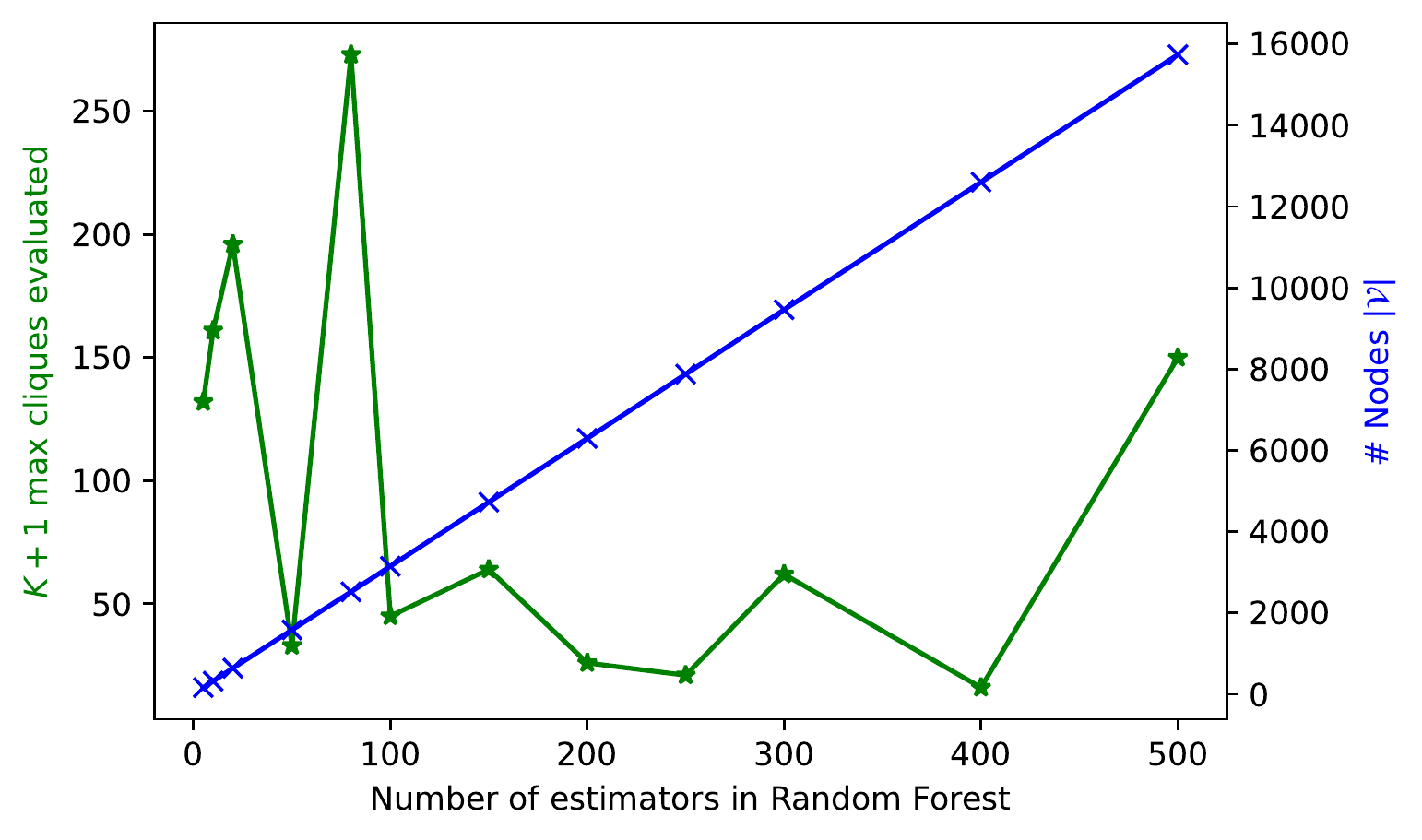}
         \caption{Random Forest}
         \label{fig:expt:lending:ensemble:size}
     \end{subfigure}
        \caption{Effectiveness of pruning by bound for tree-based models on the Lending Club dataset}
        \label{fig:expt:lending:pruning}
\end{figure}

\paragraph{Feature combinations that maximize deviation} Tables~\ref{tab:Lending_LR} and \ref{tab:Lending_GAM} present the feature values that maximize deviation for LR and GAM respectively. For both tables, the minimum log-odds over leaf 5 of the DT reference model is selected (corresponding to Figures~\ref{fig:Lending_LR_leaves_r} and \ref{fig:Lending_EBM_leaves_r}, leaf 5, blue curve) because this choice maximizes the deviation overall for most values of $r$. 

\begin{table}[ht]
    \small
    \centering
    \begin{tabular}{lrrrrrr}
    \toprule
    $r$&dti&annual\_inc&term&purpose&int\_rate&credit\_history\\
    \midrule
    0.0&505&1700& 36 months&credit\_card&7.4&2557\\
    0.1&506&100& 36 months&credit\_card&6.9&2283\\
    0.2&507&100& 36 months&credit\_card&6.4&2009\\
    0.4&884&100& 36 months&debt\_consolidation&10.1&3897\\
    0.6&886&100& 36 months&debt\_consolidation&9.1&3349\\
    0.8&889&100& 36 months&debt\_consolidation&8.2&2801\\
    0.999&891&100& 36 months&debt\_consolidation&7.2&2256\\
    1.001&891&100& 60 months&small\_business&7.2&2251\\
    1.2&893&100& 60 months&small\_business&6.3&1706\\
    1.4&895&100& 60 months&small\_business&5.3&1158\\
    1.6&898&100& 60 months&small\_business&5.3&1095\\
    1.8&900&100& 60 months&small\_business&5.3&1095\\
    2.0&902&100& 60 months&small\_business&5.3&1095\\
    $\infty$&999&100& 60 months&small\_business&5.3&1095\\
    \bottomrule
    \end{tabular}
    \caption{Feature values that minimize log-odds for a logistic regression model ($C = 0.01$) over leaf 5 of the decision tree reference model for the Lending Club dataset. The 6 features that contribute most to the minimum are shown as a function of certification set radius $r$.}
    \label{tab:Lending_LR}
\end{table}

The previous trend toward extreme feature values that minimize log-odds also holds here as $r$ increases. For example, debt-to-income ratios increase to outlier values above $100\%$, annual income drops to the minimum of $\$100$, the term changes to 60 months in Table~\ref{tab:Lending_LR} for $r > 1$, and the purpose changes to small business, the category with the lowest log-odds in Figure~\ref{fig:Lending_GAM_functions}. The decrease in income and increases in debt-to-income ratios are qualitatively in accordance with each other. However, these quantities are related to each other by deterministic formulas (at least in theory) that also involve the interest rate and installment amount. It is not clear whether the values in Tables~\ref{tab:Lending_LR} and \ref{tab:Lending_GAM} violate these relationships. This may be an instance that could benefit from constraints on possible feature combinations, as briefly mentioned in Appendix~\ref{sec:discuss}.

\begin{table}[ht]
    \small
    \centering
    \begin{tabular}{lrrrrrr}
    \toprule
    $r$&term&int\_rate&dti&purpose&annual\_inc&added\_dti\\
    \midrule
    0.0& 60 months&9.9&46.4&debt\_consolidation&35000&25.5\\
    0.1& 60 months&[10.4 11. ]&[28.6 30.9]&debt\_consolidation&[31800 38001]&[10.  11.2]\\
    0.2& 60 months&[12.  13.1]&[27.8 31.8]&debt\_consolidation&[36600 38001]&[10.  11.7]\\
    0.4& 60 months&[13.6 14.3]&[27.8 30.4]&small\_business&[  100 38001]&[12.7 15.3]\\
    0.6& 60 months&[15.6 16.3]&[27.8 36.1]&small\_business&[19801 38001]&[12.7 15. ]\\
    0.8& 60 months&[15.6 16.4]&[27.8 28.4]&small\_business&[14402 38001]&[12.7 15.7]\\
    0.999& 60 months&[18.2 18.4]&[27.8 38.8]&small\_business&[33069 38001]&[12.7 14.5]\\
    1.001& 60 months&[18.2 18.8]&[27.8 35.3]&small\_business&[  935 38001]&[12.7 18.5]\\
    1.2& 60 months&[18.2 20.2]&[27.8 33.3]&small\_business&[ 7602 38001]&[12.7 18.1]\\
    1.4& 60 months&[18.2 21.2]&[27.8 35.6]&small\_business&[  100 38001]&[12.7 20.4]\\
    1.6& 60 months&[18.2 19.2]&[27.8 29.9]&small\_business&[  100 38001]&[12.7 24.5]\\
    1.8& 60 months&[18.2 26.1]&[27.8 28.5]&small\_business&[  100 38001]&[12.7 24.4]\\
    2.0& 60 months&[18.2 27.1]&[27.8 30.7]&small\_business&[  100 38001]&[12.7 26.6]\\
    $\infty$& 60 months&[18.2 31. ]&[ 27.8 999. ]&small\_business&[  100 38001]&[  12.7 3179.3]\\
    \bottomrule
    \end{tabular}
    \caption{Feature values that minimize log-odds for an Explainable Boosting Machine (\texttt{max\_bins} $= 8$) over leaf 5 of the decision tree reference model for the Lending Club dataset. The 6 features that contribute most to the minimum are shown as a function of certification set radius $r$. For $r > 0$, the minimizing values of continuous features form an interval because the corresponding functions $f_j$ are piecewise constant.}
    \label{tab:Lending_GAM}
\end{table}

\end{document}